\theoremstyle{plain}
\newtheorem{theorem}{Theorem}[section]
\newtheorem{proposition}[theorem]{Proposition}
\theoremstyle{definition}
\theoremstyle{remark}
\definecolor{1}{RGB}{255,0,0}
\newtheorem{exg}{\color{1}{\textbf{Example}}}
\title{Equivariant Light Field Convolution and Transformer}
\author{%
  Yinshuang Xu\\
  University of Pennsylvania\\  
  \texttt{xuyin@seas.upenn.edu}\\
  \And
  Jiahui Lei\\
  University of Pennsylvania\\
  \texttt{leijh@cis.upenn.edu}
  \And
  Kostas Daniilidis\\
  University of Pennsylvania\\
  \texttt{kostas@cis.upenn.edu}
}
\begin{document}

\maketitle

\begin{abstract}
3D reconstruction and novel view rendering can greatly benefit from geometric priors when the input views are not sufficient in terms of coverage and inter-view baselines.
 Deep learning of  geometric priors from 2D images often requires each image to be represented in a $2D$ canonical frame and  the prior to be learned in a given or learned $3D$ canonical frame.
In this paper, given only the relative poses of the cameras, we show how to learn priors from multiple views equivariant to coordinate frame transformations  by proposing an $SE(3)$-equivariant convolution and transformer in the space of rays in 3D. . This enables the creation of a light field that remains equivariant to the choice of coordinate frame. The light field defined in our work includes the radiance field and the feature field on the ray space. %
We model the ray space, the domain of the light field, as a homogeneous space of $SE(3)$ and introduce the $SE(3)$-equivariant convolution in ray space. Depending on the output domain of the convolution, we present convolution-based $SE(3)$-equivariant maps from ray space to ray space and to $\mathbb{R}^3$. Our mathematical framework allows us 
to go beyond convolution to $SE(3)$-equivariant attention in the ray space.
We demonstrate how to tailor and adapt the equivariant convolution and transformer in the tasks of equivariant  neural rendering and $3D$ reconstruction from multiple views. We demonstrate $SE(3)$-equivariance by obtaining robust results
in roto-translated datasets without performing transformation augmentation.
\end{abstract}

\section{Introduction}
\label{sec:intro}
\vspace{-8pt}
Recent years have seen significant advances in learning-based techniques \cite{xie2019pix2vox,xie2020pix2vox++,wang2018pixel2mesh,xu2019disn,wang2021ibrnet,yu2021pixelnerf,chen2021mvsnerf, suhail2022generalizable}
harnessing the power of deep learning for 
extraction of geometric priors from multiple images and associated ground-truth shapes. Such approaches extract features from each view and aggregate these features into a geometric prior.
However, these approaches are not $SE(3)$-equivariant to transformations of the frame where the priors and images are defined.
While view pooling or calculating variance \cite{xu2019disn, yang2022fvor, saito2019pifu, yu2021pixelnerf, chen2021mvsnerf} can be used to aggregate features and tackle  equivariance, view pooling discards the rich geometric information contained in a multiple view setup. 

In this paper, we address the problem of learning geometric priors that are $SE(3)$-equivariant with respect to transformations of the reference coordinate frame.
We argue that all information needed for tasks like novel view rendering or 3D reconstruction is contained in the light field~\cite{bergen1991plenoptic,levoy1996light}. 
Our input is a light field, a function defined on oriented rays in 3D whose values can be the radiance or features extracted from pixel values. We will use the term light field, and we will be specific when it is a radiance field or a feature field. Images are discrete samples of this field: the camera position determines which rays are sampled, while the camera orientation leaves the sample of the light field unchanged up to pixel discretization. We model the light field as a field over a homogeneous space of $SE(3)$, the ray space $\mathcal{R}$ parameterized by the Pl{\"u}cker coordinates. We define a convolution in the continuous ray space as an equivariant convolution on a homogeneous space \cite{cohen2019general}. In Sec. \ref{two_conv}, by varying the output domain of the convolution, we introduce equivariant convolutions from the ray space to the ray space and from the ray space to the $3D$ Euclidean space. 
Since our features are not limited to scalar values, we will draw upon the tools of tensor field networks and representation theory, discussed in detail in the Appendix. We study how the group action of $SE(3)$ on $\mathcal{R}$, the stabilizer group for $\mathcal{R}$, and how $SE(3)$ transforms the feature field over $\mathcal{R}$.
We then focus on developing the equivariant convolution in $\mathcal{R}$, providing analytical solutions for the kernels with the derived constraints in convolution from $\mathcal{R}$ to $\mathcal{R}$ and from $\mathcal{R}$ to $\mathbb{R}^3$, respectively. Meanwhile, we make the kernel locally supported without breaking the equivariance.

The constraint of the kernel limits the expressiveness of equivariant convolution when used without a deep structure. In Sec \ref{equi_tr_over_rays}, we introduce an  equivariant transformer in $\mathcal{R}$. The equivariant transformer generates the equivariant key, query, and value by leveraging the kernel derived in the convolution, resulting, thus, in invariant attention weights and, hence, equivariant outputs. We provide a detailed derivation of two cases of cross-attention: the equivariant transformer from $\mathcal{R}$ to $\mathcal{R}$ and the equivariant transformer from $\mathcal{R}$ to $\mathbb{R}^3$. In the first case, the features that generate the key and value are attached to source rays, while the feature generating the query is attached to the target ray. In the second case, the feature generating the query is attached to the target point.

We demonstrate the composition of equivariant convolution and transformer modules in the tasks of
$3D$ reconstruction from multi-views and novel view synthesis given the multi-view features. The inputs consist of finite sampled radiance fields or finite feature fields, while our proposed equivariant convolution and transformer are designed for continuous light fields.
If an object or a scene undergoes a rigid transformation and is resampled by the same multiple cameras, the $SE(3)$ group action is not transitive in the light field sample. This lack of transitivity can significantly impact the computation of equivariant features, mainly because the views are sparse, unlike densely sampled point clouds. Object motion introduces new content, resulting in previously non-existing rays in the light field sampling. Hence, our equivariance is an exact equivariance with respect to the choice of coordinate frame. 
In the 3D reconstruction task, we experimentally show that equivariance is effective for small camera motions or arbitrary object rotations and generally provides more expressive representations. 
In the $3D$ object reconstruction application, we first apply an equivariant convolutional network in ray space to obtain the equivariant features attached to rays. We then apply equivariant convolution and equivariant transformer from $\mathcal{R}$ to $\mathbb{R}^3$ to obtain equivariant features attached to the query point, which are used to calculate the signed distance function (SDF) values and ultimately reconstruct the object. %
In the generalized rendering task, our model queries a target ray and obtains neighboring rays from source views. We then apply an equivariant convolution and transformer over the rays to get the features and colors of the points along the ray (a special light field type, see Sec. \ref{neural_rendering}) and then apply an equivariant transformer over these points to get the density required for volumetric rendering. 

We summarize here our main contributions:

(1)  We model the ray space as a homogeneous space with $SE(3)$ as the acting group, and we propose the $SE(3)$-equivariant generalized convolution as the fundamental operation on a light field whose values may be radiance or features. We derive two $SE(3)$-equivariant convolutions, both taking input ray features and producing output ray features and point features, respectively.

(2) To enhance the feature expressiveness, we extend the equivariant convolution to an equivariant transformer in $\mathcal{R}$, in particular, a transformer from $\mathcal{R}$ to $\mathcal{R}$ and a transformer from $\mathcal{R}$ to $\mathbb{R}^3$.

(3) We adapt and compose the equivariant convolution and transformer module for $3D$ reconstruction from multiple views and generalized rendering from multi-view features. The experiments demonstrate the equivariance of our models.

\vspace{-6pt}
\section{Related Work}
\vspace{-4pt}
\paragraph{Equivariant Networks}
Group equivariant networks \cite{cohen2016group, worrall2017harmonic,weiler2019general,thomas2018tensor,weiler20183d,chen2021equivariant, deng2021vector,cohen2018spherical,esteves2018learning,esteves2020spin,esteves2019equivariant} provide deep learning pipelines that are 
equivariant by design with respect to group transformations of the input. While inputs like point clouds, 2D and 3D images, and spherical images have been studied extensively, our work is the first, as far as we know, to study equivariant convolution and cross-attention on light fields.
 The convolutional structure on homogeneous spaces or groups is sufficient and necessary for equivariance with respect to compact group  actions as proved in \cite{cohen2019general, aronsson2022homogeneous,kondor2018generalization}. 
 Recently, \citet{cesa2021program,xu2022unified} provided a uniform way to design the steerable kernel in an equivariant convolutional neural network on a homogeneous space using Fourier analysis of the stabilizer group and the acting group, respectively, while \citet{finzi2021practical} proposed a numerical algorithm to compute a kernel by solving the linear  equivariant map constraint. 
For arbitrary Lie groups, \citet{finzi2020generalizing,macdonald2022enabling,bekkers2019b} designed the uniform group convolutional neural network. The fact that any $O(n)$ equivariant function can be expressed in terms of a collection of scalars is shown in \cite{villar2021scalars}. For general manifolds, \citet{cohen2019gauge, weiler2021coordinate} derived the general steerable kernel from a differential geometry perspective, where the group convolution on homogeneous space is a special case. The equivalent derivation for the light field is in the Appendix. %
Recently, equivariant transformers  drew increasing attention, in particular for 3D point cloud analysis and reconstruction \cite{fuchs2020se,satorras2021n,chatzipantazis2022se,brandstetter2021geometric}.  A general equivariant self-attention mechanism for arbitrary groups was proposed in \cite{romero2020group,romero2020attentive}, while  an equivariant transformer model for Lie groups was introduced in \citet{hutchinson2021lietransformer}. We are the first to propose an equivariant  attention model in the space of rays in 3D.
\vspace{-8pt}
\paragraph{Light Field and Neural Rendering from Multiple Views}
The plenoptic function introduced in perception \cite{bergen1991plenoptic} and later in graphics \cite{levoy1996light} brought a new light into the scene representation problem and was directly applicable to the rendering problem. %
 Recently, learning-based light field reconstruction \cite{mildenhall2019local, kalantari2016learning, bemana2020x,wu2021revisiting, srinivasan2017learning, attal2022learning} became increasingly popular for novel view synthesis, while \cite{sitzmann2021light, suhail2022light,suhail2022generalizable} proposed non-equivariant networks in the ray space. Due to the smaller dimension of the ray space, the networks in the ray space are more efficient compared to neural radiance fields \cite{mildenhall2021nerf}, which leverages volumetric rendering. Several studies  \cite{yu2021pixelnerf,wang2021ibrnet,suhail2022generalizable, sitzmann2021light, chen2021mvsnerf,long2022sparseneus,chen2023explicit,huang2023local,varma2022attention}  concentrate on generalizable rendering. These works are similar to ours in that they obtain the $3D$ prior from the $2D$ images,  but they are not equivariant since they explicitly use the coordinates of the points or the rays in the network.
\vspace{-8pt}
\paragraph{Reconstruction from Multiple Views}
Dense reconstruction from multiple views is a well-established 
field of computer vision with advanced results even before the introduction of deep learning \cite{furukawa2015multi}. Such approaches cannot take advantage of shape priors and need a lot of views to provide a dense reconstruction. 
Deep learning enabled semantic reconstruction, i.e., the reconstruction from single or multiple views by providing the ground-truth 3D shape during training
 \cite{choy20163d, xie2019pix2vox,xie2020pix2vox++,mescheder2019occupancy}. These approaches decode the object from a global code without using absolute or relative camera poses. Regression of absolute or relative poses applied in 
  \cite{kar2017learning, yang2021deep,tulsiani2018multi,xu2019disn, yang2022fvor,tyszkiewicz2022raytran,bautista2021generalization,saito2019pifu,jiang2022few,du2023learning} is non-equivariant.

\vspace{-8pt}
\section{Method}
\vspace{-4pt}
\subsection{Equivariant Convolution in Ray Space}
\label{two_conv}
The ray space is the space of oriented light rays. As introduced in App. Ex.~\ref{SE3_ex_homo_ray_space}, we use Pl{\"u}cker coordinates to parameterize the ray space $\mathcal{R}$: for any ray $x \in \mathcal{R}$, $x$ can be denoted as $(\bm{d},\bm{m})$,  where $\bm{d} \in \mathbb{S}^2$ is the direction of the ray, and $\bm{m} = \bm{x} \times \bm{d}$ is the moment of the ray with $\bm{x}$ being a point on the ray. Then  any $g=(R,\bm{t}) \in SE(3)$  acts on the the ray space as:
\begin{align}
     \label{group action ray space}
    gx=g(\bm{d},\bm{m})=(R\bm{d},R\bm{m}+\bm{t}\times (R\bm{d})).
\end{align}
The ray space $\mathcal{R}$ is a homogeneous space with a transitive group action by $SE(3)$.
Given the origin in the homogeneous space  as $\eta =  ([0,0,1]^T, [0,0,0]^T)$ (the line representing $z$-axis), the stabilizer group $H$ that leaves $\eta$ unchanged is $SO(2)\times \mathbb{R}$ (the rotation around and translation along the ray). The ray space is, thus, isomorphic to the quotient space $\mathcal{R} \cong SE(3)/(SO(2)\times \mathbb{R})$.
We parameterize the stabilizer group $H$ as $H=\left\{(\gamma,t) |\gamma \in[0, 2\pi), t \in \mathbb{R} \right\}$.

We follow the generalized convolution derivation for other homogeneous spaces in \cite{cohen2019general}, which requires the use of principal bundles, section maps, and twists \cite{gallier2020differential} explained in the appendix section \ref{principal_bundle} and onwards. 
$SE(3)$ can be viewed as the principal $SO(2)\times \mathbb{R}$-bundle, where we have the projection $p: SE(3) \rightarrow \mathcal{R}$, for any $g \in SE(3)$, $p(g)= g\eta$;  a section map $s: \mathcal{R} \rightarrow SE(3)$ can be defined such that $p\circ s = id_{\mathcal{R}}$.  In App.~\ref{SE3_ex_sec_ray_space}, we further elaborate on how we define the section map from the ray space to $SE(3)$ in our model. Generally, the action of $SE(3)$ induces a twist as $gs(x) \neq s(gx)$. The twist can be characterized by the twist function $\text{h}: SE(3) \times \mathcal{R} \rightarrow  SO(2)\times \mathbb{R}$, $gs(x) = s(gx)\text{h}(g,x)$,  we provide the twist function in our model and its visualization in App.~\ref{SE3_ex_sec_ray_space}. 

\begin{figure}[t]
\centering
\begin{minipage}[t]{0.45\textwidth}
\centering
\includegraphics[width=7cm]{./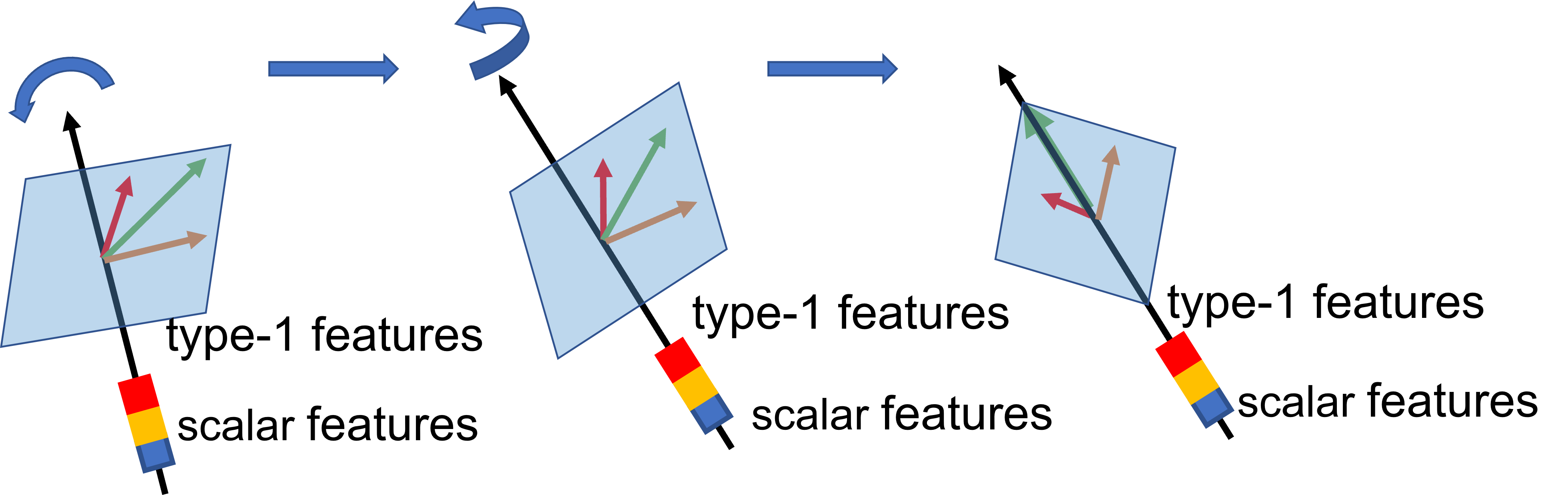}
\caption{Feature attached to rays: we show the scalar feature and type-1  feature. When $\rho_2$ is the trivial representation, tensor features can be viewed in the plane orthogonal to the ray (the blue plane). When rotations act on the feature field, the scalar feature only changes position as attached to the rays: $(\mathcal{L}_gf)(x) = f(g^{-1}x)$; while the type-1 feature changes position and itself is rotated: $(\mathcal{L}_gf)(x) = \rho(\text{h}(g^{-1},x)^{-1})f(g^{-1}x)$, where $\rho (\gamma,t) =e^{i\gamma}$.}
\label{fig:ray_features_paper}
\end{minipage}
\hspace{0.3cm}
\begin{minipage}[t]{0.42\textwidth}
\centering
\includegraphics[width=5cm]{./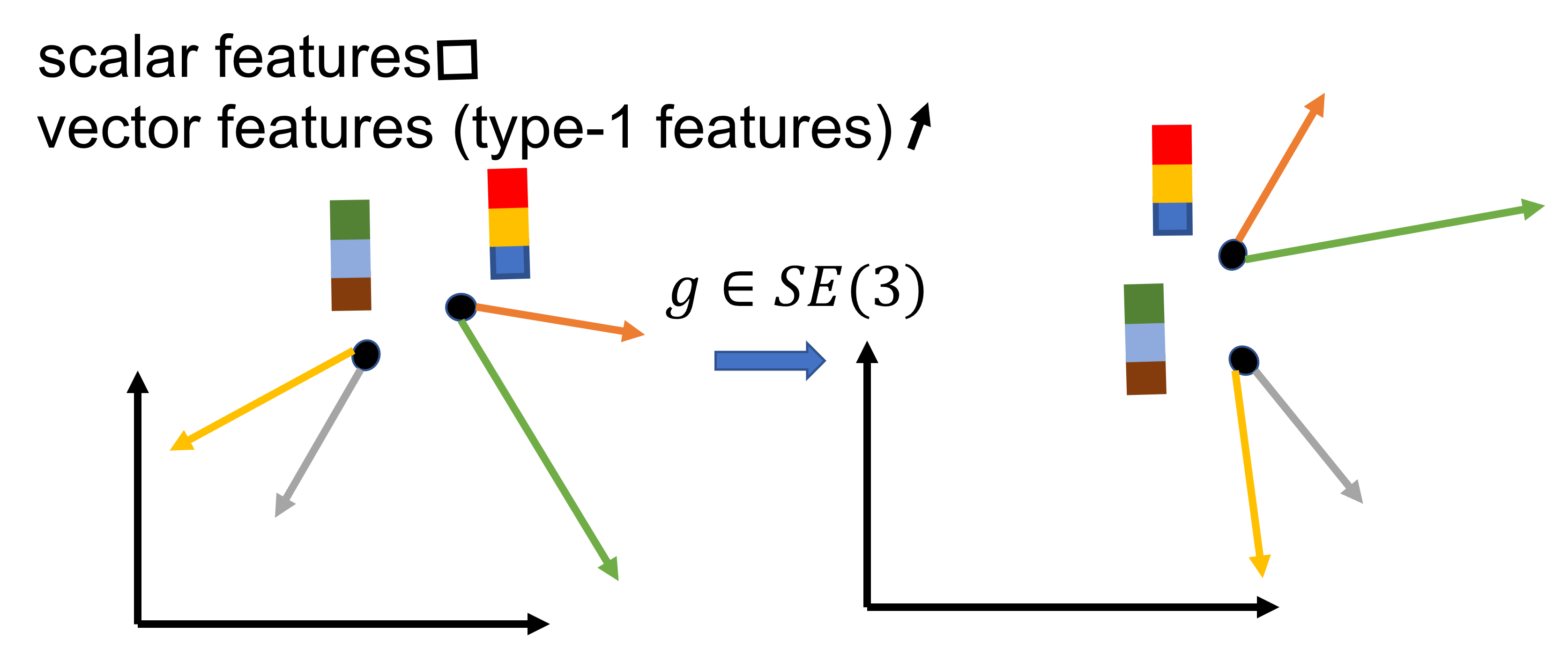}
\caption{Features attached to points: we show scalars and vectors (type-1 features). The black dot in the figure is the point, and the square and the vectors are the scalar features and type-1  features attached to the point. When $g \in SE(3)$ acts on the feature field, we will see that the scalars are kept the same while the attached position is rotated, and the vector features change their position and alter their direction.}
\label{figure:point_features}
\end{minipage}
\end{figure}
\vspace{-4pt}
\subsubsection{Convolution from Rays to Rays}
\label{equiconv}
\vspace{-2pt}
To define convolution on a light field $f: \mathcal{R} \rightarrow V$, we first need to define the $SE(3)$ group action on the values of that field. Since the group action will be on a vector space $V$, we will use the corresponding group representation of the stabilizer group $\rho: SO(2) \times \mathbb{R} \rightarrow GL(V)$, see details in App. Sec. \ref{associated vector}.
For example, a light field can be a radiance field $f$ that maps the ray space of oriented rays to their observed radiance (RGB) $f: \mathcal{R} \rightarrow \mathbb{R}^3$ which is a concatenation of three scalar fields over $\mathcal{R}$.  The group representation $\rho$ in this case is the identity 
and $g \in SE(3)$ acts on the radiance field $f$ as $(\mathcal{L}_gf)(x) = f(g^{-1}x)$, shown as the scalar features in Fig. \ref{fig:ray_features_paper}.
Given that the stabilizer $H = SO(2) \times \mathbb{R}$ is a product group, 
the stabilizer representation can be written as the product $\rho(\gamma, t)= \rho_1(\gamma)\otimes\rho_2(t)$, where $\rho_1$ is the group representation of $SO(2)$ and $\rho_2$ is the group representation of $\mathbb{R}$. 
If the light field is a feature field (Fig. \ref{fig:ray_features_paper}) with 
 $\rho_2$ being the identity representation and $\rho_1$ corresponding to a type-1 field, $\rho_1(\gamma) = e^{i\gamma}$, then type-1 features change position and orientation when $g \in SE(3)$ acts on it.
Having explained the examples of scalar (type-0) and type-1 fields, we introduce the action on any feature field $f$ as 
\cite{cohen2019general}:
  \begin{align}
  (\mathcal{L}_gf)(x) = \rho(\text{h}(g^{-1},x)^{-1})f(g^{-1}x),\label{action on filed over ray}
  \end{align}
where $\rho$ is the group representation of $SO(2) \times \mathbb{R}$ corresponding to the space $V$, determined by the field type of $f$, and $\text{h}$ is the twist function introduced by $SE(3)$ as shown in App. Ex. \ref{SE3_ex_sec_ray_space}. 
The convolution as stated in App. Sec. \ref{convolution sec} and \cite{cohen2019general} is then defined as 
\begin{align}
f^{l_{out}}(x)=\int_{\mathcal{R}}\kappa(s(x)^{-1}y)\rho_{in}(\text{h}(s(x)^{-1}s(y)))f^{l_{in}}(y)dy,
    \label{paper:convolution}
\end{align}
\begin{wrapfigure}{l}{0.4\linewidth}
        \includegraphics[width=0.25\textheight]{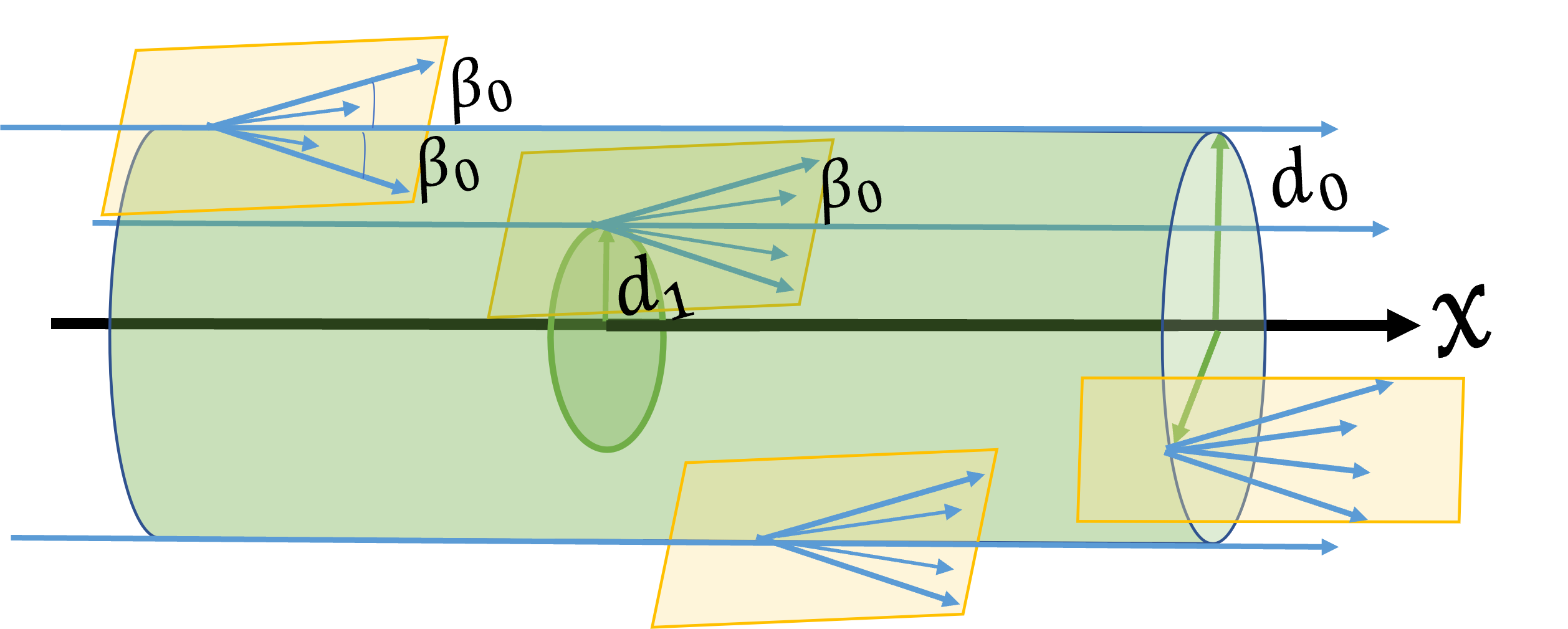}
         \caption{Neighborhood of a ray $x$ in the convolution. }
   \label{fig:lightconv}
\end{wrapfigure}
where $\text{h}(g)$ is the simplified form of the twist $\text{h}(g,\eta)$. Eq.\ref{paper:convolution} is equivariant to $SE(3)$ if and only if the convolution kernel $\kappa$ satisfies that $\kappa(hx)=\rho_{out}(h)\kappa(x)\rho_{in}(\text{h}^{-1}(h,x))$, where $\rho_{in}$ and $\rho_{out}$ are the group representations of $SO(2) \times \mathbb{R}$ corresponding to the input feature type $l_{in}$ and output feature type $l_{out}$, respectively. We derive the solutions of the kernel in the App. Ex. \ref{filter_solution}.
\vspace{-8pt}
\paragraph{Local kernel support} 
The equivariance stands even if we constrain the kernel to be local.
When $x=(\bm{d}_x,\bm{m}_x)$ meets the condition that $\angle(\bm{d}_x,[0,0,1]^T) \leq \beta_0$ and $d(x,\eta) \leq d_0$, $\kappa(x) \neq 0$, this local support  will not violate the constraint that $\kappa(hx)=\rho_{out}(h)\kappa(x)\rho_{in}(\text{h}^{-1}(h,x))$.
Then, convolution in Eq. \ref{paper:convolution} is accomplished over the neighbors only as visualized in Fig.~\ref{fig:lightconv}. In Fig.~\ref{fig:lightconv}, any ray $y=(\bm{d}_y, \bm{m}_y)$ (denoted in blue) in the neighborhood  of a ray $x=(\bm{d}_x,\bm{m}_x)$ will go through the cylinder with $x$ as the axis and $d_0$ as the radius since $d(x,y)\leq d_0$. Moreover, for any $y$, $\angle(\bm{d}_y,\bm{d}_x) \leq \beta_0$. Any ray $y \in \mathcal{N}(x)$ is on one tangent plane of a cylinder with $x$ as the axis and $d(x,y)$ as the radius when $d(x,y)>0$.

\vspace{-6pt}
\subsubsection{Convolution from Rays to Points}
\label{equifuse}
\vspace{-2pt}
In applications such as $3D$ reconstruction, key point detection, and $3D$ segmentation, we expect the output to be the field over $\mathbb{R}^3$. $\mathbb{R}^3$ is also a homogeneous space of $SE(3)$ like the ray space $\mathcal{R}$, with the stabilizer group as $SO(3)$, as stated in App. Ex. \ref{SE3_ex_homo_R3}. Using a convolution, we will define an equivariant map from light fields (fields on $\mathcal{R}$) to fields on $\mathbb{R}^3$.
We denote with $H_1$ and $H_2$ 
 the stabilizer groups for the input and output homogeneous spaces, respectively, i.e., $SO(2)\times \mathbb{R}$ and $SO(3)$ in this case. 
 As shown in the App. Ex. \ref{SE3_ex_sec_R3}, we can choose the section map $s_2: \mathbb{R}^3 \rightarrow SE(3)$: $s_2(\bm{x})=(I,\bm{x})$ for any $\bm{x} \in \mathbb{R}^3$ and I is the identity matrix.  Following \cite{cohen2019general}, the convolution from rays to points becomes: 
\begin{align*}
f_2^{l_{out}}(x)=\int_{\mathcal{R}}\kappa(s_2(x)^{-1}y)\rho_{in}(\text{h}_1(s_2(x)^{-1}s_1(y)))f^{l_{in}}_1(y)dy,
\end{align*}
where 
$\text{h}_1$ is the twist function corresponding to section $s_1:\mathcal{R} \rightarrow SE(3)$ defined aforementioned, $\rho_{in}$ is the group representation of $H_1$ ($SO(2) \times \mathbb{R}$)  corresponding to the feature type $l_{in}$. The subscripts of the input and output feature denote the homogeneous spaces they are defined on. The convolution is equivariant if and only if the kernel $\kappa$ satisfies that $\kappa(h_2x)=\rho_{out}(h_2)\kappa(x)\rho_{in}(\text{h}_1^{-1}(h_2,x))$ for any $h_2 \in H_2$, where $\rho_{out}$ is the group representation of $H_2$ ($SO(3)$) corresponding to the feature type $l_{out}$. Fig. \ref{figure:point_features} visualizes the scalar feature ($l_{out}=0$) and vector feature ($l_{out}=1$) attached to one point, 
offering an intuitive understanding of the feature field over $\mathbb{R}^3$.

In 3D reconstruction,
 $f^{l_{in}}$ is the scalar field over $\mathcal{R}$, i.e., $\rho_{in}=1$. 
 The convolution is simplified to  $f_2^{l_{out}}(x)=\int_{G/H_1}\kappa(s_2(x)^{-1}y)f^{l_{in}}_1(y)dy$ and the corresponding constraint becomes $\kappa(h_2x)=\rho_{out}(h_2)\kappa(x)$.  App. Ex. \ref{filter_solution2point} provides analytical kernel solutions.

\vspace{-8pt}
\subsection{Equivariant Transformer over Rays}
\label{equi_tr_over_rays}
\vspace{-4pt}
We can extend the equivariant convolution to the equivariant transformer model. In general, the equivariant transformer can be formulated as:
\begin{align}
f^{out}_2(x) = &\sum_{y \in \mathcal{N}(x)} \frac{exp(\langle f_q(x,f^{in}_2), f_k(x,y,f^{in}_1)\rangle)}{\sum_{y \in \mathcal{N}(x)}exp(\langle f_q(x,f^{in}_2) f_k(x,y,f^{in}_1)\rangle)}f_v(x,y,f^{in}_1), 
\label{paper:transformer}
\end{align}

where the subscript $1$ denotes the homogeneous space $M_1\cong G/H_1$ of the feature field $f^{in}_1$ that generates the key and value in the transformer; the subscript $2$ denotes the homogeneous space $M_2 \cong G/H_2$ of the  feature field $f^{in}_2$ that generates query in the transformer, which is also the homogeneous space of the output feature $f^{out}_2$; %
$x$ and $y$ represent elements in the homogeneous spaces $M_2$ and $M_1$, respectively, where $y \in \mathcal{N}(x)$ indicates that the attention model is applied over $y$, the neighbor of $x$ based on a defined metric. $f_k$, $f_q$, and $f_v$ are constructed equivariant keys, queries, and values in the transformer. $f_k$ and $f_v$ are constructed by equivariant kernel $\kappa_k$ and $\kappa_v$ while $f_q$ is constructed through an equivariant linear map, see App. Sec.\ref{construction of features in transformer} for detailed construction.

\begin{figure}[htbp]
\centering
\begin{minipage}[t]{0.55\textwidth}
\centering
\includegraphics[width=6cm]{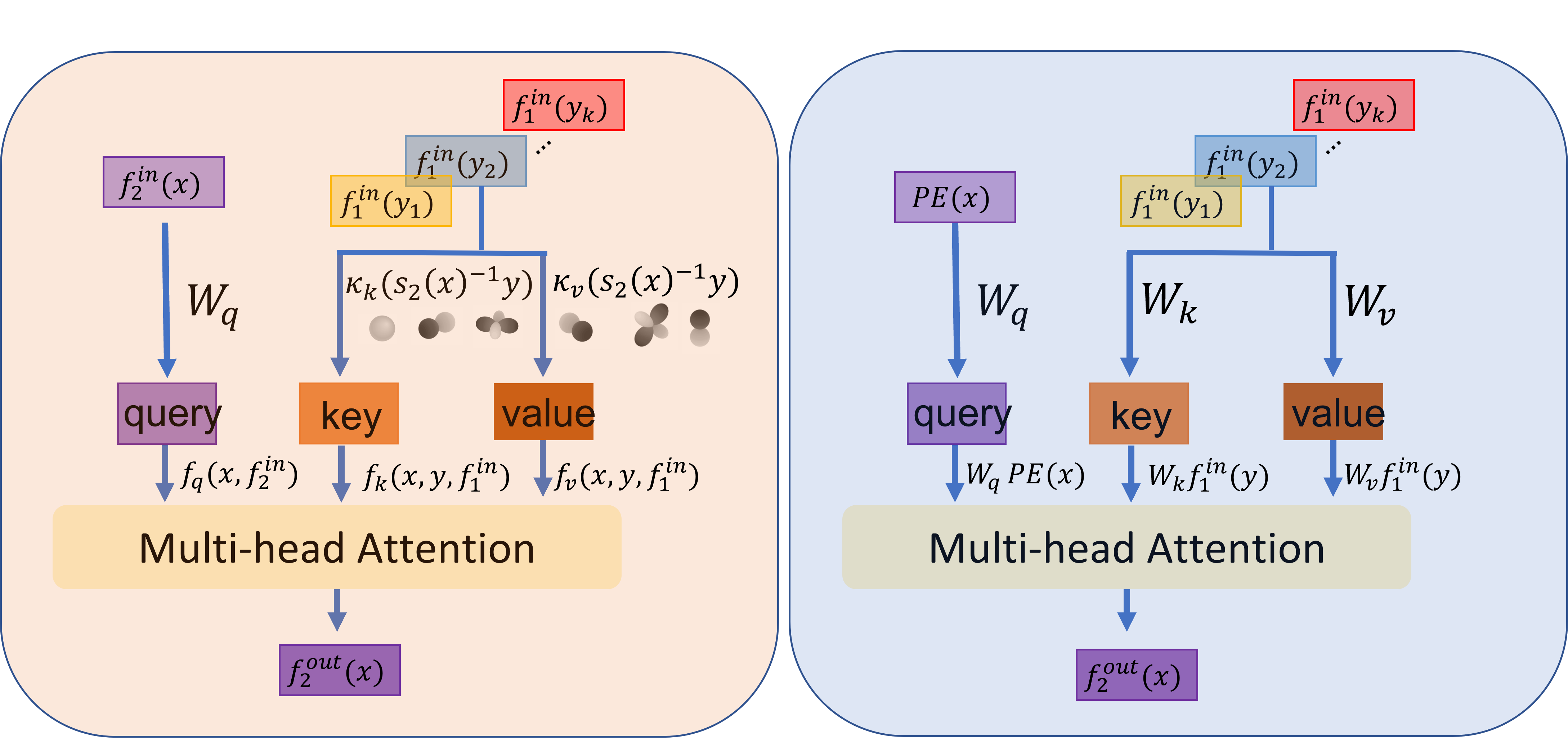}
\caption{ In the equivariant transformer (L), positional encoding is not directly used due to its lack of equivariance. Instead, the relative position within the kernel is utilized. %
To generate the query $f_q$, we multiply the feature $f^{in}_2(x)$ (pre-existing or yielded by convolution) attached to $x$ (in $\mathcal{R}$ or $\mathbb{R}^3$, depending on the task) by the designed equivariant linear matrix $W_q$ (see App. \ref{construction of features in transformer}). %
The key $f_k$ and value $f_v$ are constructed using designed equivariant kernels $\kappa_k$ and $\kappa_v$. 
The transformer is equivariant due to equivariant $f_k$, $f_q$, and $f_v$.%
The conventional transformer (R) uses point position encoding for the query feature and obtains the query, key, and value through nonequi-conventional linear mappings. %
}
\label{figure:comparison}
\end{minipage}
\hspace{0.3cm}
\begin{minipage}[t]{0.35\textwidth}
\centering
\includegraphics[width=4cm]{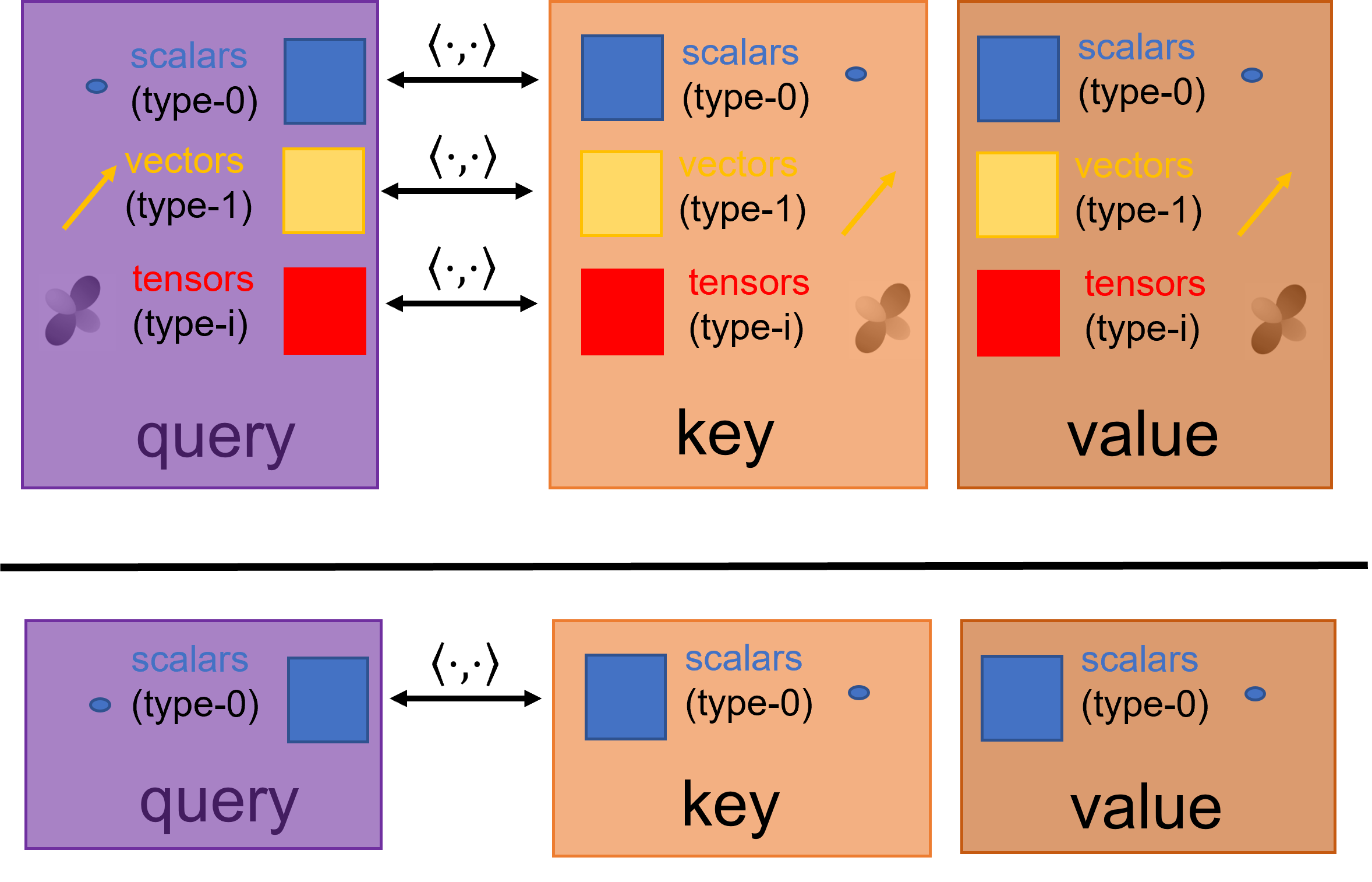}
\caption{%
In the equivariant transformer (U), the query, key, and value are equivariant and can be composed of different types of features; they can be scalars, vectors, or higher-order tensors. The inner product, determined by the feature type, should apply to the same type of features. In contrast, the feature in a conventional transformer (D) is not equivariant, it does not contain vectors and tensors, and the inner product is conventional.}
\label{figure:comparison_multihead}
\end{minipage}
\end{figure}

When the transformer is a self-attention model, homogeneous space $M_1$ and $M_2$ are the same since $f^{in}_2=f^{in}_1$. The above equivariant transformer could be applied to the other homogeneous space other than $\mathcal{R}$, $\mathbb{R}^3$, and acting group other than $SE(3)$. This paper presents the equivariant cross-attention model over rays, i.e., $M_1$ is $\mathcal{R}$. When the transformer is the cross-attention from rays to rays, $M_2$ is also $\mathcal{R}$, the equivariant kernel $\kappa_k$ and $\kappa_v$ is the convolution kernel we derived in convolution from rays to rays in Sec. \ref{equiconv}. %
When the transformer is the cross-attention from rays to points, $M_2$ is $\mathbb{R}^3$, the equivariant kernel $\kappa_k$ and $\kappa_v$ is the convolution kernel we derived in convolution from rays to points in Sec. \ref{equifuse}. %
With the construction in App. Sec. \ref{construction of features in transformer}, we claim that the transformer from rays to rays or from rays to points, as shown in the equation \ref{paper:transformer},  is equivariant. The proof is provided in App. Sec. \ref{proof_trans}.

To better understand the equivariant transformer in this paper, we visualize the comparison of our equivariant cross-attention transformer and conventional transformer shown in Fig. \ref{figure:comparison}. Meanwhile, as stated in App. Sec.\ref{construction of features in transformer}, key, query, and value are generally composed of different types of features and are multi-channel, allowing for the multi-head attention mechanism. In Fig. \ref{figure:comparison_multihead}, we visualize the comparison of the equivariant multi-head attention module from rays to points with the conventional multi-head attention module. The attention module from rays to rays follows a similar concept but with variations in the feature types due to the differing group representations of $SO(2) \times \mathbb{R}$ and $SO(3)$.

We will show two 3D multi-view applications of the proposed convolution and transformer: $3D$ reconstruction and generalized neural rendering.  For each application, we define the specific equivariance and present the corresponding pipeline. %

\vspace{-12pt}
\subsection{Equivariant 3D Reconstruction \label{def_equi}}
\begin{figure*}[t]
  \centering
   \includegraphics[width=0.9\linewidth]{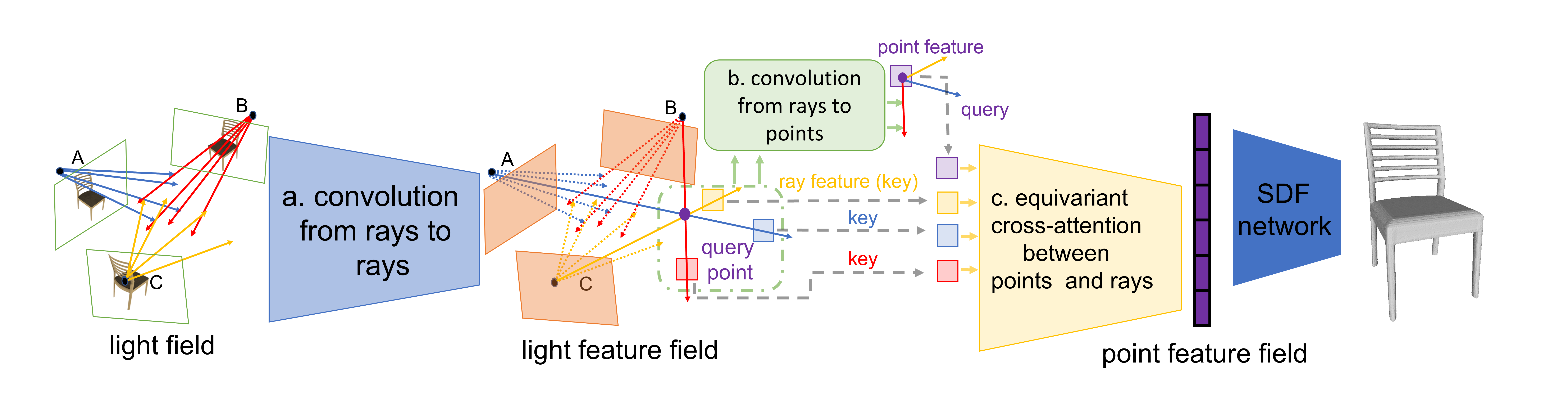}%
   \caption{The pipeline of equivariant $3D$ reconstruction: Firstly, we obtain the feature field over the ray space. Secondly, we perform an equivariant convolution from ray space to point space. Thirdly, we apply a $SE(3)$ equivariant cross-attention module to obtain a equivariant feature for a query.}
   \label{fig:pipeline}
\end{figure*}
\vspace{-4pt}

\vspace{-4pt}
The radiance field serves as the input for the $3D$ reconstruction, which ultimately generates a signed distance field (SDF) denoted by the function $e:\mathbb{R}^3 \rightarrow \mathbb{R}$. As aforementioned, the radiance field is the multi-channel scalar field over $\mathcal{R}$, while SDF is the scalar field over $\mathbb{R}^3$. A $3D$ reconstruction $\Phi: \mathcal{F} \rightarrow \mathcal{E}$, where $\mathcal{F}$ denotes the space of radiance fields and 
$\mathcal{E}$ denotes the space of signed distance fields, is equivariant when for any $g \in SE(3)$, any $x \in \mathbb{R}^3$, and any $f \in \mathcal{F}$, $\bm{\Phi( \mathcal{L}_gf)(x)= \mathcal{L}'_g(\Phi(f))(x)},$ where $\mathcal{L}_g$ and $\mathcal{L}'_g$ are group actions on the light field and the SDF, respectively. Specifically, as $f$ and $e$ are scalar fields, $(\mathcal{L}_gf)(x)=f(g^{-1}x)$ for any $x \in \mathcal{R}$, and $(\mathcal{L}'_ge)(x)=e(g^{-1}x)$ for any $x\in \mathbb{R}^3$.

 In practice, we have a finite sampling of the radiance field corresponding to the pixels of multiple views 
 $V=\left\{f(x)|x \in L_V\right\}$, where $L_V$ denotes the ray set of multi-views and $f \in \mathcal{F}$ is the radiance field induced by multi views sample from. The $3D$ reconstruction $\Phi$ is equivariant when for any $g \in SE(3)$ and any $x \in \mathbb{R}^3$: $\bm{\Phi(g \cdot V)(x)= \Phi(V)(g^{-1}x)}.$ If we denote $V$ as $(L_V,f)$, %
$g \cdot V=(g \cdot L_V, \mathcal{L}_gf)$, where $g \cdot L_V$ is  $g$ acting on the rays defined Eq. \ref{group action ray space}. 

We achieve equivariance using three steps as illustrated in Fig.~\ref{fig:pipeline}: (1) the transition from pixel colors to a feature-valued light field (equi-CNN over rays), (2) the computation of features in $\mathbb{R}^3$ from features on the ray space by equivariant convolution from ${\cal R}$ to $\mathbb{R}^3$, and (3) the equivariant transformer with the query generated by the feature on the point we want to compute SDF and key/value generated by features on rays. Note that we need (3) following (2) 
because the output feature of a single convolution layer is not expressive enough due to the constrained kernel.%
For the detailed practical adaption of the convolution and transformer in $3D$ reconstruction, please see the App. Sec. \ref{equivariant_reconstruction}, where we approximate the intra-view with $SE(2)$ equivariant convolution.

\vspace{-10pt}
\begin{figure}[htb]
  \centering
   \includegraphics[width=0.85\linewidth]{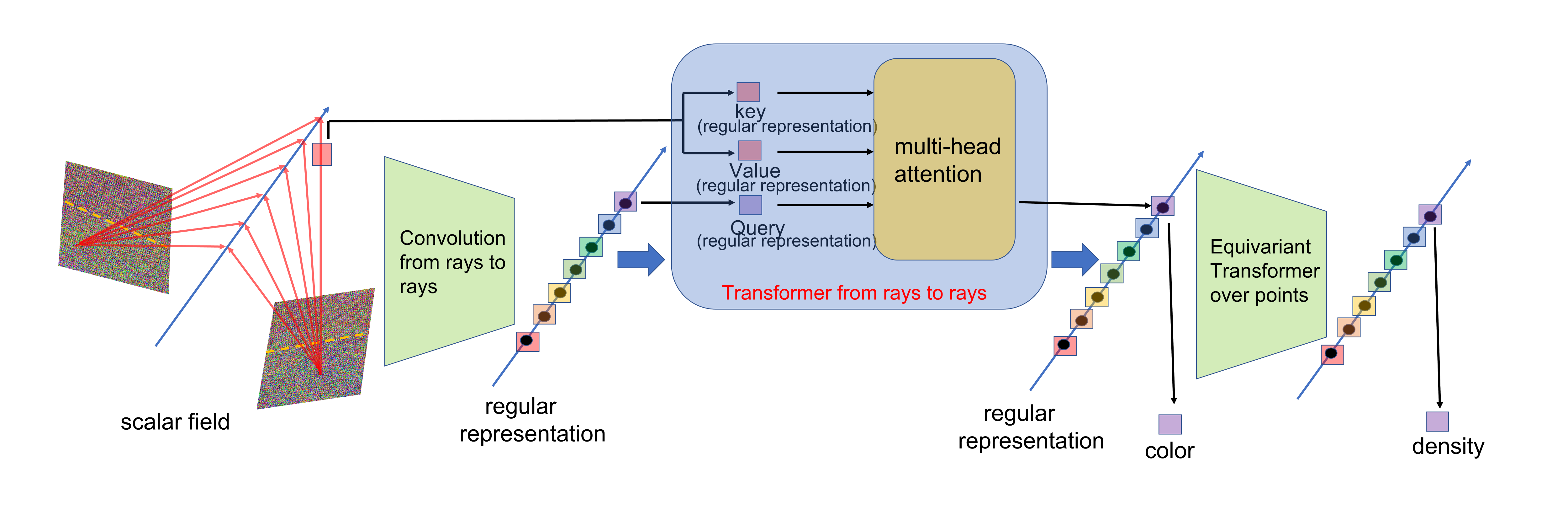}
   \caption{The pipeline of equivariant neural rendering. Firstly, we obtain the features of the points along the target ray through convolution over rays. Secondly, we apply the equivariant cross-attention module to obtain features for generating the color of the points. Finally, we use equivariant self-attention over the points along the ray to obtain features for generating the density of points.}
   \label{fig:renderpipeline}
\end{figure}
\vspace{-8pt}
\subsection{Generalized Neural Rendering}
\label{neural_rendering}
\vspace{-4pt}
The light feature field $f_{in}: \mathcal{R} \rightarrow V$ serves as the input for neural rendering, which ultimately generates the light field $f: \mathcal{R} \rightarrow \mathbb{R}^3$, a multi-channel scalar field over $\mathcal{R}$. A neural rendering $\Psi:\mathcal{I} \rightarrow \mathcal{F}$, where $\mathcal{I}$ denotes the space of the light feature fields and $\mathcal{E}$ denotes the space of the light field, is equivariant when for any $g \in SE(3)$, any $x \in \mathcal{R}$, and any $f_{in} \in \mathcal{I}$, $\bm{\Psi( \mathcal{L}_gf_{in})(x)= \Psi(f_{in})(g^{-1}x)},$%
where $\mathcal{L}_g$ is the group operator on the light feature field $f_{in}$, as shown in Eq. \ref{action on filed over ray} depending on the feature type. In the experiment of this paper, the input light feature field is scalar, i.e., $\mathcal{L}_gf_{in}(x)=f_{in}(g^{-1}x)$. Similar to reconstruction, in practice, the neural rendering $\Psi$ is equivariant when for any $g \in SE(3)$ and any $x \in \mathcal{R}$: $\bm{\Psi(g \cdot V)(x)= \Psi(V)(g^{-1}x)},$
where $V=\left\{f_{in}(x)|x \in L_V\right\}$, and if we denote $V$ as $(L_V,f_{in})$, 
then $g \cdot V=(g \cdot L_V, \mathcal{L}_gf_{in})$,

By restricting the field type of the output field over rays to have a group representation of $SO(2) \times \mathbb{R}$ as $\rho(\gamma,t) = \rho_1(\gamma)\otimes\rho_2(t)$, where $\rho_2$ is the regular representation, we can obtain the feature of points along the ray by convolution or transformer from $\mathcal{R}$ to $\mathcal{R}$. 
See App. Ex. \ref{regular_representation} for more explanation of the regular representation. Alternatively, we can obtain the desired feature by applying convolution or transformer from $\mathcal{R}$ to $\mathcal{R}$, with output features attached to the target ray corresponding to different irreducible representations of the stabilizer group. These features can be interpreted as Fourier coefficients of the function of the points along the ray. The Inverse Fourier Transform yields feature for the points along the ray. More details are in the App. Sec. \ref{rendering_ray2ray}.

The feature of the points along the ray can be used to generate density and color for volumetric rendering \cite{wang2021ibrnet, yu2021pixelnerf}, or fed into attention and pooling for the final ray feature  \cite{varma2022attention}. In this paper, we opt to generate the density and color and utilize volumetric rendering, which can be viewed as a specialized equivariant convolution from $\mathbb{R}$ to $\mathcal{R}$. Method details are available in App. Sec. \ref{equi_render}.

We achieve the equivariant rendering through three steps as shown in Fig. \ref{fig:renderpipeline}: (1) we apply equivariant convolution from rays to rays to get the equivariant feature for points along the rays, which is a specific field type over $\mathcal{R}$; %
2) to enhance the feature expressivity, we apply equivariant transformer from rays to rays to get  the color for each point; 
(3) we apply the equivariant self-attention over the points along the ray to reason over the points on the same ray, the output feature of the points will be fed to multiple perceptron layers to get the density of the points.%

\vspace{-10pt}
\section{Experiment}
\vspace{-4pt}
\subsection{3D Object Reconstruction from Multiple Views}
\vspace{-2pt}
\paragraph{Datasets and Implementation} We use the same train/val/test split of the Shapenet 
 Dataset \cite{chang2015shapenet}  and render ourselves for the equivariance test. In order to render the views for each camera, we fix eight cameras to one cube's eight corners. The cameras all point in the same  direction toward the object's center. We use the following notation to denote the variety of transformations in training and testing:  $I$ (no transformation), $Z$ (optical axis rotation), $R$ (bounded 3-dof camera rotation), $Y$ (vertical axis object rotation), $SO(3)$ (full object rotation). The details to generate the five settings are provided in App. Sec. \ref{data_generation}.  As described in App. Sec. \ref{equivariant_reconstruction}, we use $SE(2)$ equivariant CNNs to approximate the equivariant convolution over the rays. For the fusion from the ray space to the point space model, we use one layer of convolution and three combined blocks of updating ray features and $SE(3)$ transformers. For more details, please see the App. Sec. \ref{implementation_reconstruction}.
 \vspace{-10pt}
\paragraph{Results}
\begin{wrapfigure}{r}{0.4\linewidth}
\includegraphics[width=0.25\textheight]{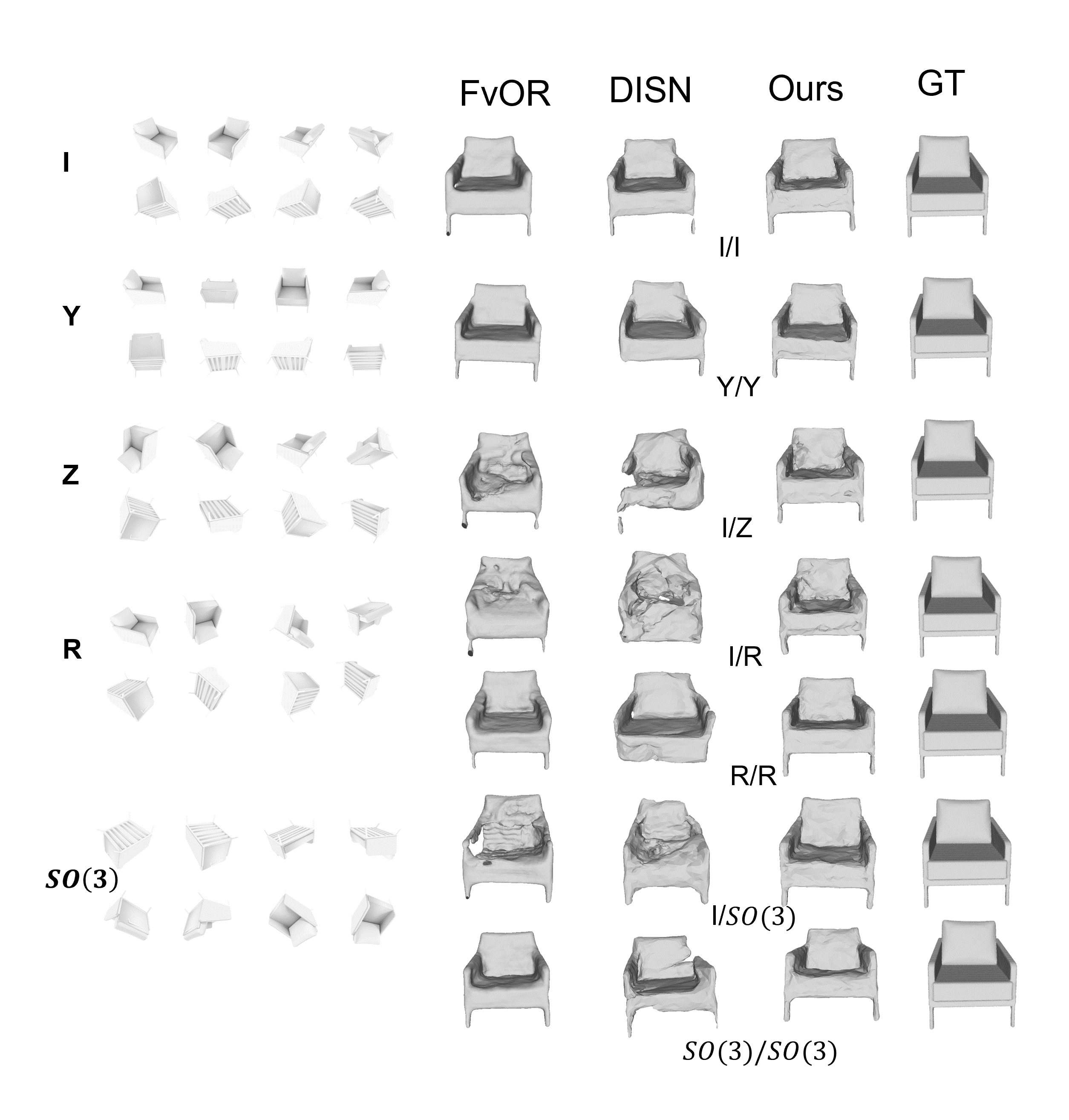}
         \caption{Qualitative results for equivariant reconstruction. Left: input views; Right: reconstruction meshes of different models and ground truth meshes the meshes show how the model is trained and tested, explained in the text.}
   \label{fig:qualitative_rec}
\end{wrapfigure}
We evaluate our model in seven experiment settings, $I/I$, $I/Z$, $I/R$, $R/R$, Y/$SO(3)$, $SO(3)/SO(3)$.  The setting A/B indicates training the model on the A setup of the dataset and evaluating it on the B setup. Following the previous works, we use \textit{IoU} and \textit{Chamfer-L1 Distance}  %
as the evaluation metric. Quantitative results are reported in table \ref{tab:equi_table}, and qualitative results are in Fig. \ref{fig:qualitative_rec}. 
We compare with two other approaches \cite{xu2019disn}, which follows a classic paradigm that queries 3D positions that are then back-projected %
to obtain image features for aggregation, %
and \cite{yang2022fvor}, which was state of the art in 3D object reconstruction from multi-views.
 Notably, we provide ground truth poses to the baselines, which originally estimate object poses. See 
App. Sec. \ref{qual_results} for more qualitative results.

\begin{table*}
  \centering
  \scalebox{0.77}{
  \begin{tabular}{|c|c|c|c|c|c|c|c|}
    \hline
    Method &\multicolumn{7}{|c|}{chair}\\
    \hline
    & I/I & I/Z & I/R & R/R&Y/Y&Y/SO(3)&SO(3)/SO(3) \\
    \hline
   Fvor w/ gt pose\cite{yang2022fvor}& 0.691/0.099& 0.409/0.253&0.398/0.257&0.669/0.113&0.687/0.103&0.518/0.194&0.664/0.114\\
   DISN w/ gt pose\cite{xu2019disn}& 0.725/0.094& 0.335/0.396&0.322/0.405 & 0.500/0.201&0.659/0.120&0.419/0.303&0.549/0.174 \\
   Ours& \textbf{0.731}/\textbf{0.090} &\textbf{0.631}/\textbf{0.130}& \textbf{0.592}/\textbf{0.137}& \textbf{0.689}/\textbf{0.105}&\textbf{0.698}/\textbf{0.102}&\textbf{0.589}/\textbf{0.142}& \textbf{0.674}/\textbf{0.113} \\
   \hline
   Method &\multicolumn{7}{|c|}{airplane}\\
   \hline
    & I/I & I/Z & I/R & R/R &Y/Y &Y/SO(3)&SO(3)/SO(3) \\
    \hline
   Fvor w/ gt pose\cite{yang2022fvor}&0.770/0.051&0.534/0.168 &0.533/0.174 & \textbf{0.766}/0.053&\textbf{0.760}/\textbf{0.052}&0.579/0.147&\textbf{0.746}/\textbf{0.056}\\
   DISN w/ gt pose\cite{xu2019disn}& 0.752/0.058& 0.465/0.173&0.462/0.171& 0.611/0.104&0.706/0.069&0.530/0.151&0.631/0.103\\
   Ours& \textbf{0.773}/\textbf{0.050}& \textbf{0.600}/\textbf{0.092}&\textbf{0.579}/\textbf{0.100} & 0.759/\textbf{0.051}&0.734/\textbf{0.052}&\textbf{0.597}/\textbf{0.101}&0.722/\textbf{0.056}\\
   \hline
    Method &\multicolumn{7}{|c|}{car}\\
   \hline
    & I/I & I/Z & I/R & R/R & Y/Y &Y/SO(3) &SO(3)/SO(3)\\
    \hline
   Fvor w/ gt pose\cite{yang2022fvor}&0.837/0.090& 0.466/0.254&0.484/0.258 & 0.816/0.107 &\textbf{0.830}/0.094&0.496/0.240&0.798/0.111\\
   DISN w/ gt pose\cite{xu2019disn}& 0.822/0.089& 0.610/0.232&0.567/0.236 &0.772/0.135 &0.802/0.098&0.614/0.205&0.769/0.123 \\
   Ours&\textbf{0.844}/\textbf{0.081}&\textbf{0.739}/\textbf{0.142} &\textbf{0.741}/\textbf{0.150}&\textbf{0.836}/\textbf{0.089} &\textbf{0.830}/\textbf{0.089}&\textbf{0.744}/\textbf{0.137}& \textbf{0.813}/\textbf{0.097}\\
   \hline
  \end{tabular}
  }
  \caption{The results for the seven experiments of 8-view $3D$ reconstruction for the ShapeNet dataset. The metrics in the cell are \textit{IoU}$\uparrow$ and \textit{Chamfer-L1 Distance}$\downarrow$. We implement \cite{yang2022fvor} and \cite{xu2019disn} ourselves on our equivariant dataset. For the performance of \cite{xu2019disn}, we follow their work to conduct the multi-view reconstruction by pooling over the feature of every view. The value of \textit{Chamfer-L1 Distance} is $\times 10$.}
  \label{tab:equi_table}
\end{table*}

In table \ref{tab:equi_table}, our model outperforms the \cite{yang2022fvor} and \cite{xu2019disn} by a large margin on $I/Z$, $I/R$, and $Y/SO(3)$ settings. Although theoretically,  our model is not equivariant to the arbitrary rotation of the object, $Y/SO(3)$ shows the robustness of our model to the object rotation and the generalization ability to some extent.

Our model outperforms other models %
for the chair and car categories in $R/R$ and $SO(3)/SO(3)$ settings while it is slightly inferior to \cite{yang2022fvor} in the airplane category. Notably, our model only requires relative camera poses, while \cite{yang2022fvor} and \cite{xu2019disn} utilize camera poses relative to the object frame, leveraging explicit positional encoding of the query point in the object frame, which is concatenated to the point feature. In addition, our model performs better in several experiments in $I/I$ and $Y/Y$ settings. It %
can be attributed to the $SE(3)$ equivariant attention model, which considers scalar features and ray directions. See App. Sec. \ref{discussion_result} for more discussion of the results.

We provide an ablation study of the effectiveness of $SE(2)$ CNNs, equivariant convolution, transformer, and type-1 feature (vector feature) in our model. Meanwhile, we compare our method with the model that explicitly encodes the direction of rays. Please see the App. Sec. \ref{ablation_study} for details.
 \vspace{-8pt}
\subsection{Neural Rendering}
\vspace{-8pt}
\paragraph{Datasets and Implementation} We use the same training and test dataset as in \cite{wang2021ibrnet}, which consists of both synthetic data and real data.
Two experiment settings illustrate our model's equivariance: $I/I$ and $I/SO(3)$. $I/I$ is the canonical setting, where we train and test the model in the same canonical frame defined in the dataset. In the $I/SO(3)$ setting, we test the model trained in the conical frame under arbitrarily rotated coordinate frames while preserving relative camera poses and the relative poses between the camera and the scene, thereby preserving the content of the multiple views. Each individual view itself is not transformed. Note that this experiment's $SO(3)$ setup differs from the $R$ and $SO(3)$ setups used in the reconstruction. Further details and discussions on this difference can be found in App. Sec. \ref{rendering_setting_discussion}. 
\begin{wrapfigure}{l}{0.4\linewidth}
        \includegraphics[width=0.25\textheight]{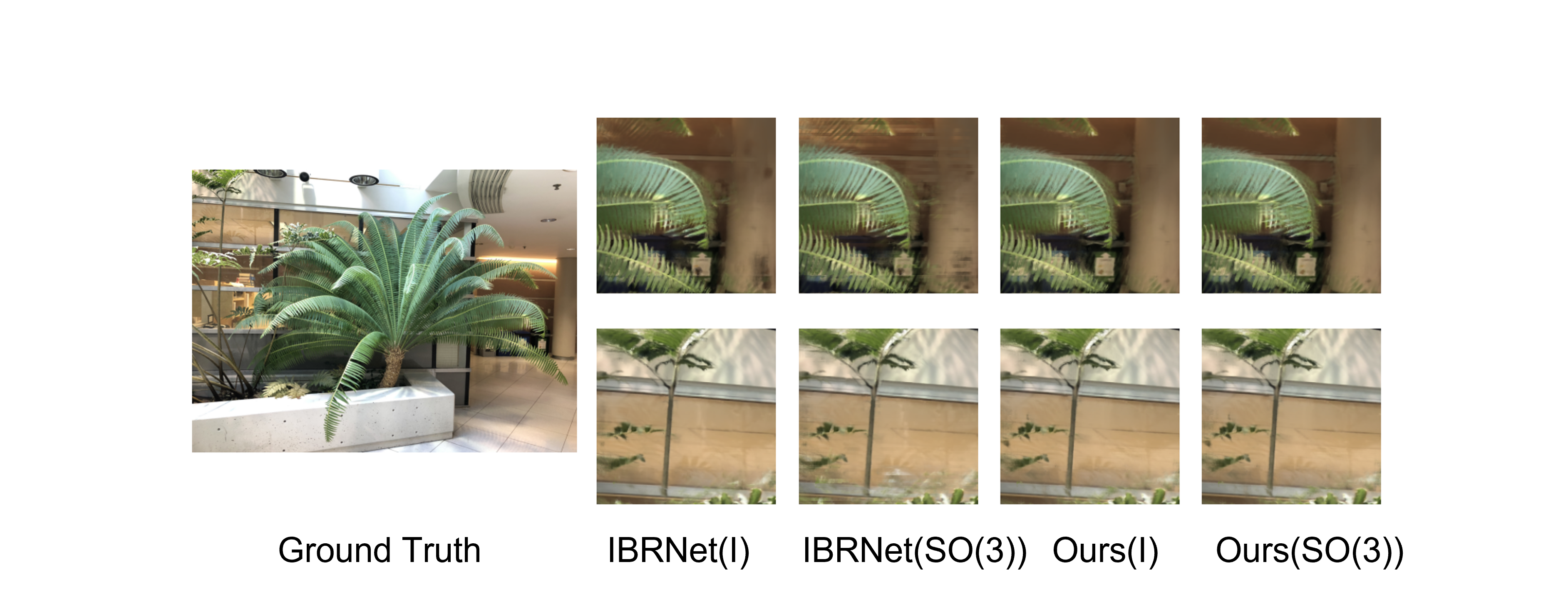}
         \caption{Qualitative Result for Generalized Rendering. We observe a performance drop for IBRNet from $I$ to $SO(3)$, while ours are robust to the rotation.}
   \label{fig:rendering_qua}
\end{wrapfigure}
Our model architecture is based on IBRNet\cite{wang2021ibrnet}, with view feature aggregation and ray transformer components modifications. Specifically, we replace the view feature aggregation in \cite{wang2021ibrnet} with the equivariant convolution and transformer over rays and the ray transformer part with the equivariant self-attention over the points along the ray. For more information of the implementation details, please refer to App. Sec. \ref{implementation_rendering}.

\vspace{-8pt}
\paragraph{Results}

We compare with IBRNet on $I/I$ and $I/SO(3)$ settings to show that our proposed models can be embedded in the existing rendering framework and achieve equivariance. Following previous works on novel view synthesis, our evaluation metrics are PSNR, SSIM, and LPIPS \cite{zhang2018unreasonable}.  In the $I/SO(3)$ test period, we randomly rotate each data six times and report the average metrics. Meanwhile,  we record the max pixel variance and report the average value. We show a qualitative result in Fig. \ref{fig:rendering_qua}. In table \ref{tab:rendering_table}, our model performs comparably with IBRNet\cite{wang2021ibrnet} in $I/I$ setting without performance drop in $I/SO(3)$ setting. The slight decrease in PSNR/SSIM/LPIPS for IBRNet from $I/I$ to $I/SO(3)$ can be attributed to the training process involving multiple datasets with different canonical frames, which includes transformation augmentation and makes the model more robust to coordinate frame changes. 
Additionally, conventional metrics like PSNR/SSIM may not directly capture image variations. Therefore, we introduce an additional metric, pixel variance, to illustrate better the changes. We observe that IBRNet \cite{wang2021ibrnet} exhibits pixel variance for different rotations, whereas our approach remains robust to rotation. Our method performs comparably with IBRNet in the $I/SO(3)$ setting in DeepVoxels \cite{sitzmann2019deepvoxels} because the synthetic data consists of Lambertian objects with simple geometry, where the ray directions do not significantly affect the radiance. For more qualitative results, see App. Sec.\ref{rendering_qualitive_result}.

\begin{table}
\scalebox{0.85}{
\begin{tabular}{|c|c|c|c|c|c|c|c|c|}
\hline
    Dataset&Method &\multicolumn{3}{|c|}{I/I}&\multicolumn{4}{|c|}{I/SO(3)}\\
    \hline
    \multicolumn{2}{|c|}{}&PSNR$\uparrow$&SSIM$\uparrow$&LPIPS$\downarrow$&PSNR$\uparrow$&SSIM$\uparrow$&LPIPS$\downarrow$&Pix- Var$\downarrow$\\ 
     \hline
    \multirow{2}{*}{\makecell[c]{Realistic Synthetic\\ $360^{\circ}$\cite{mildenhall2021nerf}}}&IBRNet\cite{wang2021ibrnet}&\textbf{26.91} & 0.928&\textbf{0.084}&26.77 &0.923 &0.091 & 66.58\\
    &Ours&26.90 & \textbf{0.929}&0.086&\textbf{26.90}&\textbf{0.929} &\textbf{0.086}&\textbf{0.00}\\
      \hline
   \multirow{2}{*}{\makecell[c]{Real\\Forward-Facing \cite{mildenhall2019local}}} &IBRNet\cite{wang2021ibrnet}& \textbf{25.13}&\textbf{0.817} &\textbf{0.205} &24.60 & 0.797&0.223 & 52.66\\
    &Ours&24.93 &0.808 &0.212 &\textbf{24.93} &\textbf{0.808} &\textbf{0.212} &\textbf{0.00}\\
    \hline
    \multirow{2}{*}{\makecell[c]{Diffuse Synthetic\\ $360^{\circ}$\cite{sitzmann2019deepvoxels}}}&IBRNet \cite{wang2021ibrnet} &\textbf{37.21} & \textbf{0.989}&\textbf{0.019} & 37.07& \textbf{0.988}& \textbf{0.019}&34.51 \\
    &Ours& 37.11& 0.987& \textbf{0.019}&\textbf{37.11} & 0.987&\textbf{0.019} &\textbf{0.00}\\
\bottomrule
\end{tabular}
}
\caption{The results for the experiments of generalized rendering without per-scene tuning. The metrics in the cell are PSNR$\uparrow$, SSIM$\uparrow$, and Pixel Variance$\downarrow$ (denoted as Pix-Var). The evaluation of IBRNet\cite{wang2021ibrnet} is performed by testing the released model on both canonical and rotated test datasets.}
  \label{tab:rendering_table}
\end{table}

\vspace{-10pt}
\section{Conclusion and Broader Impacts}
\vspace{-8pt}
To learn equivariant geometric priors from multi-views, we modeled the convolution on the light field as a generalized convolution on the homogeneous space of rays with $SE(3)$ as the acting group. To obtain  expressive point features, we extended convolution to equivariant attention over rays. The main limitation of the approach is the finite sampling of the light field. The sampling of the light field by sparse views cannot account for large object motions with drastic aspect change, leading to a breakdown of equivariance.
This novel general equivariant representation framework for light fields can inspire further work on 3D vision and graphics tasks. We don't see the direct negative impact of our work but it could have negative societal consequences if misused without authorization, especially regarding private information.

\vspace{-8pt}
\section{Acknowledgement}
\vspace{-8pt}
The authors gratefully acknowledge support by the support by the following grants: NSF FRR 2220868, NSF IIS-RI 2212433, NSF TRIPODS 1934960, NSF CPS 2038873.
\newpage
{\small
\bibliographystyle{plainnat}
\bibliography{example_paper}
}

\newpage
 \appendix
 \onecolumn
\section*{Supplemental Material}
The introduction of convolution and attention on the space of rays in 3D required  additional geometric representations  for which there was no space in the main paper to elaborate on. We will introduce here all the necessary notations and definitions. We have accompanied this presentation with examples of specific groups in order to elucidate the abstract concepts needed in the definitions.
\section{Preliminary}

\label{appendix}
\subsection{Group Actions and Homogeneous Spaces}
\begin{wrapfigure}{l}{0.5\linewidth}
        \includegraphics[width=0.25\textheight]{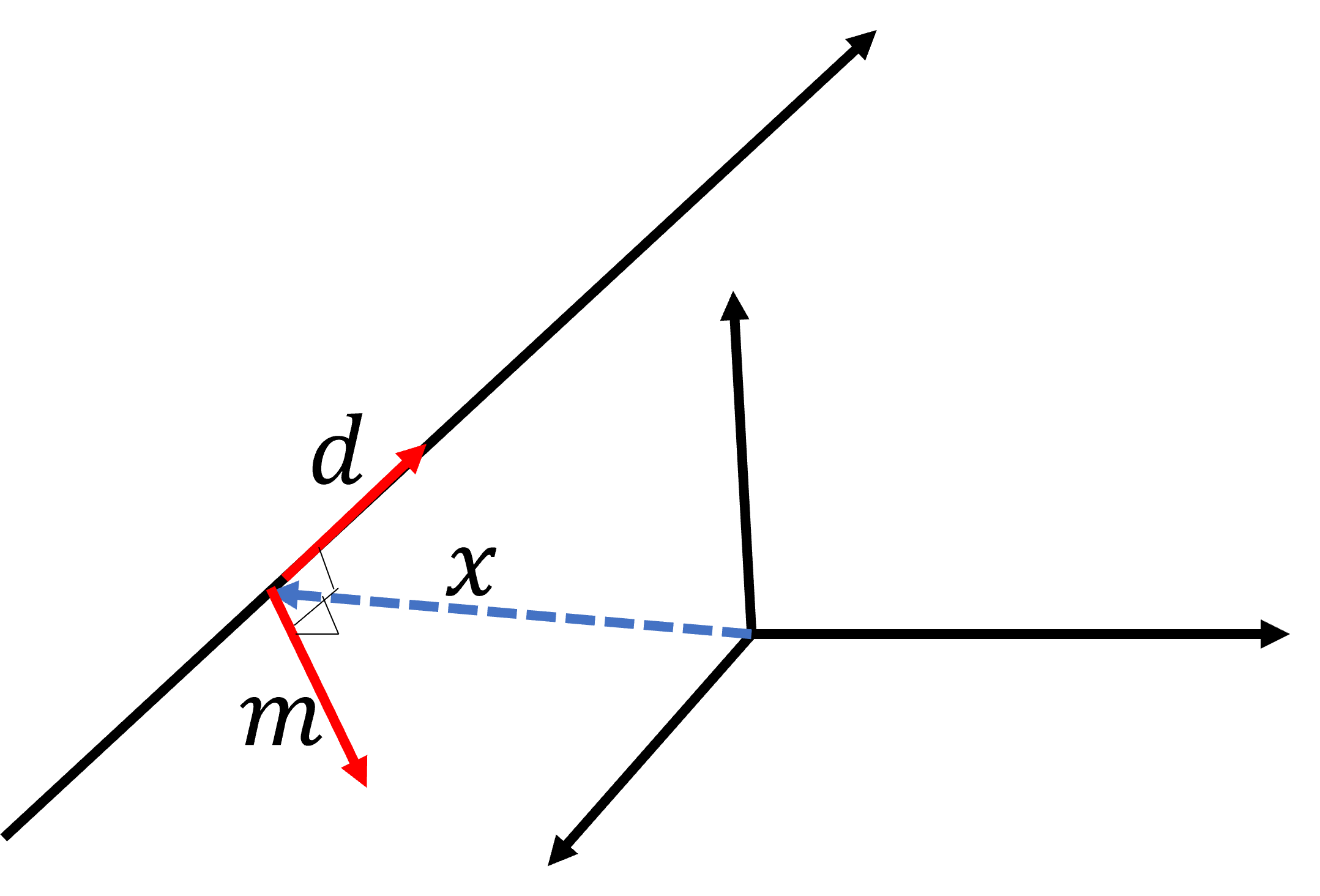}
         \caption{The visualization of Pl{\"u}cker coordinates: A  ray $x$ can be denoted as $(\bm{d}, \bm{m})$ where $\bm{x}$ is any point on the ray $x$, and $\bm{d}$ is the direction of the ray $x$.  $\bm{m}$ is defined as $ \bm{x} \times \bm{d}$. }
   \label{fig:plucker}
\end{wrapfigure}

Given the action of the group $G$ on a homogeneous space $X$, and given $x_0$ as the origin of $X$, the stabilizer group $H$ of $x_0$ in $G$ is the group that leaves $x_0$ intact, i.e., $H=\{h \in G| hx_0=x_0\}$. The group $G$ can be partitioned into the quotient space (the set of left cosets) $G/H$, and $X$ is isomorphic to $G/H$ since all group elements in the same coset transform $x_0$ to the same element in $X$, that is, for any element $g' \in gH$ we have $g'x_0=gx_0$. 

\begin{exg}
\label{SE3_ex_homo_ray_space}
\textcolor{red}{$SE(3)$ acting on the ray space $\mathcal{R}$:}
Take $SE(3)$ as the acting group and the ray space $\mathcal{R}$ as its homogeneous space. We use Pl{\"u}cker coordinates to parameterize the ray space $\mathcal{R}$:  any $x \in \mathcal{R}$ can be denoted as $(\bm{d},\bm{m})$,  where $\bm{d} \in \mathbb{S}^2$ is the direction of the ray, and $\bm{m} = \bm{x} \times \bm{d}$ where $\bm{x}$ is any point on the ray, as shown in figure $\ref{fig:plucker}$. A group element $g=(R,\bm{t}) \in SE(3)$  acts on the the ray space as:
\begin{align}
     \label{group action}
    gx=g(\bm{d},\bm{m})=(R\bm{d},R\bm{m}+\bm{t}\times (R\bm{d})).
\end{align}
 We can choose the fixed origin of the homogeneous space to be $\eta =  ([0,0,1]^T, [0,0,0]^T)$, the line identical with the $z$-axis of the coordinate system. Then, the stabilizer group $H$ (the rotation around and translation along the ray) can be parameterized as $H=\left\{(R_Z(\gamma),t[0,0,1]^T)|\gamma \in [0, 2\pi), t \in \mathbb{R} \right\}$, i.e.,  $H \simeq SO(2) \times \mathbb{R}$. We can simplify $H$ as $H=\left\{(\gamma,t)|\gamma \in [0, 2\pi), t \in \mathbb{R} \right\}$. $\mathcal{R}$ is the quotient space $SE(3)/(SO(2)\times \mathbb{R})$ up to isomorphism.

\begin{figure}[htb]
  \centering
   \includegraphics[width=0.5\linewidth]{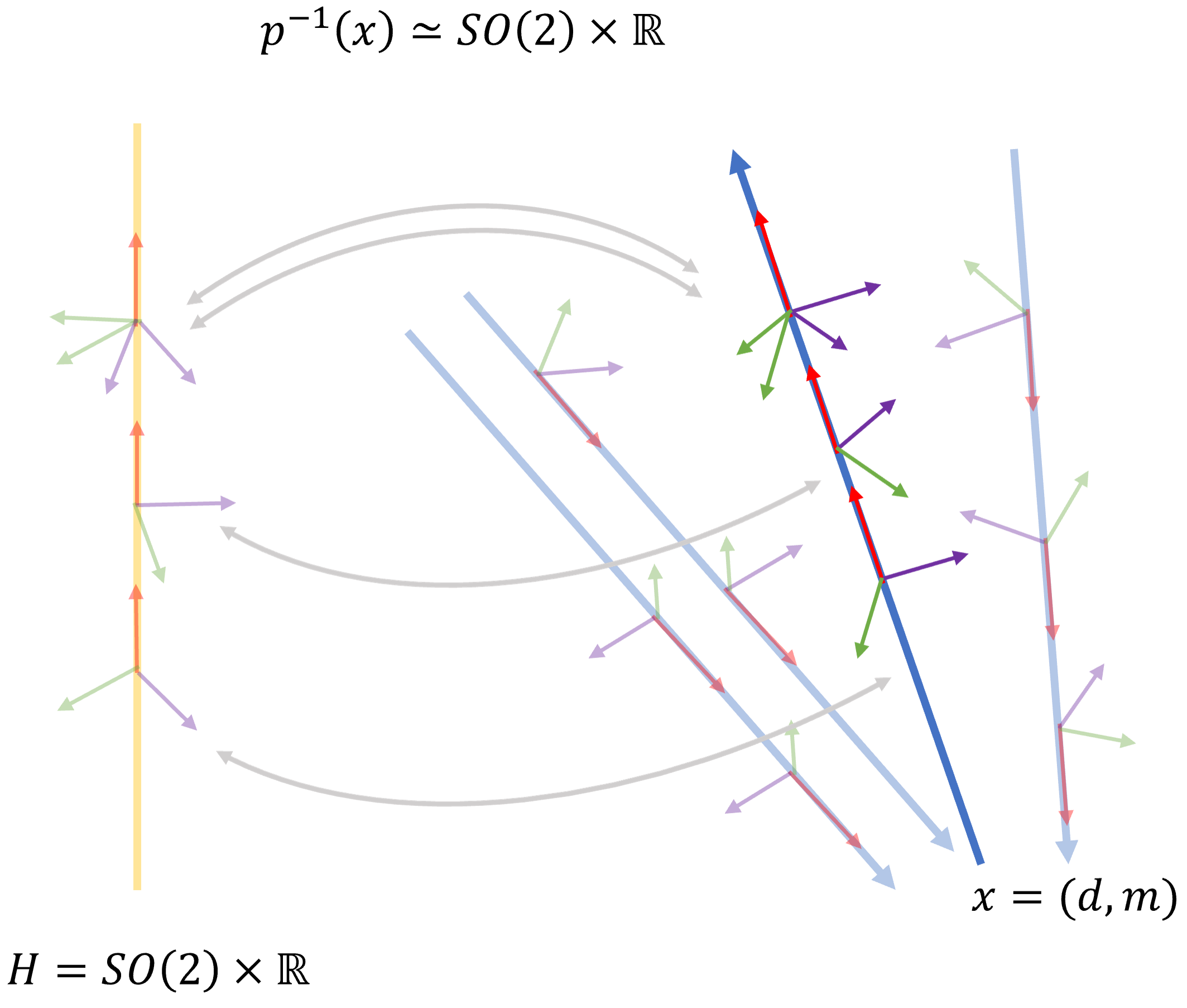}
   \caption{We can view $SE(3)$ as an $SO(2) \times \mathbb{R}$-principal bundle, where the projection map $p: SE(3) \rightarrow \mathcal{R}$ is $p(R,\bm{t})=(R[0,0,1]^T,\bm{t} \times (R[0,0,1]^T)$, and the inverse of $p$ is  $p^{-1}(x)=\left\{(R,\bm{t})|(R,\bm{t})\eta=x\right \}$.
   We use the coordinate frames (red axis denotes $Z$-axis, green axis denotes $X$-axis, and purple axis denotes $Y$-axis) to denote the element in $SE(3)$ because we can use the position of the coordinate origin to denote the translation $\bm{t}$ and use $X$-axis, $Y$-axis, and $Z$-axis to represent the first, second and third columns in rotation $R$. When we say next ``the coordinate frame on the line/ray'' we will mean that its origin is on the line/ray.
   By this convention, the coordinate frames representing the element in $H=SO(2)\times \mathbb{R}$ are the frames whose $Z$-axis aligns with $[0,0,1]^T$ and whose origin is $[0,0,t]^T$ for any $t \in \mathbb{R}$, i.e., frames on the yellow line in the left of the figure. For one ray $x=(\bm{d},\bm{m})$ (illustrated as the chosen blue ray),  the coordinate frames on the ray $x$ whose $Z$-axis aligns with the ray $\bm{d}_x$ are in $p^{-1}(x)$. As shown in the figure, there exists a bijection (gray double arrow line ) between $p^{-1}(x)$ and 
   $H=SO(2)\times \mathbb{R}$. 
   $p^{-1}(x)$ is isomorphic to $H=SO(2)\times \mathbb{R}$.}
   \label{fig:bundle struc}
\end{figure}
\end{exg}

\begin{exg}
\label{SE3_ex_homo_R3}
\textcolor{red}{$SE(3)$ acting on the $3D$ Euclidean space $\mathbb{R}^3$:}
$\mathbb{R}^3$ is isomorphic to $SE(3)/SO(3)$. 
Consider another case when $SE(3)$ acts on the homogeneous space $\mathbb{R}^3$; for any $g =(R,\bm{t}) \in SE(3)$ and $\bm{x} \in \mathbb{R}^3$, $g\bm{x}=R\bm{x} +\bm{t}$. If the fixed origin is $[0,0,0]^T$, the stabilizer subgroup is $H=SO(3)$ since any rotation $g=(R, \bm{0})$ leaves $[0,0,0]^T$ unchanged. 
\end{exg}

\begin{exg}
\label{SO3_ex_homo}
\textcolor{red}{$SO(3)$ acting on the sphere $\mathbb{S}^2$:}
$\mathbb{S}^2$ is isomorphic to $SO(3)/SO(2)$.
The last example is $SO(3)$ acting on the homogeneous space sphere $\mathbb{S}^2$. Given the fixed origin point as $[0,0,1]^T$, the stabilizer group is $SO(2)$.  
\end{exg}

\subsection{Principal Bundle}
\label{principal_bundle}
As stated in \cite{gallier2020differential, cohen2019general}, the partition of the group $G$ into cosets allows us to treat the group $G$ as the principal bundle where the \textbf{total space} is $G$,
the \textbf{base space} is the homogeneous space $G/H$\footnote{We use $G/H$ to denote the homogeneous space since the homogeneous space $X$ can be identified with $G/H$ up to an isomorphism, i.e., $X \simeq G/H$.}, the canonical \textbf{fiber} is the stabilizer group $H$, the \textbf{projection map} $p: G \rightarrow G/H$ reads $p(g)=gH=gx_0=x$.  The \textbf{section} $s: G/H \rightarrow G$ of $p$ should satisfy that  $p\circ s =id_{G/H}$, where $id_{G/H}$ is the identity map on $G/H$. 
Note that non-trivial principal bundles do not have a continuous global section, but we can define a continuous section locally on the open set $U \subseteq G/H$. 
The action of $G$ causes a twist of the fiber, i.e., $gs(x)$ might not be equal to $s(gx)$ though they are in the same coset. We use the \textbf{twist function} $\text{h}: G \times G/H \rightarrow H$  to denote the twist: $gs(x)=s(gx)\text{h}(g,x)$. Same as \cite{cohen2019general}, we simplify $\text{h}(g, eH)$ to be $\text{h}(g)$, where $e$ is the identity element in $G$ and $eH=x_0$.

\begin{exg}
\label{SE3_ex_sec_R3}
\textcolor{red}{Projection, section map and twist function for $\mathbb{R}^3$ and $SE(3)$:}
According to Ex. \ref{SE3_ex_homo_R3}, we can consider a bundle with total space as $SE(3)$, base space as $\mathbb{R}^3$, and the fiber as $SO(3)$. For any $g=(R, \bm{t}) \in SE(3)$, the projection map $p: SE(3) \rightarrow \mathbb{R}^3$ projects $g$ as $p(R, \bm{t})=\bm{t}$. For any $\bm{x} \in \mathbb{R}^3$, we can define the section map  $s: \mathbb{R}^3 \rightarrow SE(3)$ as $s(\bm{x})= (I,\bm{x})$. The twist function ${\rm h}: SE(3) \times \mathbb{R}^3 \rightarrow SO(3)$ is that ${\rm h}(g,\bm{x})=s(g\bm{x})^{-1}gs(\bm{x})=R$ for any $\bm{x} \in \mathbb{R}^3$ and any $g=(R,\bm{t}) \in SE(3)$. This twist function is independent of $\bm{x}$ due to the fact that $SE(3)= \mathbb{R}^3 \rtimes SO(3)$ is a semidirect product group as stated in \cite{cohen2019general}.

\end{exg}

\begin{exg}
\label{SO3_ex_sec}
\textcolor{red}{Projection, section map, and twist function for $\mathbb{S}^2$ and $SO(3)$:}
As shown in Ex. \ref{SO3_ex_homo}, $SO(3)$ can be viewed as a principal bundle with the base space as $\mathbb{S}^2$ and the fiber as $SO(2)$. With the rotation $R \in SO(3)$ parameterized as $R=R_Z(\alpha)R_Y(\beta)R_Z(\gamma)$, the projection $p: G \rightarrow G/H$ maps $R$ as follows:
\begin{align*}
p(R)&=R_Z(\alpha)R_Y(\beta)R_Z(\gamma)[0,0,1]^T \\
&=R_Z(\alpha)R_Y(\beta)[0,0,1]^T\\
&=[sin(\beta)cos(\alpha),sin(\beta)sin(\alpha),cos(\beta)]^T.
\end{align*}
For any $\bm{d} \in \mathbb{S}^2$, the section map $s: \mathbb{S}^2 \rightarrow SO(3)$ of $p$ should satisfy that $p\circ s =id_{\mathbb{S}^2}$ as mentioned above, i.e., $s(\bm{d})[0,0,1]^T=\bm{d}$. For instance, we could define the section map $s$ as:
\begin{align*}
s(\bm{d})=R_Z(\alpha_{\bm{d}})R_Y(\beta_{\bm{d}}),
\end{align*} 
where $\alpha_{\bm{d}}$ and $\beta_{\bm{d}}$ satisfies that
$$\bm{d}=[sin(\beta_{\bm{d}})cos(\alpha_{\bm{d}}),sin(\beta_{\bm{d}})sin(\alpha_{\bm{d}}),cos(\beta_{\bm{d}})]^T.$$ Specifically, when $\bm{d}=[0,0,1]^T$, $\alpha_{\bm{d}}=0$ and $\beta_{\bm{d}}=0$; when $\bm{d}=-[0,0,1]^T$, $\alpha_{\bm{d}}=0$ and $\beta_{\bm{d}}=\pi$.

As defined, the twist function ${\rm h}: SO(3) \times \mathbb{S}^2 \rightarrow SO(2)$ is that ${\rm h}(R,\bm{d})=s(R\bm{d})^{-1}Rs(\bm{d})$. 
\end{exg}

\begin{exg}
\label{SE3_ex_sec_ray_space}
\textcolor{red}{Projection, section map, and twist function for $\mathcal{R}$ and $SE(3)$:}
The final example is $SE(3)$ with $\mathcal{R}$ as the base space and  $SO(2) \times \mathbb{R}$ as the fiber, which is the focus of this work, as shown in figure \ref{fig:bundle struc}.  According to the group action defined in Eq. \ref{group action}, the projection map $p: SE(3) \rightarrow \mathcal{R}$ is: 
$$p((R,\bm{t}))=(R,\bm{t})\eta=(R[0,0,1]^T, \bm{t}\times (R[0,0,1]^T)).$$
This represents a ray direction $\bm{d}$ with the 3rd column of a rotation matrix and  the moment $\bm{m}$ with the cross product of the translation and the ray direction. We can construct a section $s: G/H \rightarrow G$ using the Pl{\"u}cker coordinate:
$$s((\bm{d},\bm{m})) = (s_a(\bm{d}), s_b(\bm{d},\bm{m})),$$
where $s_a(\bm{d}) \in SO(3)$ is a rotation that $s_a(\bm{d})[0,0,1]^T=\bm{d}$, i.e., $s_a$ is a section map from $\mathbb{S}^2$ to $SO(3)$ as shown in Ex. \ref{SO3_ex_sec}; and $s_b(\bm{d}, \bm{m}) \in \mathbb{R}^3$ is a point on the ray $(\bm{d}, \bm{m})$. In this paper, we define the section map as $s((\bm{d},\bm{m})) = (R_Z(\alpha_{\bm{d}})R_Y(\beta_{\bm{d}}), \bm{d}\times\bm{m})$,
where $\alpha_{\bm{d}}$ and  $\beta_{\bm{d}}$ satisfy that $\bm{d}=R_Z(\alpha_{\bm{d}})R_Y(\beta_{\bm{d}})[0,0,1]^T$, which is the same as Ex. \ref{SO3_ex_sec}. Figure \ref{fig: section of ray} displays the visualization of the section map.

Given the section map, for any $g=(R_g,\bm{t}_g) \in SE(3)$ and $x= (\bm{d}_x, \bm{m}_x) \in \mathcal{R}$, we  have
the twist function ${\rm h}:SE(3)\times \mathcal{R} \rightarrow SO(2) \times \mathbb{R}$ is ${\rm h}(g,x)=s^{-1}(gx)gs(x) = ({\rm h}_a(R_g,\bm{d}_x), {\rm h}_b(g,x))$, where ${\rm h}_a:SO(3) \times \mathbb{S}^2 \rightarrow SO(2)$ is the twist function corresponding to $s_a$, as shown in Ex. \ref{SO3_ex_sec}, and ${\rm h}_b(g,x)=\langle R_gs_b(x)+\bm{t}_g-s_b(gx), R_g\bm{d}_x \rangle$. With the above section $s$ defined in this paper, the twist function  ${\rm h}:SE(3)\times \mathcal{R} \rightarrow SO(2) \times \mathbb{R}$ is $${\rm h}(g,x)=s^{-1}(gx)gs(x)= (R_Z(R_g,\bm{d}_x), \langle \bm{t}_g, (R_g\bm{d}_x) \rangle),$$ where $R_Z(R_g,\bm{d}_x) = R_Y^{-1}(\beta_{R_g\bm{d}_x}) R_Z^{-1}(\alpha_{R_g\bm{d}_x})R_gR_Z(\alpha_{\bm{d}_x})R_Y(\beta_{\bm{d}_x})$.

To understand the twist function clearly, we visualize a twist induced by a translation in $SE(3)$ in figure \ref{fig: visualization of twist},

\begin{figure}[t]
  \centering
\includegraphics[width=0.7\linewidth]{./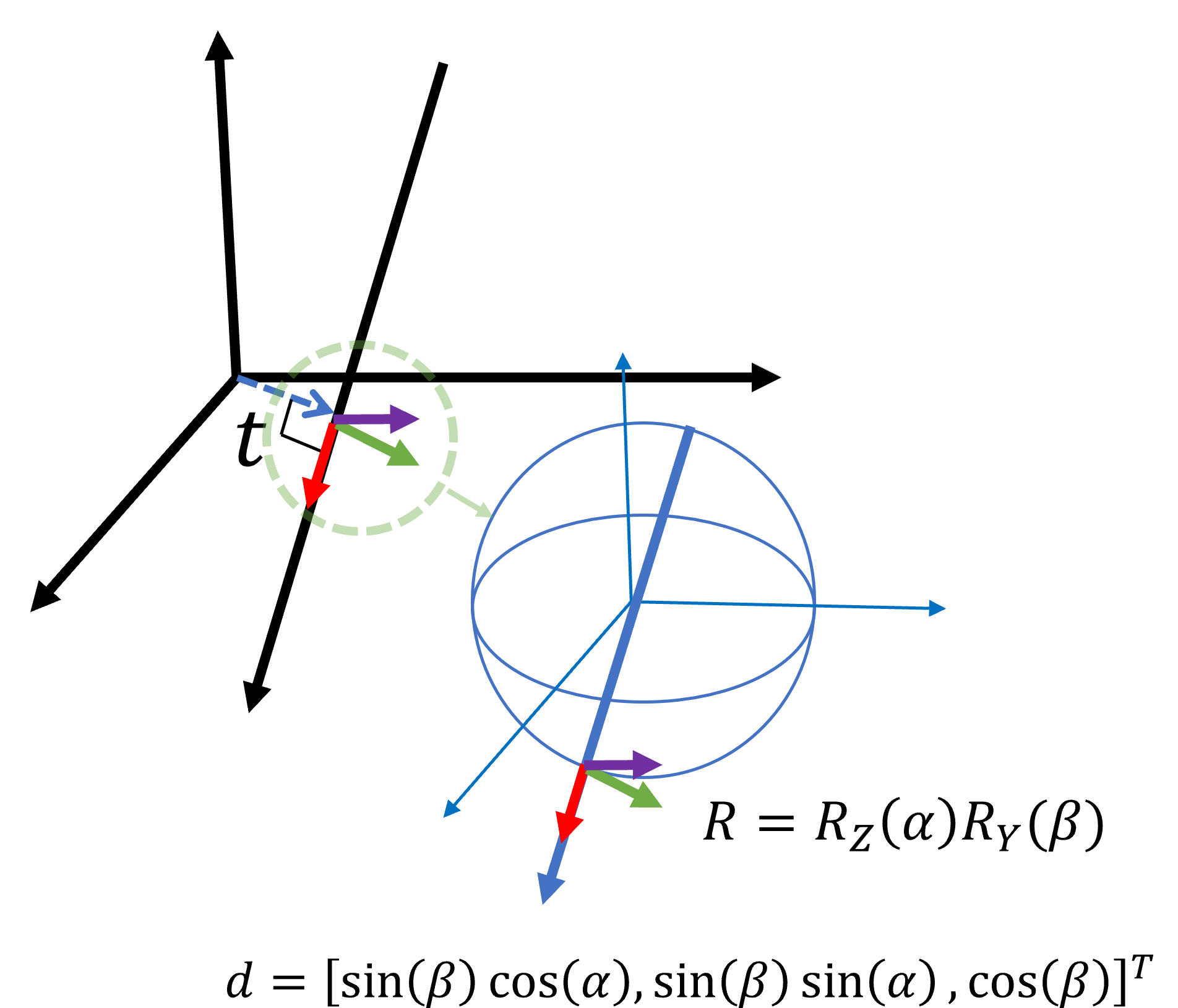}
   \caption{For a ray $x=(\bm{d}, \bm{m})$, we need to choose an element $(R,\bm{t}) \in SE(3)$ as the representative element $s(x)$ such that $s(x) ([0,0,1]^T, [0,0,0]^T) =x$. This figure shows one example of the section map $s$ from ray space to $SE(3)$. This map also serves as the section provided in this paper. 
   The axes of the coordinate frame in the figure represent $R = s_a(\bm{d})=R_Z(\alpha_{\bm{d}})R_Y(\beta_{\bm{d}})$, where the green axis, purple axis, and red axis represent 1st, 2nd and 3rd column in the rotation matrix $R$, respectively.The origin of the frame,$\bm{t} = s_b(\bm{d}, \bm{m})=\bm{d} \times \bm{m}$, denotes the translation.}
   \label{fig: section of ray}
\end{figure}

\begin{figure}[t]
  \centering
\includegraphics[width=0.7\linewidth]{./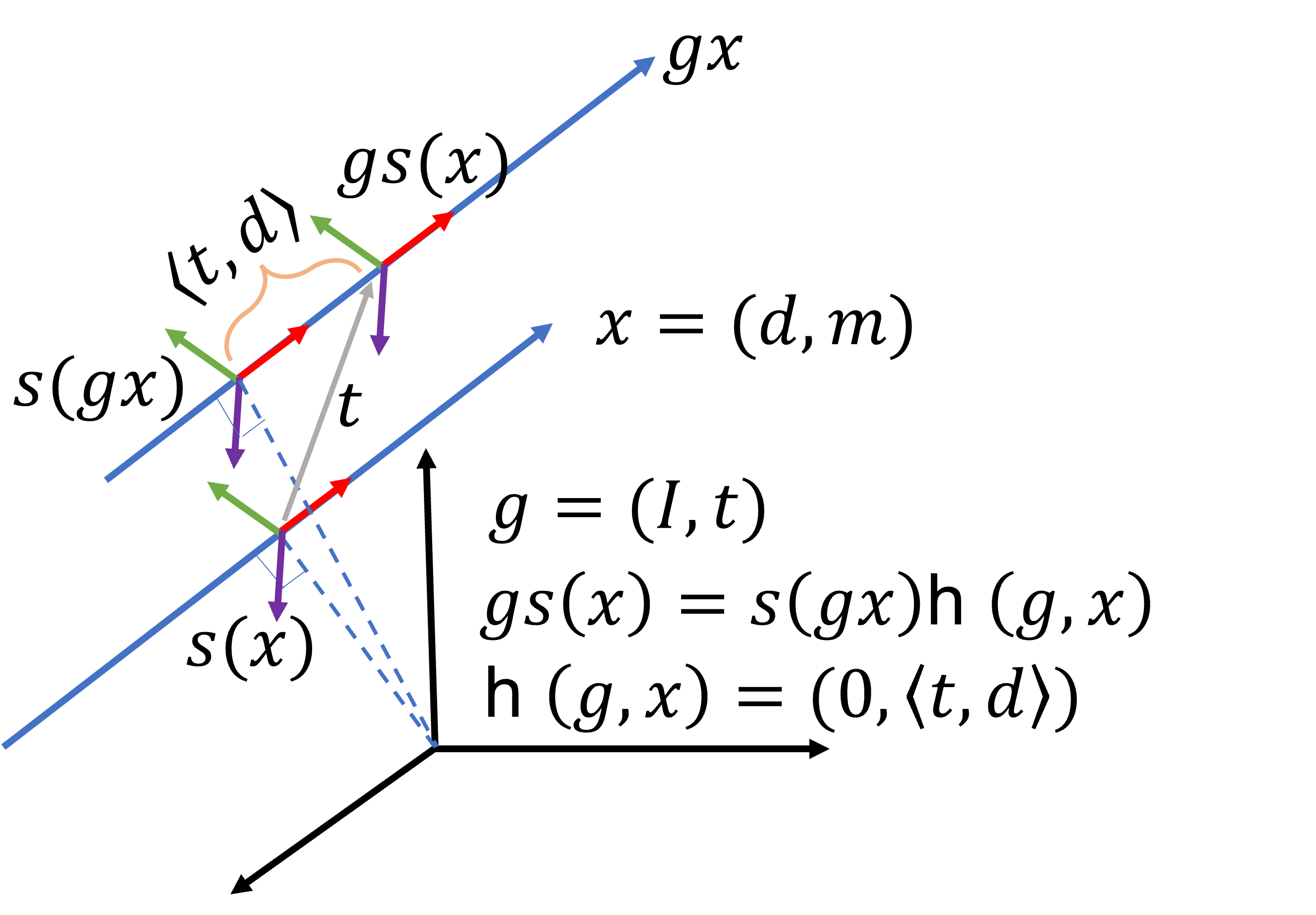}
   \caption{When we translate a ray $x=(\bm{d}, \bm{m})$ with $g =(I,\bm{t}) \in SE(3)$, we will find that $gs(x)$ does not agree with $s(gx)$. As defined in figure \ref{fig: section of ray}, we have $s_b(x) \perp \bm{d}$ and $s_b(gx) \perp \bm{d}$. 
  Following the geometry of the figure,  we obtain that ${\rm h_b}(g,x)=\langle t, \bm{d} \rangle [0,0,1]^T$, i.e.,${\rm h}(g,x) = s(gx)^{-1}gs(x) = (I, \langle t, \bm{d} \rangle [0,0,1]^T)=(0,\langle t, \bm{d} \rangle).$}
   \label{fig: visualization of twist}
\end{figure}

\end{exg}

\subsection{Associated Vector Bundle}
\label{associated vector}
Given the principal bundle $G$, we can construct the associated vector bundle by replacing the fiber $H$ with the vector space $V$, where $V \simeq \mathbb{R}^n$ and $H$ acts on $V$ through a group representation $\rho: H \rightarrow GL(V)$.
The group representation corresponds to the type of geometric quantity in the vector space $V$, for example, the scalar, the vector, or the higher-order tensor.

The quotient space $E = G \times_{\rho} V /H$ is defined through the right action of $H$ on $G \times V$: $(g,v)h=(gh, \rho(h)^{-1}v)$ for any $h \in H$, $g \in G$ and $v \in V$. With the defined projection map $p: G\times_{\rho} V \rightarrow G/H$: $p([g,v])=gH$, where $[g,v] = \left \{(gh,\rho(h)^{-1}v)| h \in H \right \}$, the element in $G\times_{\rho} V$, we obtain the fiber bundle $E = G\times_{\rho} V $ associated to the principal bundle $G$. For more background and details of the associated vector bundle, we recommend referring to the following sources: \cite{steenrod1999topology} and \cite{cohen2019general}.

The feature function $f: U \subseteq G/H \rightarrow V$ can encode the local section of the associated vector bundle $s_v: U \subseteq G/H \rightarrow G\times_{\rho} V$: $s_v(x)=[s(x),f(x)]$, where $s$ is the section map of the principal bundle as defined in Sec. \ref{principal_bundle}. 
The group $G$ acting on the field $f$ as shown in \cite{cohen2019general}:
\begin{align}
(\mathcal{L}_gf)(x)=\rho(\text{h}(g^{-1},x))^{-1}f(g^{-1}x),
\label{action on field}
\end{align}
where $\text{h}:G \times G/H \rightarrow H$ is the twist function as defined in Sec. \ref{principal_bundle}.

\subsection{Equivariant Convolution Over Homogeneous Space}
\label{convolution sec}
The generalized equivariant convolution over homogeneous space, as stated in \cite{cohen2019general}, that maps a feature field $f^{l_{in}}$ over homogeneous space $G/H_1$ to a feature $f^{l'_{out}}$ over homogeneous space $G/H_2$ by convolving with a kernel $\kappa$ is defined as:
\begin{align}
f^{l'_{out}}(x)=\int_{G/H_1}\kappa(s_2(x)^{-1}y)\rho_{in}(\text{h}_1(s_2(x)^{-1}s_1(y)))f^{l_{in}}(y)dy,
    \label{convolution}
\end{align}
where $l_{in}$ and $l'_{out}$ \footnote{%
In this context, the feature type indicates the specific geometric quantity in vector spaces $V_{in}$ and $V_{out}$. $V_{in}$ corresponds to the stabilizer $H_1$ and $V_{out}$ corresponds to the stabilizer $H_2$. It is possible for $H_1$ and $H_2$ to be distinct; therefore, to differentiate the types of features corresponding to different stabilizers, we utilize $l$ and $l'$ as notations for the feature types.} denote the input and output feature types, respectively. $\rho_{in}$ is the group representation of $H_1$ corresponding to the feature type $l_{in}$, $s_1$ is the section map from $G/H_1$ to $G$ (see Sec. \ref{principal_bundle}), $s_2$ is the section map from $G/H_2$ to $G$ (see Sec. \ref{principal_bundle}), $\text{h}_1$ is the twist function corresponding to $s_1$ (see Sec. \ref{principal_bundle}). 

The convolution is equivariant with respect to $G$, that is $$\mathcal{L}^{out}_gf^{l'_{out}}= \kappa*\mathcal{L}^{in}_gf^{l_{in}},$$
if and only if $\kappa(h_2x)=\rho_{out}(h_2)\kappa(x)\rho_{in}(\text{h}_1^{-1}(h_2,x))$ for any $h_2 \in H_2$, where $\rho_{out}$ is the group representation of $H_2$ corresponding to the feature type $l'_{out}$.

In the following examples, we will illustrate three instances where the input and output homogeneous spaces, denoted as $G/H_1$ and $G/H_2$, respectively, are identical, meaning that $H_1=H_2$. These examples involve convolutions from $\mathbb{R}^3$ to $\mathbb{R}^3$, from $\mathbb{S}^2$ to $\mathbb{S}^2$, and from $\mathcal{R}$ to $\mathcal{R}$. Furthermore, we will show an example where $H_1$ and $H_2$ differ, explicitly focusing on the convolution from $\mathcal{R}$ to $\mathbb{R}^3$.
 
\begin{exg}
\textcolor{red}{$SE(3)$ equivariant convolution from $\mathbb{R}^3$ to $\mathbb{R}^3$:}
 If we use the section map as stated in Ex.  \ref{SE3_ex_sec_R3}, we will find that ${\rm h}(s(x)^{-1}s(y))=I$, therefore convolution \ref{convolution} becomes:
\begin{align*}
f^{l_{out}}(x)&=\int_{\mathbb{R}^3}\kappa(s(x)^{-1}y)f^{l_{in}}(y)dy\\
&=\int_{\mathbb{R}^3}\kappa(y-x)f^{l_{in}}(y)dy
\end{align*}
and
$\kappa$ should satisfy  
\begin{align*}
\kappa(Rx)&=\rho_{out}(R)\kappa(x)\rho_{in}({\rm h}^{-1}(R,x))\\
&=\rho_{out}(R)\kappa(x)\rho_{in}({\rm h}^{-1}(R))\\
&=\rho_{out}(R)\kappa(x)\rho^{-1}_{in}(R)%
\end{align*}
for any $R \in SO(3)$.
When the feature type $l_{in}$ and $l_{out}$ corresponds to the irreducible representation, we have $$ 
\kappa(Rx) = D_{l_{out}}(R)\kappa(x)D_{l_{in}}(R)^{-1}
$$
where $D^{l_{in}}$ and $D^{l_{out}}$ are the Wigner-D matrices, i.e. irreducible representations corresponding to the feature types $l_{in}$ and $l_{out}$, which is the same as the analytical result in \cite{weiler20183d}. 
\end{exg}

\begin{exg}
\textcolor{red}{$SO(3)$ equivariant spherical convolution from $\mathbb{S}^2$ to $\mathbb{S}^2$:}
For spherical convolution, when we substitute the section in Eq. \ref{convolution} with the section we defined in Ex. \ref{SO3_ex_sec}, the convolution integral takes the following form:
\begin{align*}
&f^{l_{out}}(\alpha, \beta)\\
&=\int_{\alpha' \in [0,2\pi),\beta' \in [0,\pi)} \kappa(R^{-1}_Y(\beta)R^{-1}_Z(\alpha)R_Z(\alpha')R_Y(\beta')[0,0,1]^T)\\
&\rho_{in}({\rm h}(R^{-1}_Y(\beta)R^{-1}_Z(\alpha)R_Z(\alpha')R_Y(\beta')
)f^{l_{in}}(\alpha',\beta')d\alpha'sin(\beta')d\beta'
\end{align*}

where $[0,0,1]^T$  is the fixed original point as stated in Ex. \ref{SO3_ex_homo}, $\rho_{in}$ is the group representation of $SO(2)$ corresponding to the feature type $l_{in}$. When $\rho_{in}$ and $\rho_{out}$ are the irreducible representations of $SO(2)$, $\rho_{in}$ and $\rho_{out}$ can be denoted as 
$\rho_{in}(\theta)=e^{-il_{in}\theta}$ and $\rho_{out}(\theta)=e^{-il_{out}\theta}$.

To simplify the notation, we utilize $R(\theta)$ to represent $R_Z(\theta) \in SO(2)$, where $\theta \in [0, 2\pi)$. When considering the cases where $x = [0,0,1]^T$, $h(R(\theta)x) = R(\theta)$; when $x = -[0,0,1]^T$, $h(R(\theta)x) = R(-\theta)$; and when $x \in \mathbb{S}^2 - \left\{ [0,0,1]^T, -[0,0,1]^T \right\}$, $h(R(\theta)x) = R(-\theta) = I$. Therefore, the kernel $\kappa$ should satisfy the following conditions:
$\kappa (R(\theta)x)=e^{-il_{out}\theta}\kappa(x)$ for any $R(\theta) \in SO(2)$ and any $x \in \mathbb{S}^2-\left\{[0,0,1]^T, -[0,0,1]^T\right\}$; $\kappa(x)= e^{-i(l_{out}-l_{in})\theta}\kappa(x)$ for $x=[0,0,1]^T$; and $\kappa(x)= e^{-i(l_{out}+l_{in})\theta}\kappa(x)$ for $x=-[0,0,1]^T$.

Specifically, when the input and output are scalar feature fields over the sphere,  convolution reads
\begin{align*}
&f^{out}(\alpha, \beta)\\
&=\int_{\alpha' \in [0,2\pi),\beta' \in [0,\pi)} \kappa(R^{-1}_Y(\beta)R^{-1}_Z(\alpha)R_Z(\alpha')R_Y(\beta')\eta)\\
&f^{in}(\alpha',\beta')d\alpha'sin(\beta')d\beta'
\end{align*}

$\kappa$ has such constraint:
$$\kappa(R(\theta)x)=\kappa(x)$$ for any $R(\theta) \in SO(2)$, which is consistent with the isotropic kernel of the convolution in \cite{esteves2018learning}.

\end{exg}

\begin{exg}
\label{filter_solution}
\textcolor{red}{$SE(3)$ equivariant convolution from $\mathcal{R}$ to $\mathcal{R}$:}
In our case, the equivariant convolution from ray space to ray space is also based on the generalized equivariant convolution over a homogeneous space.  See Sec. \ref{equiconv}
for the details. We solve the constraint of the kernel here:
\begin{align}
    \kappa(hx)=\rho_{out}(h)\kappa(x)\rho_{in}(\text{h}^{-1}(h,x)),
\end{align}
for any $h \in SO(2) \times \mathbb{R}$.

The irreducible group representation $\rho_{in}$ 
for the corresponding feature type $l_{in}=(\omega^1_{in}, \omega^2_{in})$, where $\omega^1_{in} \in \mathbb{N}$ and $\omega^2_{in} \in \mathbb{R}$, can be written as
$\rho_{in}(\gamma,t)= e^{-i(\omega^1_{in}\gamma+\omega^2_{in}t)}$ for any $h =(\gamma ,t) \in  SO(2) \times \mathbb{R}$; and the irreducible group representation $\rho_{out}(\gamma,t) =e^{-i(\omega^1_{out}\gamma+\omega^2_{out}t)} $  for the feature type $l_{out}=(\omega^1_{out}, \omega^2_{out})$, where $\omega^1_{out} \in \mathbb{N}$ and $\omega^2_{out} \in \mathbb{R}$, for any $h =(\gamma ,t) \in  SO(2) \times \mathbb{R}$. 

To simplify the notation, we utilize $R(\gamma)$ to represent $R_Z(\gamma) \in SO(2)$, where $\gamma \in [0, 2\pi)$. For any $h =(\gamma,t) \in SO(2) \times \mathbb{R}$ and any $x =(\bm{d}_x,\bm{m}_x) \in \mathcal{R}$, we have $\text{h}(h,x)=s(hx)^{-1}hs(x)= (R_Z(R(\gamma),\bm{d}_x), \langle t[0,0,1]^T, \bm{d}_x \rangle)$ according to 
Ex. \ref{SE3_ex_sec_ray_space}. Since $SO(2) \times \mathbb{R}$ is a product group, we can have $\kappa(x) = \kappa_1(x)\kappa_2(x)$, where 
\begin{align}
\kappa_1((\gamma,t)x) = \rho_{out}((\gamma, 0))\kappa_1(x) \rho^{-1}_{in}((R_Z(R(\gamma),\bm{d}_x),0))
\label{constraint_k1}
\end{align}

\begin{align}
\kappa_2((\gamma,t)x) = \rho_{out}((0, t))\kappa_2(x) \rho^{-1}_{in}((0,\langle t[0,0,1]^T, \bm{d}_x \rangle ))
\label{constraint_k2}
\end{align}

Now we solve the constraint for the kernel $\kappa_1$:

One can check that for any $\bm{d}_x \in \mathbb{S}^2 -\left\{ [0,0,1]^T, -[0,0,1]^T\right\}$, $R_Z(R(\gamma),\bm{d}_x)=I$; when $\bm{d}_x=[0,0,1]^T$, $R_Z(R(\gamma),\bm{d}_x)=R(\gamma)$; and when 
$\bm{d}_x=-[0,0,1]^T$, $R_Z(R(\gamma),\bm{d}_x)=R(-\gamma)$.

Therefore, we obtain the constraint that 
\begin{align}
\label{con_k1}
    \kappa_1((\gamma,t)x) = e^{-i\omega^1_{out}\gamma}\kappa_1(x)
\end{align}
when $\bm{d}_x \in \mathbb{S}^2 -\left\{ [0,0,1]^T, -[0,0,1]^T\right\}$; 

\begin{align}
\label{con_k1_2}
    \kappa_1((\gamma,t)x) = e^{-i(\omega^1_{out}-\omega^1_{in})\gamma}\kappa_1(x)
\end{align}
when $\bm{d}_x = [0,0,1]^T $;

\begin{align}
\label{con_k1_3}
    \kappa_1((\gamma,t)x) = e^{-i(\omega^1_{out}+\omega^1_{in})\gamma}\kappa_1(x)
\end{align}
when $\bm{d}_x = -[0,0,1]^T $;

The solution for Eq. \ref{con_k1} is that $\kappa_1(x)=f(d(\eta, x), \angle([0,0,1]^T,\bm{d}_x))e^{-i\omega^{1}_{out}atan2([0,1,0]\bm{d}_x, [1,0,0]\bm{d}_x)} $, where $atan2$ is the 2-argument arctangent function, and $f$ is an arbitrary function that maps $(d(\eta, x), \angle([0,0,1]^T,\bm{d}_x))$ to the complex domain.

The solution for Eq. \ref{con_k1_2} is that when $\omega^1_{out} = \omega^1_{in}$, $\kappa_1(x)=C$, where $C$ is any constant value;  when $\omega^1_{out} \neq \omega^1_{in}$ and $x =\eta$, $\kappa_1(x)=0$; when $\omega^1_{out} \neq \omega^1_{in}$ and $x \neq \eta$, $\kappa_1(x)=f(d(\eta, x))e^{-i(\omega^{1}_{out}-\omega^{1}_{in})atan2([0,1,0]\bm{m}_x, [1,0,0]\bm{m}_x)}$,where $f$ is an arbitrary function that maps $d(x,\eta)$ to the complex domain.

The solution for Eq. \ref{con_k1_3} is that when $\omega^1_{out} = -\omega^1_{in}$, $\kappa_1(x)=C$, where $C$ is any constant value;  when $\omega^1_{out} \neq -\omega^1_{in}$ and $x =-\eta$, $\kappa_1(x)=0$; when $\omega^1_{out} \neq -\omega^1_{in}$ and $x \neq -\eta$, $\kappa_1(x)=f(d(\eta, x))e^{-i(\omega^{1}_{out}+\omega^{1}_{in})atan2([0,1,0]^T\bm{m}_x, [1,0,0]^T\bm{m}_x)}$,where $f$ is an arbitrary function that maps $d(x,\eta)$ to the complex domain.

Next, we will solve the constraint for the kernel $\kappa_2$, which is that $\kappa_2((\gamma,t)x)= e^{-i(\omega^2_{out}-\omega^2_{in}\langle [0,0,1]^T, \bm{d}_x \rangle)t}\kappa_2(x)$.

When $\bm{d}_x =[0,0,1]^T$, and $\omega^2_{out} \neq \omega^2_{in}$, $\kappa_2(x)=0$; When $\bm{d}_x=-[0,0,1]^T$ and $\omega^2_{out} \neq -\omega^2_{in}$, $\kappa_2(x)=0$; When $\bm{d}_x =[0,0,1]^T$, and $\omega^2_{out} = \omega^2_{in}$, $\kappa_2(x)=f(d(x,\eta))$, where $f$ is an arbitrary function that maps $d(x,\eta)$ to the complex domain; When $\bm{d}_x =-[0,0,1]^T$, and $\omega^2_{out} = -\omega^2_{in}$, $\kappa_2(x)=f(d(x,\eta))$, where $f$ is an arbitrary function that maps $d(x,\eta)$ to the complex domain; when $\bm{d}_x \in \mathbb{S}^2-\left\{[0,0,1]^T, -[0,0,1]^T\right\}$,  
\begin{align}
\kappa_2(x)=f(d(\eta, x), \angle([0,0,1]^T,\bm{d}_x))e^{-i(\omega^2_{out}-\omega^2_{in}\langle [0,0,1]^T, \bm{d}_x \rangle)g(x)}, \label{solution_for_nonre}
\end{align}
where  
$f$ is an arbitrary function that maps $(d(\eta, x), \angle([0,0,1]^T,\bm{d}_x))$ to the complex domain; $g(x)=[0,0,1](\bm{x}_Q-[0,0,0]^T)$, where $\bm{x}_Q$ represents the 3D coordinates of a point $Q$. This point $Q$ can be defined as the intersection of $x$ and $\eta$ if $x$ and $\eta$ intersect. Alternatively, if $x$ and $\eta$ do not intersect, $Q$ is determined as the intersection of $\eta$ and the ray $y$, which is perpendicular to both $x$ and $\eta$, and intersects with both $x$ and $\eta$. Refer to Figure \ref{fig:constraint} for a visual representation. One can easily check that $g((\gamma, t)x) = t+g(x)$, as shown in figure \ref{fig:constraint}, which makes the solution valid. 

If $x$ and $\eta$ are intersected, i.e., $[0,0,1]\bm{m}_x=0$, 
$$g(x) = [0,0,1](\bm{d}_x \times \bm{m}_x -\frac{[1,0,0](\bm{d}_x \times \bm{m}_x)}{[1,0,0]\bm{d}_x}\bm{d}_x)$$ when $[1,0,0]\bm{d}_x \neq 0$;  
$$g(x) = [0,0,1](\bm{d}_x \times \bm{m}_x -\frac{[0,1,0](\bm{d}_x \times \bm{m}_x)}{[0,1,0]\bm{d}_x}\bm{d}_x)$$ when $[1,0,0]\bm{d}_x = 0$;  

When $x$ and $\eta$ are not intersected, 
$$g(x) = [0,0,1](\bm{d}_x \times \bm{m}_x -\frac{[1,0,0](\bm{d}_x \times \bm{m}_x)[1,0,0]\bm{d}_x+[0,1,0](\bm{d}_x \times \bm{m}_x)[0,1,0]\bm{d}_x  }{([1,0,0]\bm{d}_x)^2+ ([0,1,0]\bm{d}_x)^2 }\bm{d}_x).$$

\paragraph{Regular Representation} 
\label{regular_representation}
Here, we delve into the case where the output field type corresponds to the group representation of $SO(2)\times \mathbb{R}$ that $\rho(\gamma,t) =\rho_1(\gamma)\otimes \rho_2(t)$ for any $(\gamma, t) \in SO(2)\times \mathbb{R}$, where $\rho_2$ is the regular representation. The regular representation of a group G is a linear representation that arises from the group action of G on itself by translation, that is when $\rho_2: \mathbb{R} \rightarrow GL(V)$ is the regular representation, for any $v \in V$, for any $t,t' \in \mathbb{R}$, we have $(\rho_2(t')v)_{t}=v_{t-t'}$, in other words, $v \in V$ can be viewed as a function defined on $\mathbb{R}$ or an infinite dimensional vector.  Then  according to Ex. \ref{SE3_ex_sec_ray_space}, the  group $SE(3)$ acting on the the field $f$ would be: 
\begin{align*}
(\mathcal{L}_gf)(x)_t&=(\rho(\text{h}(g^{-1},x))^{-1}f(g^{-1}x))_t\\
&=\rho_1({\rm h}_a(R_{g^{-1}},\bm{d}_x))^{-1}
f(g^{-1}x)_{t+{\rm h}_b(g^{-1},x)}\\
&=\rho_1(R_Z(R_{g^{-1}},\bm{d}_x))^{-1}
f(g^{-1}x)_{t+ \langle \bm{t}_{g^{-1}}, (R_{g^{-1}}\bm{d}_x) \rangle}
\end{align*}
for any $t \in \mathbb{R}$, $x \in \mathcal{R}$ and $g \in SE(3)$.

The points $\bm{x}$ on the ray $x =(\bm{d}_x, \bm{m}_x)$ can be uniquely expressed as $\bm{x} = s_b(x)+t_{\bm{x}}\bm{d}_x = \bm{d}_x\times \bm{m}_x + t_{\bm{x}}\bm{d}_x$, therefore for any $x \in \mathcal{R}$, any $t \in \mathbb{R}$, $f(x)_t$ can be expressed as a feature attached to the point $s_b(x)+t\bm{d}_x$ along the ray $x$,i.e., $f(x)_t = f'(s_b(x)+t\bm{d}_x, \bm{d}_x)$ as shown in figure \ref{fig:regular_and_points}.

\begin{figure}[t]
  \centering
   \includegraphics[width=0.9\linewidth]{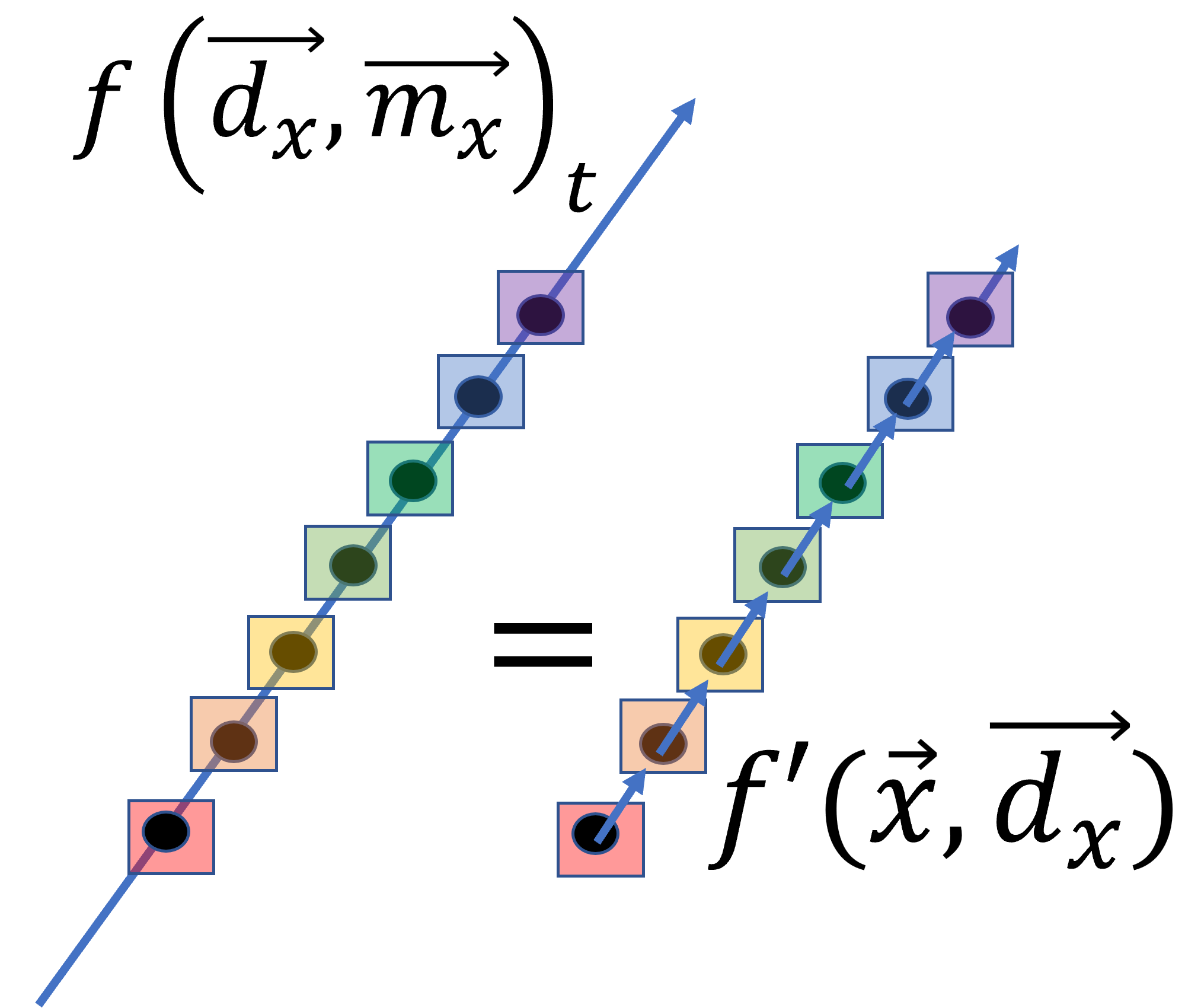}
   \caption{The feature attached to the ray, which corresponds to the regular representation of translation, can also be treated as the features attached to the points along the ray.}
   \label{fig:regular_and_points}
\end{figure}

Therefore, we have $f'(\bm{x},\bm{d})=f((\bm{d},\bm{x}\times\bm{d}))_{\langle\bm{x}-\bm{d}\times (\bm{x}\times \bm{d}), \bm{d}\rangle}$, one can easily check:
\begin{align}
(\mathcal{L}_gf')(\bm{x},\bm{d})&= \rho_1({\rm h}_a(R_{g^{-1}},\bm{d}))^{-1}
f'(R_{g^{-1}}\bm{x}+\bm{t}_{g^{-1}},R_{g^{-1}}\bm{d}) = \rho_1(R_Z(R_{g^{-1}},\bm{d}))^{-1}
f'(g^{-1}\bm{x},R_{g^{-1}}\bm{d})
\label{regular_action}
\end{align}

We should note the difference of the point $\bm{x}$ along the ray and the independent point $\bm{x}$, as shown in the above equation, the point $\bm{x}$ along the ray $x=(\bm{d},\bm{x}\times\bm{d})$ is denoted as $(\bm{x},\bm{d})$ instead of $\bm{x}$. Actually, it can be viewed as a  homogeneous space of $SE(3)$ larger than $\mathbb{R}^3$, whose elements are in $\mathbb{R}^3 \times \mathbb{S}^2$, as shown in figure \ref{fig:tensors_over_points_along_ray}.

\begin{figure}[t]
  \centering
   \includegraphics[width=0.9\linewidth]{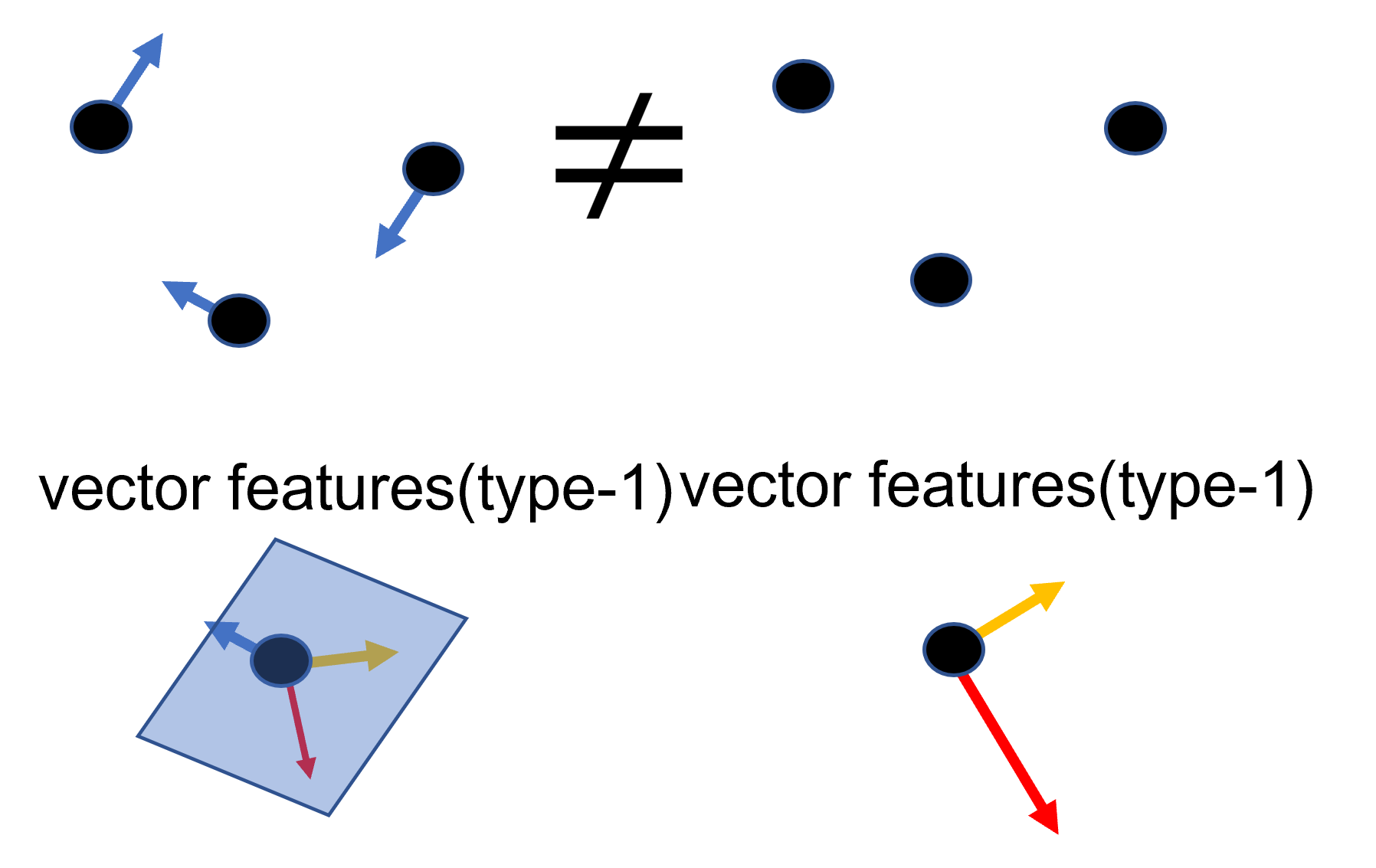}
   \caption{As shown in the figure, the point along the ray is distinct from the independent point. Moreover, we can observe that the type-1 feature of the point along the ray differs from that of the independent point. Specifically, the type-1 feature for the point along the ray can be interpreted as a vector lying on the plane orthogonal to the ray direction. In contrast, the type-1 feature for the independent point can be interpreted as a three-dimensional vector.}
   \label{fig:tensors_over_points_along_ray}
\end{figure}

To summarize, the features attached to the ray, whose type corresponds to the regular representation of translation, can be considered as the features attached to the points along the ray. The action of $SE(3)$ on features attached to these points can be expressed as shown in Eq. \ref{regular_action}.

\begin{figure}[t]
  \centering
   \includegraphics[width=0.9\linewidth]{./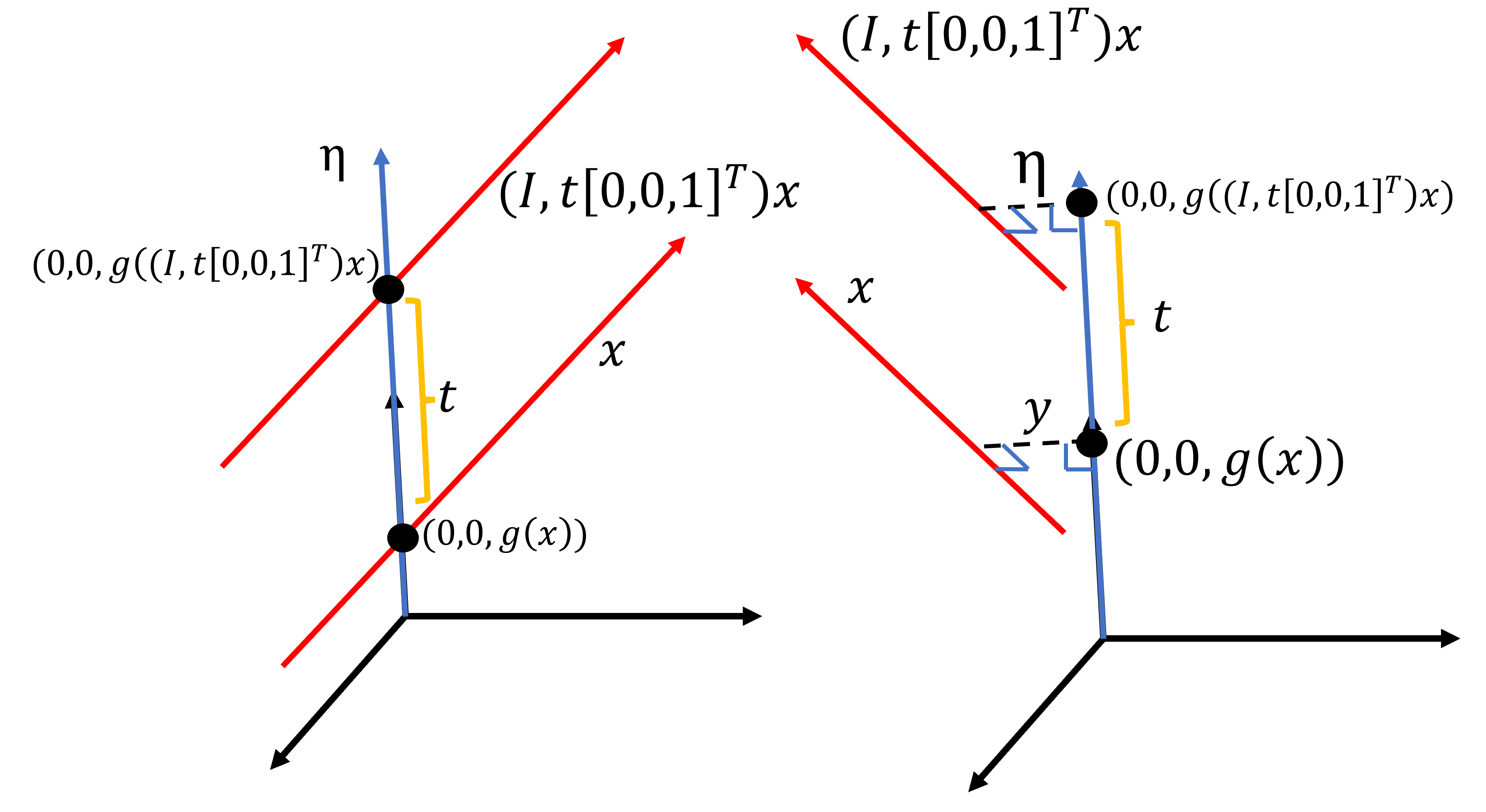}
   \caption{Visualization of $g(x)$. The left is the case that the ray $x$ and the ray $\eta$ are intersected, and the right is the case that the ray $x$ and the ray $\eta$ are not intersected. For the left, the point $Q$ is the intersection of $x$ and $\eta$, and $Q=(0,0,g(x))$; for the right, the point $Q$ is the intersection of the line $y$ and the ray $\eta$, where $y$ is perpendicular to both $\eta$ and $x$, and intersects with both $\eta$ and $x$. From the figure, in both cases, we can see that for any $t \in \mathbb{R}$, $g((0,t)x)=t+g(x)$. In general, we actually have for any $(\gamma,t) \in SO(2) \times \mathbb{R}$, $g((\gamma, t)x)=t+g(x)$. }
   \label{fig:constraint}
\end{figure}

The solution $\kappa$ also can be expressed as 
\begin{align}
\kappa(x)_t=\kappa_1(x)\kappa_2(x)_t
\label{whole_so_re}
\end{align}
for any $t \in \mathbb{R}$, and their constraint is also the same as Eq. \ref{constraint_k1} and Eq. \ref{constraint_k2}. As a result, the solution for $\kappa_1$ should be the same. We only need to solve $\kappa_2$:
\begin{align}
\kappa_2((\gamma,t')x)_t= e^{i\omega^2_{in}\langle [0,0,1]^T, \bm{d}_x \rangle t'}\kappa_2(x)_{t-t'}
\end{align} 
for any $(\gamma,t')\in SO(2) \times \mathbb{R}$.

When $\bm{d}_x \in \mathbb{S}^2-\left\{[0,0,1]^T, -[0,0,1]^T\right\}$,  

\begin{align}
\kappa_2(x)_t=f(d(\eta, x), \angle([0,0,1]^T,\bm{d}_x))e^{i\omega^2_{in}\langle [0,0,1]^T, \bm{d}_x \rangle g(x)}\delta(t-g(x)),
\label{solution_for_re}
\end{align}
where $f$ and $g$ are the same function as defined in \ref{solution_for_nonre}, and $\delta(t)=1$ only when $t=0$.

when $\bm{d}_x \in \left\{[0,0,1]^T, -[0,0,1]^T\right\}$, $\kappa_2(x)_t=0$ for any $t \in \mathbb{R}$.

\end{exg}

\begin{exg}
\label{filter_solution2point}
\textcolor{red}{$SE(3)$ equivariant convolution from $\mathcal{R}$ to $\mathbb{R}^3$:}
Following \cite{cohen2019general}, the convolution from rays to points becomes: 
\begin{align}
f_2^{l_{out}}(x)=\int_{\mathcal{R}}\kappa(s_2(x)^{-1}y)\rho_{in}(\text{h}_1(s_2(x)^{-1}s_1(y)))f^{l_{in}}_1(y)dy,
\label{light2r3}
\end{align}
where 
$\text{h}_1$ is the twist function corresponding to section $s_1:\mathcal{R} \rightarrow SE(3)$ defined aforementioned, $\rho_{in}$ is the group representation of $SO(2) \times \mathbb{R}$, corresponding to the feature type $l_{in}$, $s_2: \mathbb{R}^3 \rightarrow SE(3)$ is the section map defined in paper as $s_2(\bm{x})=(I,\bm{x})$.

In this paper, we give the analysis and solutions for the kernel 
where the input is the scalar field over the ray space, i.e.,$\rho_{in}=1$, the trivial group representation, which is also the case of our application in reconstruction.

The convolution is equivariant if and only if 
$$\kappa(h_2x)=\rho_{out}(h_2)\kappa(x),$$
for any $h_2 \in SO(3)$, 
where $\rho_{out}$ is the group representation of $SO(3)$ corresponding to the feature type $l_{out}$.

We can derive $\kappa(h_2x)=\rho_{out}(h_2)\kappa(x)$ analytically. For irreducible representation $\rho_{out}$ and any $x =(\bm{d}_x,\bm{m}_x) \in \mathcal{R}$, if $\|\bm{m}_x\|=0$, $\kappa(x)=cY^{l_{out}}(\bm{d}_x)$, where $c$ is an arbitrary constant and $Y^{l_{out}}$ is the spherical harmonics and $l_{out}$ is the 
order (type) of 
output tensor corresponding to the representation $\rho_{out}$;  With $\|\bm{m}_x\| \neq 0$, 
$\kappa(x)$ becomes $\rho_{out}(\hat{x})f(\|\bm{m}\|_x)$,  where $\hat{x}$ denotes the element $(\bm{d}_x,\frac{\bm{m}_x}{\|\bm{m}_x\|},\bm{d}_x\times \frac{\bm{m}_x}{\|\bm{m}_x\|})$ in $SO(3)$ and $f: \mathbb{R} \rightarrow \mathbb{R}^{(2l_{out}+1)\times 1}$. 

Similar to the convolution from rays to rays, we also can have the local support of the kernel. We set $\kappa(x) \neq 0$ when $\|\bm{m}_x\| \leq d_0$, otherwise $\kappa(x) = 0$. One can easily check that it doesn't break the equivariant constraint for the kernel. 

Specifically, when we set $d_0=0$,  the neighborhood of the target points in the convolution only includes the rays from all views going through the point. Hence,  we can simplify the convolution to $f_2^{l_{out}}(x)=\int_{d(y,x)=0}Y^{l_{out}}(\bm{d}_{s_2(x)^{-1}y})f^{in}_1(y)dy$.
This equation shows 
 that for every point $x$, we can treat the ray $y$ going through $x$ with feature $f^{in}_1$ as a point $y'$, where $y'-x =\bm{d}_{s_2(x)^{-1}y}$, as shown in figure \ref{figure:convertion}. 
 \begin{figure}[t]
  \centering
   \includegraphics[width=0.9\linewidth]{./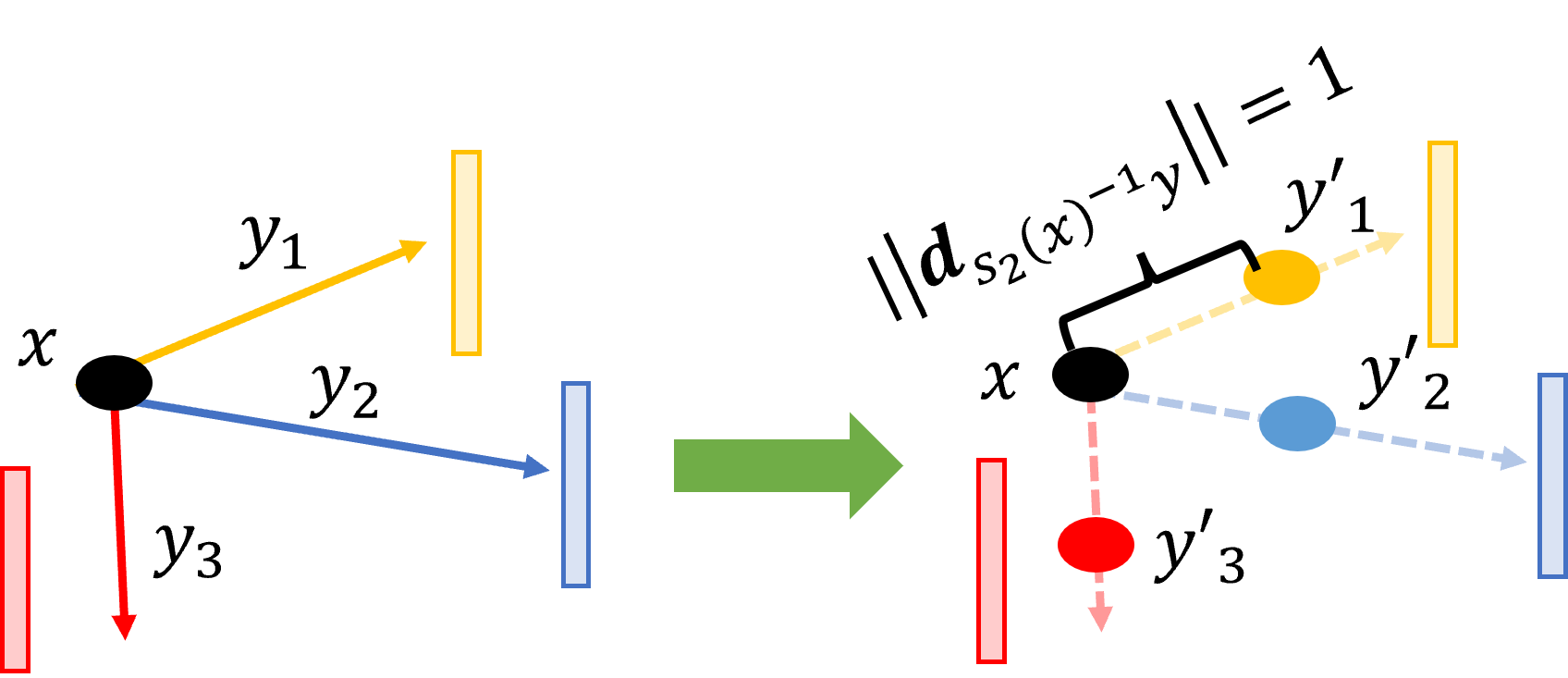}
   \caption{Interpreting rays $y_i$ as points $y'_i$}
   \label{figure:convertion}
\end{figure}
\end{exg}

\section{Equivariant 3D Reconstruction}
\label{equivariant_reconstruction}
\subsection{Approximation of the  Equivariant Convolution from Rays to Rays}

In practical $3D$ reconstruction, we have multiple views instead of the whole light field. Although the aforementioned convolution is defined on the continuous ray space, the equivariance still strictly holds when the ray sampling (pixels from camera views) is the same up to coordinate change. In this case, we will show how we adjust the equivariant convolution from rays to rays in this case and approximate it by an intra-view $SE(2)$-convolution.

\subsubsection{From light field to intra-view convolution} 
Following Fig.~\ref{fig:mvconv}, neighboring rays are composed of two parts: a set of rays from the same view and another set of rays from different views. For one ray $x$ in view $A$, the neighboring rays from view $B$ are in the neighborhood of the epipolar line of $x$ in view $B$. When the two views are close to each other, the neighborhood in the view $B$ would be very large.

The kernel solution in Ex. \ref{filter_solution} suggests that $\kappa(x)$ is related to $\angle (\bm{d}_x,[0,0,1]^T)$ and $d((x,\eta)$, where $\eta =([0,0,1]^T,[0,0,0]^T)$ as mentioned before. It would be memory- and time-consuming to memorize the two metrics beforehand or to compute the angles and distances on the fly.  Practically, the light field is only sampled from a few sparse viewpoints, which causes the relative angles of the rays in different views to be large and allows them to be excluded from the kernel neighborhood; therefore, in our implementation, the ray neighborhood is composed of only rays in the same view.
\begin{figure}[t]
  \centering
\includegraphics[width=0.9\linewidth]{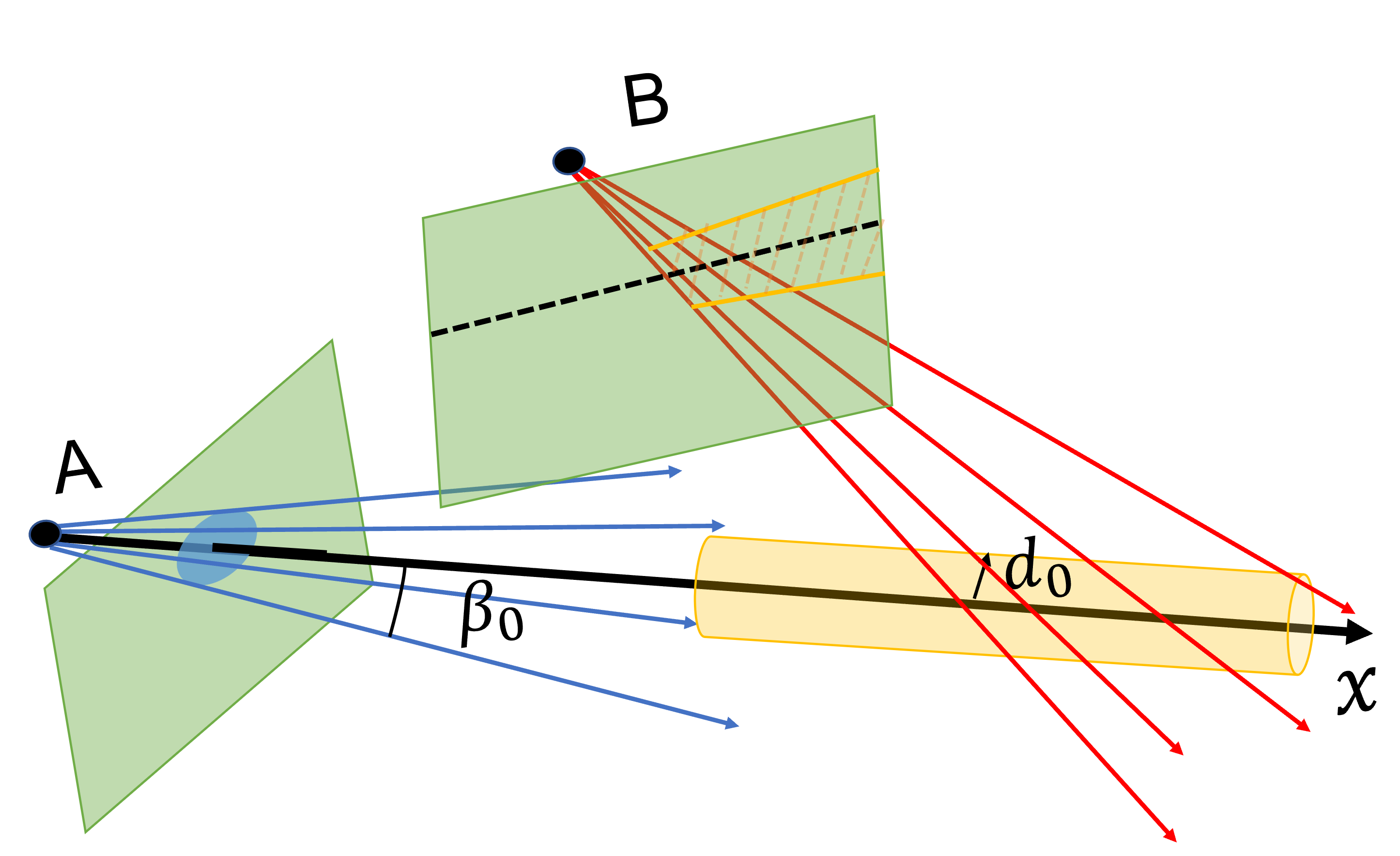}
   \caption{For simplification, we show a situation of two views. For a ray $x$ from view A, one part of the neighboring rays is from view A (the blue rays in the figure), $\mathcal{N}_A(x)$. For any ray $y \in \mathcal{N}_A(x)$, we have $d(y,x)=0$, and we require $\angle(\bm{d}_y,\bm{d}_x) \leq \beta_0$. The other part is from the other view B (the red rays in the figure). As illustrated in figure %
   \ref{fig:lightconv}, the neighboring rays always cross a cylinder around $x$; therefore, the neighboring rays from view B are the projection of the cylinder with radius $r=d_0$ in view B, that is, $\mathcal{N}_B$ is composed of the neighboring pixels of the epipolar line (the black dotted dash) corresponding to $x$ in view B. For any ray $y$ in the projection of the cylinder, we have $d(y,x) \neq d_0$. Since we require that $\angle(\bm{d}_y,\bm{d}_x) \leq \beta_0$ for any ray $y \in \mathcal{N}_B(x)$, $\mathcal{N}_B(x)$ is part of the projection of the cylinder, denoted as the shaded yellow part in view B.}
   \label{fig:mvconv}
\end{figure}

\subsubsection{From intra-view light field to spherical convolution}
After showing that a small kernel support in the case of sparse views affects only intra-view rays, we can prove that an intra-view light-field convolution is equivalent to a spherical convolution when we constrain the feature field types over $\mathcal{R}$.

We exploit the desired property that a feature defined on a ray is constant along the ray. This means that the translation part of the stabilizer group (translation along the ray) leaves the feature as is. In math terms, 
the irreducible representation for the translation $\mathbb{R}$ is the identity, which means that the field function is a scalar field for the translation group, with the formula $(\mathcal{L}_t f)(x) = f(t^{-1}x)$. We prove that, in this case, the intra-view convolution over rays is equivalent to the spherical convolution; please see Sec. \ref{light2sphere}.

\subsubsection{From SO(3)- to SE(2)-convolution}
While there is an established framework for 
spherical convolution using a Fourier transform \cite{cohen2018spherical,esteves2018learning, esteves2020spin}
it is not applicable in our case because the boundaries of the constrained field of view cause an explosion in the high frequencies of the spherical harmonics. We will make a compromise here and approximate the SO(3) convolution with an SE(2) convolution on the image plane by making the assumption that the field of view is small. One can see the rationale behind this approximation by keeping only the first order terms in the optical flow equation: the rotational term is only due to $\Omega_z$ while the translational term is $(-T_x-\Omega_y,-T_y+\Omega_x)$ with $(\Omega_x, \Omega_y,\Omega_z)$ as the angular velocity. We  provide  a  justification using the formalism of the previous paragraphs in appendix Sec. \ref{just_se2}.

\subsection{Ray Fusion: Equivariant Convolution and Transformer}
 To reconstruct a 3D object, we use an implicit function known as the signed distance function (SDF) defined on $\mathbb{R}^3$. As a result, we require an equivariant model that can transform features from rays to points to obtain the SDF. This can be achieved using the equivariant convolution in Sec. \ref{equifuse}
 and transformer in Sec. 3.2 in the paper \ref{equi_tr_over_rays}, 
 which allow us to transform features from the ray space to points in 3D space while maintaining equivariance.

\subsubsection{Equivariant Convolution from Rays to Points}
In this paper, we obtain the scalar feature field over rays after the SE(2)-equivaraint CNNs. As illustrated in figure %
\ref{fig:pipeline}
, we utilize the equivariant convolution (discussed in Sec. %
\ref{equifuse}
) to compute features for a query point by convolving over neighboring rays. Our experiments have shown that convolving only over rays that go through the point achieves the best results, and the equivariant kernel used for this convolution is provided in Ex.\ref{filter_solution2point}. Moreover, in the implementation, we can concatenate the input feature $f^{in}_1$ with the depth embedding of the query point $x$. While this theoretically breaks the ideal equivariance for continuous light fields, it does not affect the practical equivariance, as it is rare for two cameras to share the same ray.

\subsubsection{Equivariant Transformer from Rays to Points}
For the third step, we introduce an equivariant transformer in order to  alleviate the loss of expressivity due to the constrained 
 kernel $\kappa$ in Eq. \ref{light2r3}. Again, the attention key and values are generated from the feature attached to rays, while the query is generated from the feature attached to points.

In the implementation, we apply a transformer over the
rays going through the query point. We can continue to use the interpretation that treats any ray $y$ passing through the point $x$ as a point $y'$ such that $y'-x =\bm{d}_{s_2(x)^{-1}y}$, as shown in figure \ref{figure:convertion}. Since $y$ becomes point $y'$, the ray feature $f^{in}_1$ becomes the feature over $\mathbb{R}^3$ attached to ``points" $y'$. We can update the neighboring ray feature by directly concatenating the equivariant feature of the point to every ray feature before through a $SO(3)$ equivariant MLP. The transformer in Eq. %
\ref{paper:transformer} 
would be converted to the transformer in \cite{fuchs2020se} over $\mathbb{R}^3$. See appendix Sec. \ref{r3_trans} for details. The composition of the ray updating block and transformer block are shown in figure \ref{fig:transformer}.

\section{Proof of Equivalence of Intra-view Light Field Convolution and Spherical Convolution}
\label{light2sphere}
The property that a feature defined on a ray is constant along the ray means that the translation part of the stabilizer group (translation along the ray) leaves the feature as is. In math terms, 
the irreducible representation for the translation $\mathbb{R}$ is the identity, which means that the field function is a scalar field for the translation group, with the formula $(\mathcal{L}_t f)(x) = f(t^{-1}x)$. The equivariant condition on the kernel can then be simplified as 
$$\kappa((h,t)x)=\rho_{out}(h)\kappa(x)\rho_{in}(\text{h}_a^{-1}(h,\bm{d}_x)),$$
where $h \in SO(2)$ and $t \in \mathbb{R}$, $\rho_{in}$ and $\rho_{out}$ are irreducible representations for $SO(2)$, and $\text{h}_a$ is the twist function as shown in Ex. \ref{SE3_ex_sec_ray_space} that $\text{h}(g,x) =(\text{h}_a(R_g,\bm{d}_x), \text{h}_b(g,x))$,i.e., the twist of the fiber introduced by action of $SO(3)$ corresponding to the section map $s_a$ of $SO(3)$ in Ex. \ref{SO3_ex_sec} and Ex. \ref{SE3_ex_sec_ray_space}.
Now we describe the relationship between the intra-view light-field convolution and the spherical convolution:
\begin{proposition}
When the translation group acts on feature $f: \mathcal{R} \rightarrow V$ as $(\mathcal{L}_tf)(x)=f(t^{-1}x)$ for any $x \in \mathcal{R}$, the equivariant intra-view light-field convolution: 
$$f^{l_{out}}(x)=\int_{y \in \mathcal{N}(x)} \kappa(s(x)^{-1}y)\rho_{in}(\text{h}(s(x)^{-1}s(y)))f^{l_{in}}(y)dy$$
becomes a spherical convolution:
\begin{align}
    &f^{l_{out}}(x)\nonumber=
     \int_{\bm{d}_y \in \mathbb{S}^2} \kappa'(s_a(\bm{d}_x)^{-1}\bm{d}_y)\rho_{in}(\text{h}_a(s_a(\bm{d}_x)^{-1}s_a(\bm{d}_y)))\\
    & f'^{l_{in}}(\bm{d}_y)d\bm{d}_y,
    \label{spherical_conv}
\end{align}
where 
 $f'^{l_{in}}(\bm{d}_y) = f^{l_{in}}(\bm{d}_y,\bm{c}_x \times\bm{d}_y)$, $\bm{c}_x$ denotes the camera center that $x$ goes through, $s_a$ is the section map of $SO(3)$ as defined in appendix Ex. \ref{SO3_ex_sec}, and $\kappa'(s_a(\bm{d}_x)^{-1}\bm{d}_y)=\kappa(s_a(\bm{d}_x)^{-1}\bm{d}_y, (s(x)^{-1}\bm{x}_c) \times (s_a(\bm{d}_x)^{-1}\bm{d}_y))$.

\end{proposition}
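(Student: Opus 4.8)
The plan is to specialize the general equivariant convolution from $\mathcal{R}$ to $\mathcal{R}$ (Eq.~\ref{paper:convolution}) to the situation where (i) the kernel support has been reduced to intra-view rays, so that every $y \in \mathcal{N}(x)$ shares the camera center $\bm{c}_x$ of $x$, and (ii) the input field is a scalar field for the translation subgroup, i.e.\ constant along each ray. First I would parameterize the intra-view neighborhood: any ray $y$ through $\bm{c}_x$ is determined by its direction $\bm{d}_y \in \mathbb{S}^2$, with moment $\bm{m}_y = \bm{c}_x \times \bm{d}_y$; so the integral $\int_{\mathcal{N}(x)} (\cdot)\, dy$ over rays becomes an integral $\int_{\mathbb{S}^2} (\cdot)\, d\bm{d}_y$ over the sphere, up to the (constant, hence harmless) Jacobian of this reparameterization. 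Under this identification the ray field $f^{l_{in}}$ restricted to rays through $\bm{c}_x$ pulls back to the spherical field $f'^{l_{in}}(\bm{d}_y) = f^{l_{in}}(\bm{d}_y, \bm{c}_x \times \bm{d}_y)$ declared in the statement.

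Next I would chase the two group-theoretic ingredients through the section map $s$ of Ex.~\ref{SE3_ex_sec_ray_space}, which factors as $s((\bm{d},\bm{m})) = (s_a(\bm{d}), \bm{d}\times\bm{m})$ with $s_a$ the $SO(3)$-section of Ex.~\ref{SO3_ex_sec}. The twist argument $\text{h}(s(x)^{-1}s(y))$: since $\text{h}$ factors as $(\text{h}_a(R_g,\bm{d}), \text{h}_b(g,x))$ and $\rho_{in}$ is trivial on the translation factor (the ray-constancy hypothesis), only $\text{h}_a(s_a(\bm{d}_x)^{-1}s_a(\bm{d}_y))$ survives inside $\rho_{in}$ — this is exactly the spherical twist. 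The kernel argument $s(x)^{-1}y$: writing $s(x)^{-1} = (s_a(\bm{d}_x)^{-1}, \ast)$ and applying the ray group action Eq.~\ref{group action ray space} to $y = (\bm{d}_y, \bm{c}_x\times\bm{d}_y)$ gives a ray whose direction is $s_a(\bm{d}_x)^{-1}\bm{d}_y$ and whose moment is a fixed expression in $s_a(\bm{d}_x)^{-1}$ and $\bm{c}_x$; defining $\kappa'(s_a(\bm{d}_x)^{-1}\bm{d}_y) := \kappa\big(s_a(\bm{d}_x)^{-1}\bm{d}_y,\, (s(x)^{-1}\bm{x}_c)\times(s_a(\bm{d}_x)^{-1}\bm{d}_y)\big)$ absorbs this, and one checks $\kappa'$ is well-defined (single-valued in its $\mathbb{S}^2$-argument) precisely because the original kernel constraint $\kappa(hx)=\rho_{out}(h)\kappa(x)\rho_{in}(\text{h}^{-1}(h,x))$ together with ray-constancy forces the moment dependence to be the stated function of direction. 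Substituting all three observations into Eq.~\ref{paper:convolution} yields exactly Eq.~\ref{spherical_conv}.

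The main obstacle I expect is the bookkeeping in the kernel substitution step: one must verify that, after restricting to a single camera center, the moment of $s(x)^{-1}y$ is genuinely a function of the direction $s_a(\bm{d}_x)^{-1}\bm{d}_y$ alone (so that $\kappa'$ is a bona fide function on $\mathbb{S}^2$ rather than on $\mathcal{R}$), and that this function is consistent with the equivariance constraint on $\kappa'$ required for a spherical convolution, namely $\kappa'(h\bm{d}) = \rho_{out}(h)\kappa'(\bm{d})\rho_{in}(\text{h}_a^{-1}(h,\bm{d}))$ for $h\in SO(2)$ — this should follow from the already-derived $\mathcal{R}$-kernel constraint by projecting onto the $SO(2)$ factor and using that $\rho_{in}$ kills translations. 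The remaining pieces (reparameterizing the integration domain, pulling back the field, the measure factor) are routine and I would only state them.
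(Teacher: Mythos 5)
Your proposal follows essentially the same route as the paper's proof: restrict the neighborhood to rays through the shared camera center, reparameterize the integral over $\mathbb{S}^2$ via $y=(\bm{d}_y,\bm{c}_x\times\bm{d}_y)$, drop the translation part of the twist because $\rho_{in}$ is trivial on it, and absorb the moment of $s(x)^{-1}y$ into a kernel $\kappa'$ on the sphere. The only point to sharpen in your ``main obstacle'' step is that the moment of $s(x)^{-1}y$ is \emph{not} literally a function of the direction alone (it carries the factor $s(x)^{-1}\bm{c}_x=[0,0,z_x]^T$, which varies with $x$); what makes $\kappa'$ well defined is the $\mathbb{R}$-part of the kernel constraint, $\kappa((e,t)x)=\kappa(x)$, which --- since $s(x)^{-1}\bm{c}_x$ lies on the $z$-axis --- lets you translate that point to the origin and conclude $\kappa(u,(s(x)^{-1}\bm{c}_x)\times u)=\kappa(u,\bm{0})$, exactly the computation the paper carries out.
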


\begin{proof}
The $SE(3)$ equivariant convolution over rays transforms into intra-view convolution when the neighboring lights are in the same view. Moreover, the simplified kernel constraint derived in the paper is that for any $(h,t) \in SO(2)\times \mathbb{R}$ and $x= (\bm{d}_x,\bm{m}_x) \in \mathcal{R}$ : 
$$\kappa((h,t)x)=\rho_{out}(h)\kappa(x)\rho_{in}(\text{h}_a^{-1}(h,\bm{d}_x)),$$ where $\text{h}_a: SO(3) \times \mathbb{S}^2 \rightarrow SO(2)$ is the twist function: $\text{h}_{a}(g,\bm{d})=s_a(g\bm{d})^{-1}gs_a(\bm{d})$ for any $g \in SO(3)$ and $\bm{d} \in \mathbb{S}^2$.

With the simplified kernel constraint, we can prove that intra-view light field convolution is equivalent to spherical convolution:

\begin{align}
    &f^{l_{out}}(x)\nonumber\\
    &=\int_{d(y, \bm{c}_x)=0} \kappa(s(x)^{-1}y)\rho_{in}(\text{h}(s(x)^{-1}s(y)))f^{l_{in}}(y)dy \label{ray_conv}\\
    &=\int_{d(y, \bm{c}_x)=0} \kappa(s(x)^{-1}y)\rho_{in}(\text{h}_a(s_a(\bm{d}_x)^{-1}s_a(\bm{d}_y)))f^{l_{in}}(y)dy
    \label{simple_ray_conv}\\
     &=\int_{\bm{d}_y \in \mathbb{S}^2} \kappa(s_a(\bm{d}_x)^{-1}\bm{d}_y, s(x)^{-1}\bm{x}_c \times (s_a(\bm{d}_x)^{-1}\bm{d}_y) )\nonumber\\
    &\qquad \rho_{in}(\text{h}_a(s_a(\bm{d}_x)^{-1}s_a(\bm{d}_y)))f^{l_{in}}(\bm{d}_y,\bm{c}_x \times\bm{d}_y )d\bm{d}_y
    \label{simple_ray_conv2}\\
    &= \int_{\bm{d}_y \in \mathbb{S}^2} \kappa'(s_a(\bm{d}(x))^{-1}\bm{d}_y)\rho_{in}(\text{h}_a(s_a(\bm{d}_x)^{-1}s_a(\bm{d}_y))) \nonumber\\
    &\qquad f'^{l_{in}}(\bm{d}_y)d\bm{d}_y.
    \label{simple_ray_conv3}
\end{align}
In line \ref{ray_conv}, $\bm{c}_x$ is the camera center that $x$ goes through.

The line \ref{ray_conv} is equal to the line \ref{simple_ray_conv} because we assume that 
 the irreducible
representation for the translation $\mathbb{R}$ is the identity as mentioned in the paper.

From line \ref{simple_ray_conv} to line \ref{simple_ray_conv2}, We can replace $s(x)^{-1}y$ with $$(s_a(\bm{d}_x)^{-1}\bm{d}_y, (s(x)^{-1}\bm{x}_c) \times (s_a(\bm{d}_x)^{-1}\bm{d}_y))$$
due to the facts that $s_a(\bm{d}_x)^{-1}\bm{d}_y =\bm{d}_{s(x)^{-1}y}$ and point $s(x)^{-1}\bm{x}_c$ is on the ray $s(x)^{-1}y$. Since $y$ goes through $\bm{c}_x$, we can replace $y$ with $(\bm{d}_y,\bm{c}_x \times\bm{d}_y)$. 

From line \ref{simple_ray_conv2} to \ref{simple_ray_conv3}, we have $f'^{l_{in}}(\bm{d}_y) = f^{l_{in}}(\bm{d}_y,\bm{c}_x \times\bm{d}_y)$ because  $\bm{c}_x$ is fixed for any view. Additionally, from line \ref{simple_ray_conv2} to \ref{simple_ray_conv3} we replace $$\kappa(s_a(\bm{d}_x)^{-1}\bm{d}_y, (s(x)^{-1}\bm{x}_c) \times (s_a(\bm{d}_x)^{-1}\bm{d}_y))$$ with $\kappa'(s_a(\bm{d}_x)^{-1}\bm{d}_y)$. 
It is because according to 
$$\kappa((h,t)x)=\rho_{out}(h)\kappa(x)\rho_{in}(\text{h}_a^{-1}(h,\bm{d}_x)),$$
we have $\kappa((e,t)x)=\kappa(x)$ for any $t \in \mathbb{R}$, where $e$ is the identity element in $SO(2)$; thus when $t= ((-s(x)^{-1} \bm{x}_c))^T[0,0,1]^T$, we have 
\begin{align}
&\kappa(s_a(x)^{-1}\bm{d}_y, s(x)^{-1}\bm{x}_c \times (s_a(x)^{-1}\bm{d}_y)) \nonumber\\
&= \kappa(s_a(x)^{-1}\bm{d}_y, (s(x)^{-1}\bm{x}_c +t[0,0,1]^T)\times (s_a(x)^{-1}\bm{d}_y))\label{equi_a}\\
&=\kappa((s_a(x)^{-1}\bm{d}_y, [0,0,0]^T)\label{equi_b}\\
&=\kappa'((s_a(x)^{-1}\bm{d}_y).\nonumber
\end{align}
Line \ref{equi_a} is equal to \ref{equi_b} because $s(x)^{-1}\bm{x_c}$ is always on the $z$ axis, and thus $s(x)^{-1}\bm{x}_c +t[0,0,1]^T=[0,0,0]^T$. 
\end{proof}

\section{Spherical Convolution Expressed in Gauge Equivariant Convolution Format }
\label{gauge}
Group convolution is a special case of gauge equivariant convolution \cite{weiler2021coordinate}, where gauge equivariant means the equivariance with respect to the transformation of the section map (transformation of the tangent frame). In the following paragraph we give the elaborated definition of gauge equivariance for the sphere.

 Suppose $f: \mathbb{S}^2 \rightarrow V$ is the field function corresponding to the section choice $s_a:\mathbb{S}^2 \rightarrow SO(3)$, we use $\mathcal{L}_{s_a \rightarrow  s'_a}$ acting on $f$ to denote the change of section map from $s_a$ to $s'_a$: $(\mathcal{L}_{s_a \rightarrow s'_a}f)(x)=\rho(s_a(x)^{-1}s'_a(x))^{-1}f(x)$, where $\rho$ is the irreducible representation of $SO(2)$ corresponding to the field type of $f$. The convolution $\Phi$ is gauge equivariant when $\Phi(\mathcal{L}_{s_a \rightarrow s'_a}f) =\mathcal{L}_{s_a \rightarrow s'_a}(\Phi(f))$.
 
In this section, we show that the spherical convolution can be expressed in terms of the gauge equivariant convolution \cite{cohen2019gauge} , which provides the convenience for us to verify the approximation of spherical convolution through the $SE(2)$ convolution:
$$f^{l_{out}}(x) =\int_{y \in \mathcal{N}(x)}\kappa'(s(x)^{-1}y)\rho_{in}(h_{y \rightarrow x})^{-1}f^{l_{in}}(y)dy,$$
 where $\kappa'(hx)=\rho_{out}(h)\kappa'(x) \rho_{in}^{-1}(h)$ for any $h \in SO(2)$.

Since the focus of this section's discussion is spherical convolution, here we use $s(x)$ to denote $s_a(x)$ for any $x \in \mathbb{S}^2$.

For any $x, y\in \mathbb{S}^2$, $s(x)[1,0,0]^T$, $s(x)[0,1,0]^T$ attached to $x$ are tangent vectors on $x$, we parallel transport $s(x)[1,0,0]^T$ and  $s(x)[0,1,0]^T$ along the geodesic between $x$ and $y$ and get two tangent vectors on $y$, denoted as $s(x \rightarrow y)_1$ and $s(x \rightarrow y)_2$ as shown in the figure \ref{fig:gauge_equi}, where the parallel transport along a smooth curve  is a way
to translate a vector ``parallelly" based on the affine connection, that is, for a smooth curve $\gamma: [0,1] \rightarrow \mathbb{S}^2$, the  parallel transport $X: \text{Im}(\gamma) \rightarrow \mathcal{T}\mathbb{S}^2$ along the curve $\gamma$ satisfies that $\nabla_{\dot{\gamma}(t)}X=0$, where $\text{Im}(\gamma)=\left\{\gamma(t)|t \in [0,1]\right\}$ and $\nabla$ is the affine connection.

$s(x \rightarrow y)_1$ and $s(x \rightarrow y)_2$ need to undergo a transformation in $SO(2)$ to align with $s(y)[1,0,0]^T$ and $s(y)[0,1,0]^T$ on y as shown in the figure \ref{fig:gauge_equi}.  We denote the transformation as $h_{x \rightarrow y}$.
\begin{figure}[t]
  \centering
   \includegraphics[width=0.9\linewidth]{./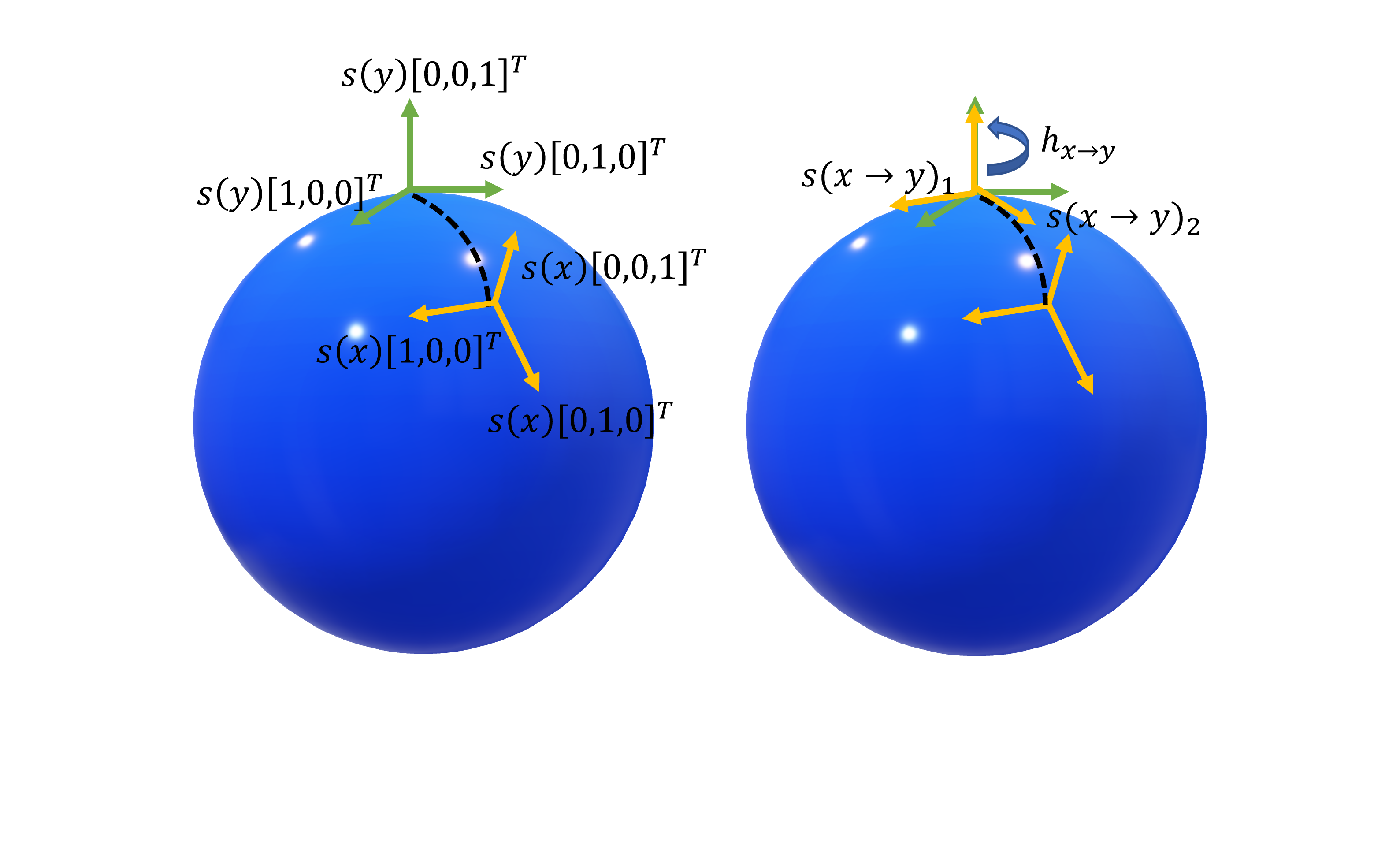}
   \caption{Illustration of $h_{x \rightarrow y}$. $s(x)[1,0,0]^T$ and  $s(x)[0,1,0]^T$ (yellow) attached to $x$ are tangent vectors on $x$.  We parallel transport $s(x)[1,0,0]^T$ and  $s(x)[0,1,0]^T$ along the geodesic (black dashed line) between $x$ and $y$. The transported tangent vectors need to undergo a transformation $h_{x \rightarrow y}$ in $SO(2)$ to align with the vectors $s(y)[1,0,0]^T$ and  $s(y)[0,1,0]^T$ (green) attached to $y$.}
   \label{fig:gauge_equi}
\end{figure}

With the above notation, the spherical convolution can be expressed as: 
\begin{align*}
f^{l_{out}}(x) 
&=\int_{y \in \mathcal{N}(x)}\kappa(s(x)^{-1}y)\rho_{in}(\text{h}(s(x)^{-1}s(y)))f^{l_{in}}(y)dy\\
&= \int_{y \in \mathcal{N}(x)} \kappa(s(x)^{-1}y)\rho_{in}(h_{s(x)^{-1}y \rightarrow\eta})\\
&\rho_{in}(h_{s(x)^{-1}y \rightarrow\eta})
^{-1}
\rho_{in}(\text{h}(s(x)^{-1}s(y))f^{l_{in}}(y)dy\\
&=\int_{y \in \mathcal{N}(x)} \kappa(s(x)^{-1}y)\rho_{in}(h_{s(x)^{-1}y \rightarrow\eta})\\
&\rho_{in}(h_{y \rightarrow x})^{-1}f^{l_{in}}(y)dy\\
&=\int_{y \in \mathcal{N}(x)}\kappa'(s(x)^{-1}y)\rho_{in}(h_{y \rightarrow x})^{-1}f^{l_{in}}(y)dy,
\end{align*}
where $\eta =[0,0,1]^T$, the fixed origin point in $\mathbb{S}^2$, and $\kappa'(x)=\kappa(x)\rho_{in}(h_{x \rightarrow\eta})^{-1}$ for any $x \in \mathcal{N}(\eta)$.

 We can derive the equivariant condition that $\kappa'$ should satisfy: 
 \begin{align*}
 \kappa'(hx)&=\kappa(hx)\rho_{in}(h_{hx \rightarrow \eta})^{-1}\\
 &=\rho_{out}(h)\kappa(x)\rho_{in}(\text{h}(h,x))^{-1}\rho_{in}(h_{hx \rightarrow \eta})\\
 &=\rho_{out}(h)\kappa(x) \rho_{in} (h_{x\rightarrow \eta})^{-1}\rho_{in}(h^{-1})\\
 &= \rho_{out}(h)\kappa'(x) \rho_{in}^{-1}(h).
 \end{align*}
Therefore, the spherical convolution can be expressed as the gauge equivariant convolution format:

$$f^{l_{out}}(x) =\int_{y \in \mathcal{N}(x)}\kappa'(s(x)^{-1}y)\rho_{in}(h_{y \rightarrow x})^{-1}f^{l_{in}}(y)dy,$$
 where $\kappa'(hx)=\rho_{out}(h)\kappa'(x) \rho_{in}^{-1}(h)$ for any $h \in SO(2)$.

\section{Converting Spherical Convolution to $SE(2)$ Equivariant Convolution}
\label{just_se2}
As stated in Sec. \ref{gauge}, spherical convolution is gauge equivariant with respect to the choice of section map $s_a$, and the spherical convolution can be written as gauge equivariant convolution.  In this section, we use the gauge equivariant convolution to analyze the $SE(2)$ equivariant convolution's approximation of spherical convolution.

Since each view performs spherical convolution on its own, we only analyze the convolution for one view for the sake of simplicity. We use $V$ to denote the space of the rays in the same view, where $V \subset \mathbb{S}^2$. For any $x \in V$, 
we can choose the section map $s_a$ such that $h_{x\rightarrow o}=e$, where $o \in \mathbb{S}^2$  that $o$ aligns with the optical axis as shown in the figure \ref{fig:section_choice}. Again, we use $s(x)$ to denote $s_a(x)$ for any $x \in \mathbb{S}^2$ in this section.

\begin{figure}[t]
  \centering
   \includegraphics[width=0.8\linewidth]{./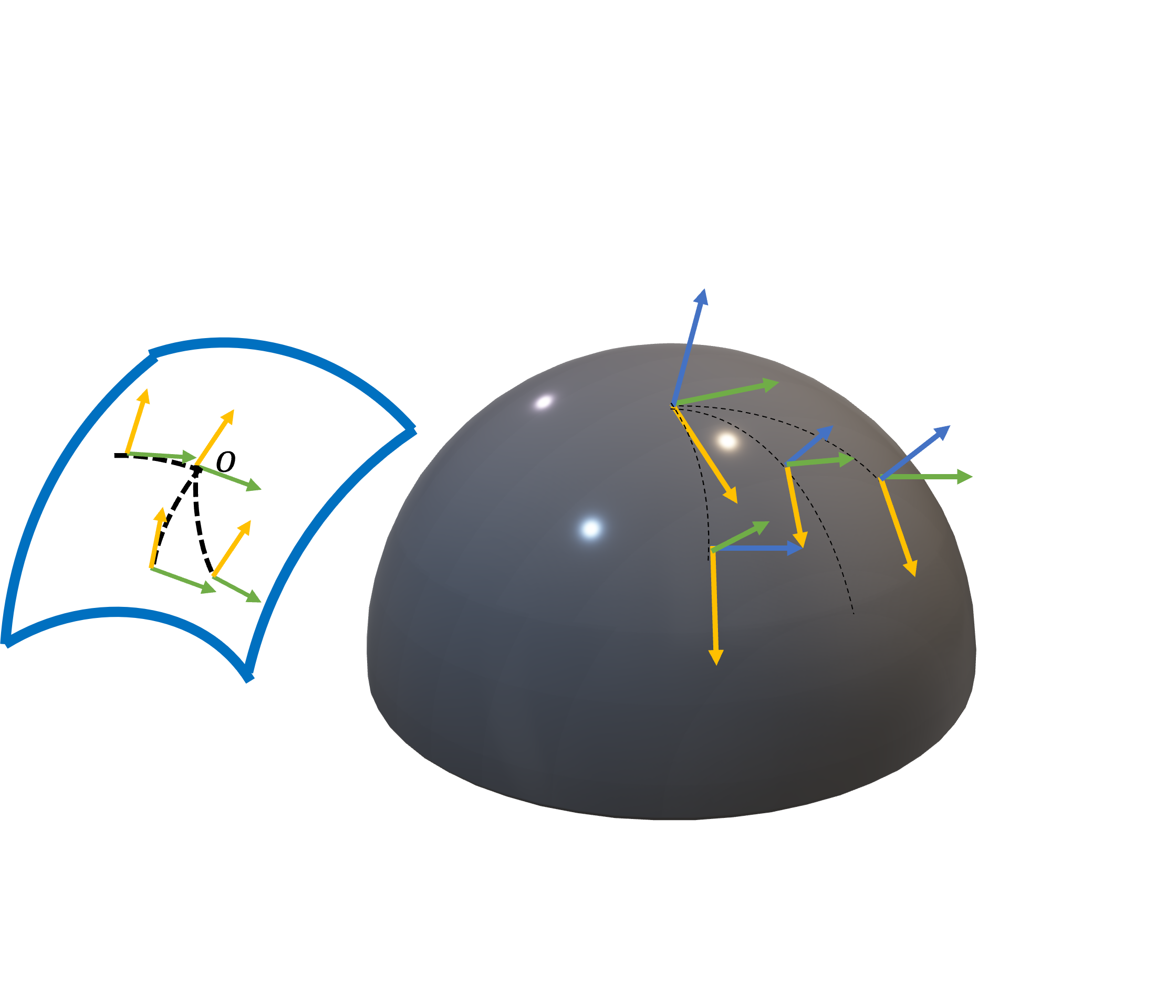}
   \caption{Section choice for every view}
   \label{fig:section_choice}
\end{figure}

When $FOV$ is small, for any $x,y \in V$, we can have such approximation: $h_{x \rightarrow y}=e$. Then the above gauge equivariant convolution in Sec. \ref{gauge} can be approximated as 
\begin{align*}
f^{l_{out}}(x)& =\int_{y \in \mathcal{N}(x)}\kappa'(s(x)^{-1}y)f^{l_{in}}(y)dy\\
&\xlongequal{t=s(x)^{-1}y}\int_{t \in \mathcal{N}(\eta)}\kappa'(t)f^{l_{in}}(s(x)t)dt,
\end{align*}
 where $\eta=[0,0,1]^{T}$, the fixed origin in $\mathbb{S}^2$, and $\kappa'(hx)=\rho_{out}(h)\kappa'(x) \rho_{in}^{-1}(h)$ for any $h \in SO(2)$.

Additionally, as illustrated in figure \ref{fig:projection_map}, we have a map from $V$ to the projection points on the picture plane represented as $\omega: V \rightarrow \mathbb{R}^2$, where $\omega(o)$ is defined as $[0, 0]^T$.   When $FOV$ is small, we have such approximation that for any $h \in SO(2)$,
$t \in \mathcal{N}(\eta)$, and $x \in V$, 
$$\omega(s(x)t) \approx \omega(x)+\omega(s(o)t).$$
It is because
\begin{align*}
&\omega(s(x)t)= \omega(x)+\omega(s(o)t)\\
&+ r(\frac{sin\beta_t}{cos\beta_t}-\frac{sin\beta_t}{cos\beta_xcos(\beta_x+\beta_t)}), 
\end{align*}
and we have
\begin{align*}
&lim_{t \rightarrow \eta}r(\frac{sin\beta_t}{cos\beta_t}-\frac{sin\beta_t}{cos\beta_xcos(\beta_x+\beta_t)}) \\
&=r(tan\beta_x)^2\beta_t +o(\beta^2_t), 
\end{align*}
when $\beta_x$ is small (FOV is small), the approximation stands.

Then $f^{l_{out}}(x) =\kappa'(t)f^{l_{in}}(s(x)t)dt$ can be approximately conducted in the image plane:
\begin{align}
&f'^{l_{out}}(\omega(x)) \nonumber\\
&=\int_{\omega(s(o)t) \in \mathcal{N}([0,0]^T)}\kappa''(\omega(s(o)t)) f'^{l_{in}}(\omega(x)+\omega(s(o)t)) \nonumber \\
&\qquad d(\omega(s(o)t)),
\label{SE(2)}
\end{align}
where for any $x \in \mathbb{S}^2$, $f'(\omega(x))=f(x)$, and for any $t \in \mathcal{N}(\eta)$,
 $\kappa''(\omega(s(o)t))=\kappa'(t)$.

Since for any $h \in SO(2)$  and any $t \in \mathcal{N}(\eta)$, $\omega(s(o)ht)=h\omega(s(o)t)$, we have for any  $h \in SO(2)$  and any $t \in \mathcal{N}(\eta)$,
\begin{align*}
&\kappa''(h\omega(s(o)t))=\kappa''(\omega(s(o)ht))=\kappa'(ht)\\
&=\rho_{out}(h)\kappa'(t)\rho_{in}^{-1}(h)=\rho_{out}(h) \kappa''(s(o)t)\rho_{in}^{-1}(h)\\
&\xlongequal{p=\omega(s(o)t) \in \mathbb{R}^2}k''(hp)\\
&=\rho_{out}(h) \kappa''(p)\rho_{in}^{-1}(h).
\end{align*}
Therefore, convolution \ref{SE(2)} is exactly $SE(2)$ equivariant convolution and it can be used to approximate the spherical convolution.

 In other words, we can intuitively approximate the equivariant convolution over the partial sphere using the $SE(2)$ equivariant network when the distortion of the sphere and the tangent plane of the optical axis is modest.

\begin{figure}[t]
  \centering
   \includegraphics[width=0.4\linewidth]{./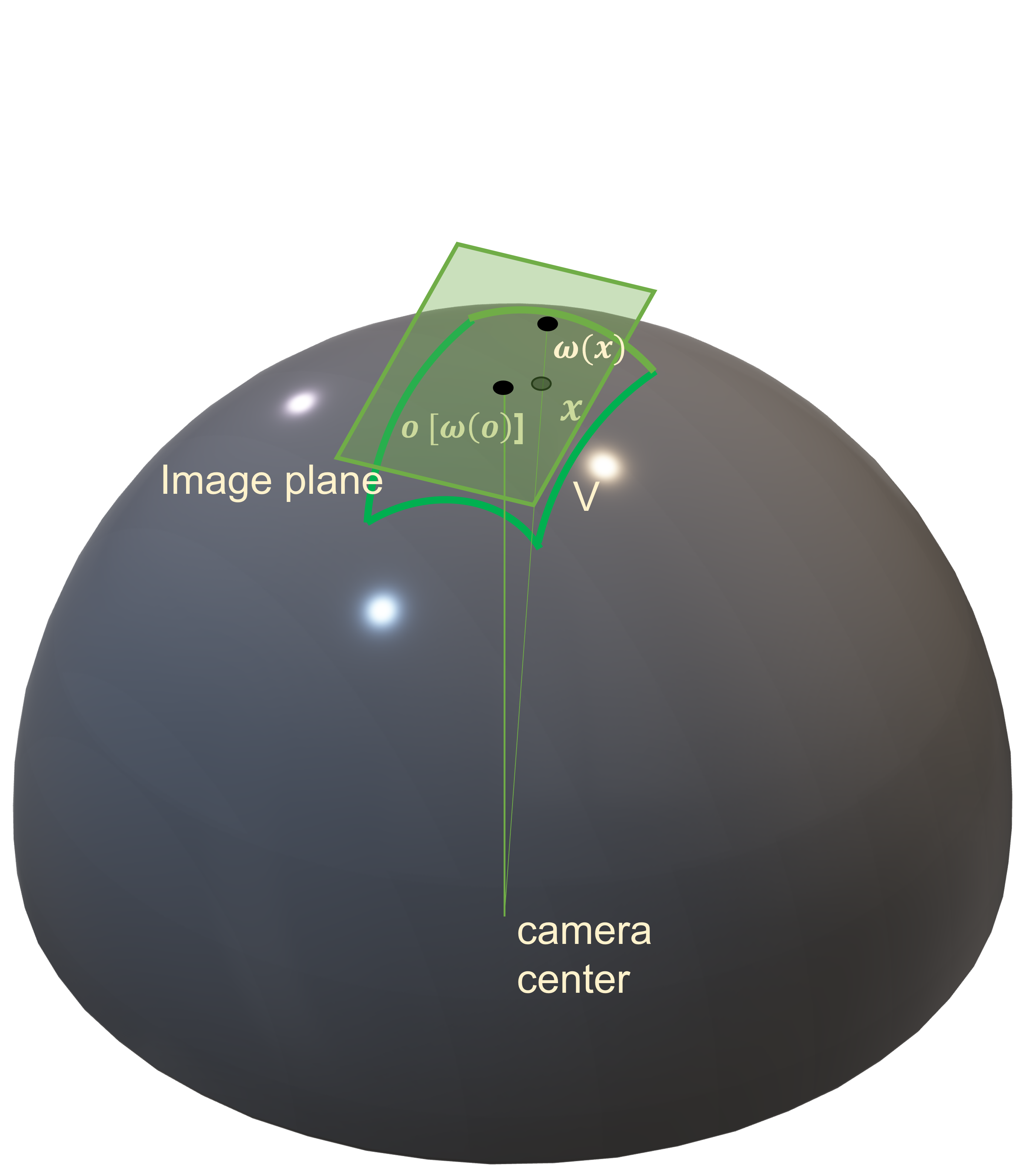}
   \caption{Illustration of projection map $\omega$}
   \label{fig:projection_map}
\end{figure}

\section{Construction of Features in Equivariant Light Field Transformer}
\label{construction of features in transformer}
Noted that $f_2^{out}$, $f_2^{in}$ and $f_1^{in}$ are features that are composed of fields of different types, denoted as $f_2^{out}=\oplus_{i}f_2^{l_{out_i}}$, $f_2^{in}=\oplus_{i}f_2^{l_{in_i}}$, and $f_1^{in}=\oplus_{i}f_1^{l'_{in_i}}$\footnote{Since here the homogeneous spaces of input and output might be different, so as the stabilizer groups, we use $l$ and $l'$ to denote the representations of different stabilizer groups.}. $f_k$, $f_q$, and $f_v$ are constructed equivariant key features, query features, and value features, respectively, which are composed of fields of different types as well. 

 We use $f_k = \oplus_{i} f_k^{l_{k_i}}$, $f_q = \oplus_{i} f_q^{l_{k_i}}$, and  $f_v = \oplus_{i} f_v^{l_{v_i}}$ to denote $f_k$, $f_q$ and $f_v$, respectively.
We construct the features $f_k$,$f_q$ and $f_v$ through the equivariant kernels $\kappa_k =\oplus_{j,i}\kappa_k^{l_{k_j},l'_{in_i}}$, $\kappa_v=\oplus_{j,i}\kappa_v^{l_{v_j},l'_{in_i}}$ and equivariant matrix $W_q=\oplus_{j,i} W_q^{l_{k_j},l_{in_i}}$:
\begin{align}
&f^{l_{k_j}}_k(x,y,f_1^{in}) \nonumber\\
&=\sum_{i}\kappa_k^{l_{k_j},l'_{in_i}}(s_2(x)^{-1}y)\rho_1^{l'_{in_i}}(\text{h}_1(s_2(x)^{-1}s_1(y)))f_1^{l'_{in_i}}(y);\\
&f^{l_{v_j}}_v(x,y,f_1^{in}) \nonumber \\
&=\sum_{i}\kappa_v^{l_{v_j},l'_{in_i}}(s_2(x)^{-1}y)\rho_1^{l'_{in_i}}(\text{h}_1(s_2(x)^{-1}s_1(y)))f_1^{l'_{in_i}}(y);\\
&f^{l_{k_j}}_q(x,f_2^{in})=\sum_i W_q^{l_{k_j},l_{in_i}}f_2^{l_{in_i}}(x),
\label{construction_of_fq}
\end{align} 

where for any $i,j$, any $h_2 \in SO(3)$, and any $x \in \mathcal{R}$
$\kappa_k^{l_{k_j},l'_{in_i}}$ and 
$\kappa_v^{l_{v_j},l'_{in_i}}$ should satisfy that:
$$\kappa_k^{l_{k_j},l'_{in_i}}(h_2x)=\rho_2^{l_{k_j}}(h_2)\kappa_k^{l_{k_j},l'_{in_i}}(x)\rho_1^{l'_{in_i}}(\text{h}^{-1}_1(h_2,x));$$
$$\kappa_v^{l_{v_j},l'_{in_i}}(h_2x)=\rho_2^{l_{v_j}}(h_2)\kappa_v^{l_{v_j},l'_{in_i}}(x)\rho_1^{l'_{in_i}}(\text{h}^{-1}_1(h_2,x)),$$

where $\text{h}_1(h_2,x)=s_1(h_2x)^{-1}h_2s_1(x)$ is the twist function, and for any $i,j$ and any $h_2 \in SO(3)$, $W_q^{l_{k_j},l_{in_i}}$ satisfies that:

\begin{align}
\rho_2^{l_{k_j}}(h_2)W_q^{l_{k_j},l_{in_i}}=W_q^{l_{k_j},l_{in_i}}\rho_1^{l_{in_i}}(h_2).
\label{Q construction}
\end{align}

When the group representation is irreducible representation, due to Schur's Lemma, we have  $W_q^{l_{k_j},l_{in_i}}=cI$ when $l_{k_j}=l_{in_i}$, where $c$ is an arbitrary real number, otherwise $W_q^{l_{k_j},l_{in_i}}=\bm{0}$.

\section{Proof for Equivariance of Light Field Transformer}
\label{proof_trans}
The equivariant light field transformer defined in the paper reads:
\begin{align}
&f^{out}_2(x) \nonumber \\
= &\sum_{y \in \mathcal{N}(x)} \frac{exp(\langle f_q(x,f^{in}_2), f_k(x,y,f^{in}_1)\rangle)}{\sum_{y \in \mathcal{N}(x)}exp(\langle f_q(x,f^{in}_2) f_k(x,y,f^{in}_1)\rangle} \nonumber \\
&f_v(x,y,f^{in}_1)) 
\label{transformer1}
\end{align}
is in a general form. %

According to \cite{cohen2019general}, one can prove that 
$f_q$, $f_k$ and $f_v$ are equivariant, 
that is, for any $g \in SE(3)$, $x \in \mathbb{R}^3$ and $y \in \mathcal{R}$, 

$$f^{l_{k_j}}_q(g \cdot x, \mathcal{L}^{in}_g(f_2^{in}))= \rho_2^{l_{k_j}}(\text{h}_2(g^{-1},g\cdot x)^{-1})f^{l_{k_j}}_q(x, f_2^{in});$$

$$f^{l_{k_j}}_k(g \cdot x, g\cdot y, \mathcal{L'}^{in}_g(f_1^{in}))= \rho_2^{l_{k_j}}(\text{h}_2(g^{-1},g\cdot x)^{-1})f^{l_{k_j}}_k(x,y, f_1^{in});$$

$$f^{l_{v_j}}_v(g \cdot x, g\cdot y, \mathcal{L'}^{in}_g(f_1^{in}))= \rho_2^{l_{v_j}}(\text{h}_2(g^{-1},g\cdot x)^{-1})f^{l_{v_j}}_v(x,y, f_1^{in}),$$

where $\mathcal{L}^{in}$ and $\mathcal{L'}^{in}$
 are group action of $SE(3)$ on $f_2^{in}$ and $f_1^{in}$, respectively.

The inner product $\langle f_q, f_k \rangle =
\sum_i (\overline{f_q^{l_{k_i}}})^T f_k^{l_{k_i}}$ is invariant due to the property of unitary representation, which results in the equivariance of the transformer.

 \section{From $SE(3)$ Equivariant Transformer in Ray Space to $SE(3)$ Equivariant Transformer in Euclidean Space}
 \label{r3_trans}
 In our implementation for the reconstruction task, the attention model is always only applied over the rays going through the points. We can continue to use the interpretation in the convolution from ray space to $\mathbb{R}^3$ in Ex. \ref{filter_solution2point} 
 that treats any ray $y$ passing through the point $x$ as a point $y'$ such that $y'-x =\bm{d}_{s_2(x)^{-1}y}$ as shown in the figure \ref{figure:convertion}.

After we get the initial feature of query points through equivariant convolution from $\mathcal{R}$ to $\mathbb{R}^3$, we update the neighboring ray feature by directly concatenating the query point feature to every ray feature before through a $SO(3)$ equivariant MLP as shown in the figure \ref{fig:transformer}. $SO(3)$ equivariant MLP is composed of an equivariant nonlinear layer and self-interaction layer as in the tensor field networks \cite{thomas2018tensor}.

\begin{figure}[t]
  \centering
   \includegraphics[width=0.6\linewidth]{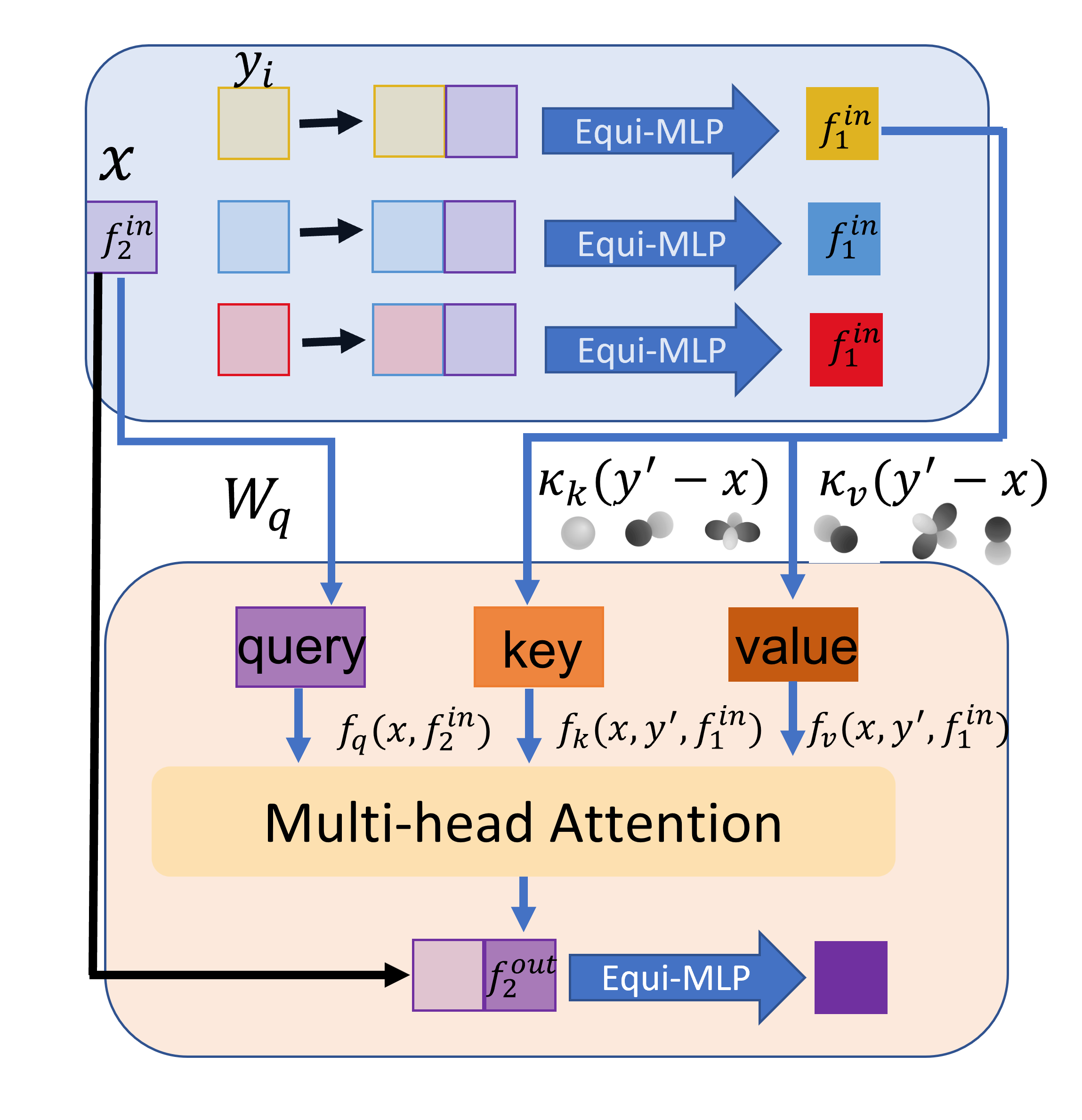}
   \caption{The structure of ray updating and $SE(3)$ transformer. We treat any ray $y$ going through point $x$ as a point $y'  \in \mathbb{R}^3$ such that  $y'-x = \bm{d}_{s_2(x)^{-1}y}$. The blue block indicates the ray feature update, and the pink block is the equivariant attention model. For the ray feature updating, the point feature (lavender) is concatenated to every ray feature (light yellow, light blue, and light red) and goes through an equivariant MLP. For the transformer, we get the equivariant query, key, and value feature through
   the designed linear matrix $W_q$, designed kernels $\kappa_k$ and $\kappa_v$, then apply multi-head attention to obtain the output point feature, which can subsequently be fed into the next ray feature updating and $SE(3)$ transformer block.}
   \label{fig:transformer}
\end{figure}

Since $y$ becomes point $y'$, and $f^{in}_1$ is the feature over $R^3$ attached to ``points" $y'$, it becomes $\oplus_{i}f_1^{l_{in_i}}$\footnote{Since here $f^{in}_1$ is the fields over $\mathbb{R}^3$, we use $l$ instead of $l'$ as the denotation}. Then transformer \ref{transformer1} would be converted to the transformer in \cite{fuchs2020se} over $\mathbb{R}^3$:

\begin{align}
&f^{out}_2(x) \nonumber \\
= &\sum_{y' \in \mathcal{N}(x)} \frac{exp(\langle f_q(x,f^{in}_2), f_k(x,y',f^{in}_1)\rangle)}{\sum_{y' \in \mathcal{N}(x)}exp(\langle f_q(x,f^{in}_2) f_k(x,y',f^{in}_1)\rangle} \nonumber \\
&f_v(x,y',f^{in}_1)), 
\label{transformer}
\end{align}
where the subscript denotes the points to which the feature is attached, i.e., $x$ and $y'$. 

The features $f_k$, $f_v$ are constructed by the equivariant kernels $\kappa_k =\oplus_{j,i}\kappa_k^{l_{k_j},l_{in_i}}$, $\kappa_v=\oplus_{j,i}\kappa_v^{l_{v_j},l_{in_i}}$:  
\begin{align*}
&f^{l_{k_j}}_k(x,y,f_1^{in})=\sum_{i}\kappa_k^{l_{k_j},l_{in_i}}(y'-x)f_1^{l_{in_i}}(y);\\
&f^{l_{k_j}}_v(x,f_2^{in})=\sum_{i}\kappa_v^{l_{v_j},l_{in_i}}(y'-x)f_2^{l_{in_i}}(y),
\end{align*}
where for any $i,j$, any $h_2 \in SO(3)$, and any $x \in \mathbb{R}^3$
$\kappa_k^{l_{k_j},l_{in_i}}$ and 
$\kappa_v^{l_{v_j},l_{in_i}}$ should satisfy that:

$$\kappa_k^{l_{k_j},l_{in_i}}(h_2x)=\rho_2^{l_{k_j}}(h_2)\kappa_k^{l_{k_j},l_{in_i}}(x)\rho_2^{l_{in_i}}(h_2^{-1});$$
$$\kappa_v^{l_{v_j},l_{in_i}}(h_2x)=\rho_2^{l_{v_j}}(h_2)\kappa_v^{l_{v_j},l_{in_i}}(x)\rho_1^{l_{in_i}}(h_2^{-1})$$
as stated in \cite{fuchs2020se}.

The feature $f_q$ is constructed in the same way as  Equation \ref{construction_of_fq}.

Figure \ref{fig:transformer} shows the structures of ray feature update and $SE(3)$ equivariant transformer.

In figure \ref{fig:comparison}, we compare the $SE(3)$ equivariant transformer and the conventional transformer to illustrate how the equivariance is guaranteed in the equivariant transformer. In figure \ref{fig:comparison_multihead}, we present the types of futures in $SE(3)$ equivariant attention head and conventional attention head, respectively. It indicates that geometric information is aggregated equivariantly in multi-head attention in the equivariant transformer.

\begin{figure}[t]
  \centering
   \includegraphics[width=1.0\linewidth]{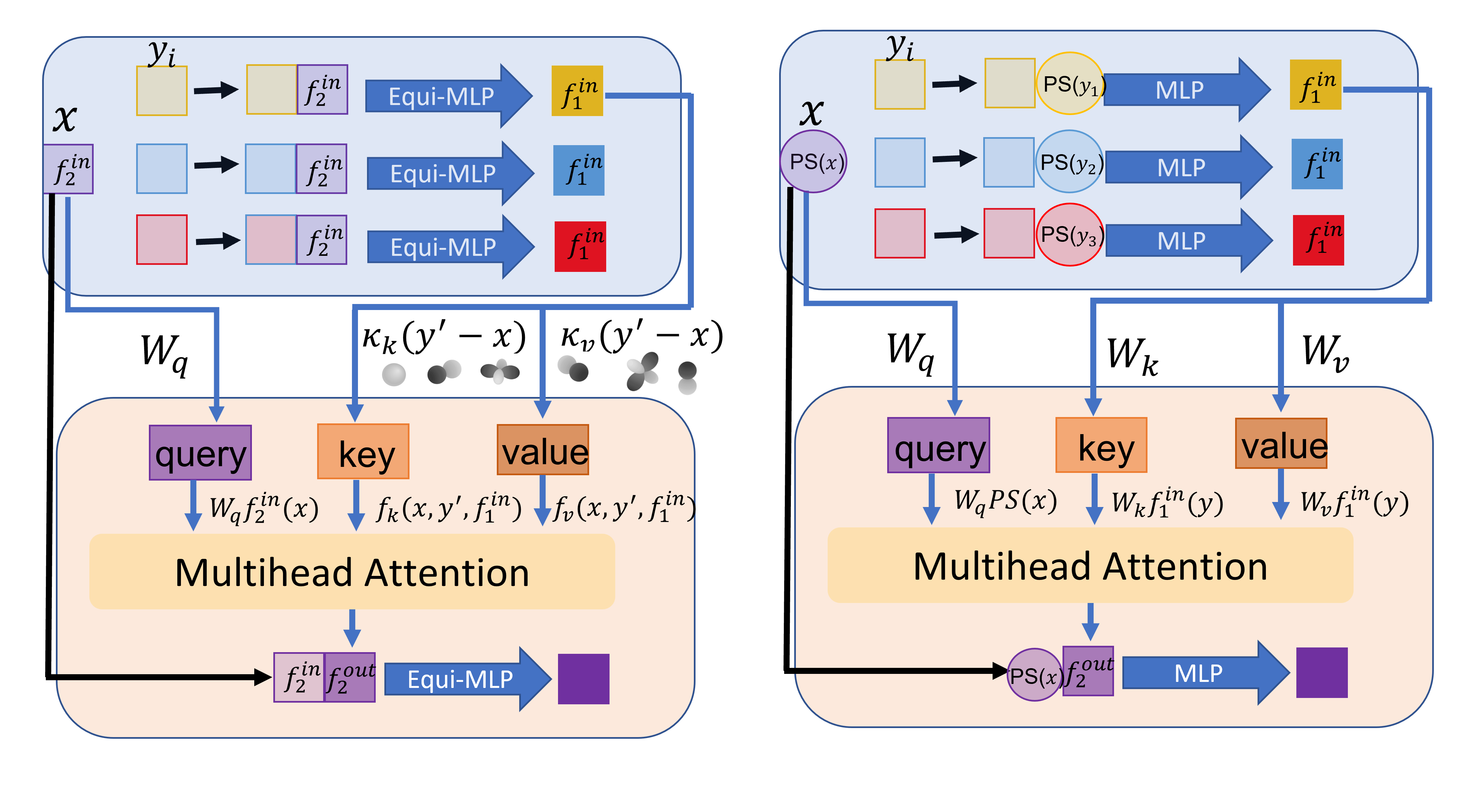}
   \caption{The comparison of the equivariant light field transformer and the conventional transformer. The left is the equivariant light field transformer, and the right is the conventional transformer. In our light field transformer, the position encoding is not directly concatenated to the features because this is not equivariant. We first obtain the equivariant feature attached to the point by equivariant convolution over the rays. We then construct features $f_k$, and $f_v$ with derived designed kernels $\kappa_k$ and $\kappa_v$ to keep them equivariant; we construct $f_q$ by the designed equivariant linear layer $W_q$. Since $f_k$, $f_q$, and $f_v$ are all equivariant, the inner product of $f_k$ and $f_q$ is invariant, which results in invariant attention weight. Therefore, the whole transformer is equivariant. 
   In contrast, the conventional transformer concatenates the ray position encoding with the feature attached to the ray, uses the point position encoding for the query feature for the point, and applies multi-head attention using $f_k$, $f_q$, and $f_v$, which are obtained by the Linear layer. We should note that $W_q$ in the light field transformer is designed to be equivariant, satisfying equation \ref{Q construction},  which differs from the conventional linear map $W_q$ in the conventional transformer. For the attention blocks after the first block, the query features of the point in our model and the conventional model are both the output of the last attention block. The difference is that our query feature keeps equivariant while the feature in the conventional transformer is not.}
   \label{fig:comparison}
\end{figure}

\begin{figure}[t]
  \centering
   \includegraphics[width=0.8\linewidth]{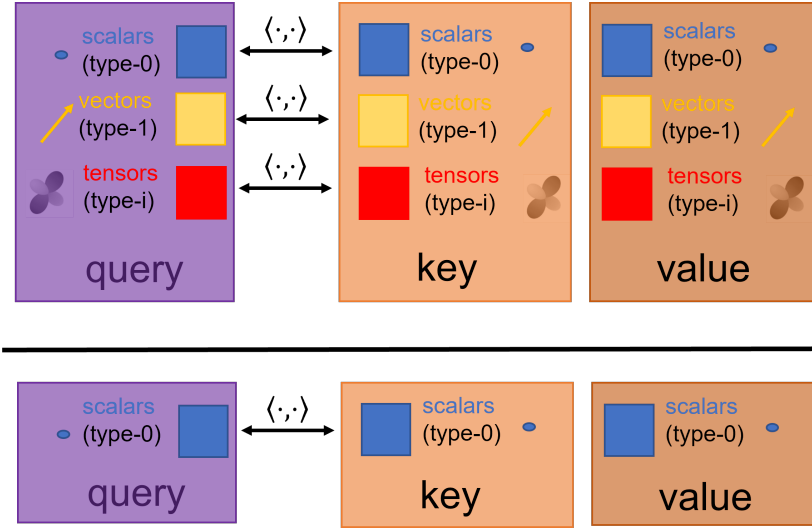}
   \caption{The comparison of multi-head attention modules in the equivariant light field transformer and that in the conventional transformer. The figure above is the multi-head attention module in an equivariant light transformer, and the figure below is the conventional transformer. In the light field transformer, the query, key, and value features are composed of different types of features; they can be scalars, vectors, or higher-order tensors. The inner product should apply to the same type of features, and the type of feature determines the way of applying the inner product. In contrast, the feature in a conventional transformer doesn't contain vectors and tensors, and the inner product is conventional.}
   \label{fig:comparison_multihead}
\end{figure}
\section{Equivariant Neural Rendering}
\label{equi_render}
Equivariant rendering relates to equivariant $3D$ reconstruction, where we focus on multiple views instead of the entire light field. The equivariance property is maintained when the ray sampling is invariant up to a coordinate change.
\subsection{Convolution from Rays to Rays}
\label{rendering_ray2ray}
For neural rendering tasks, we query one ray and apply the convolution over the neighboring rays to obtain the feature attached to the target query ray. Similar to the reconstruction, we utilize a kernel with local support. However, there is a distinction in that for neural rendering, the kernel $\kappa$ is constrained to be nonzero only when $d(x,\eta)=0$, while there are no constraints on $\angle(\bm{d}_x,[0,0,1]^T)$. As a result, the neighboring rays exclusively encompass the rays on the epipolar line for the target ray in each source view, as depicted in Figure \ref{fig:mvconv_rendering}.

\begin{figure}[t]
  \centering
\includegraphics[width=0.9\linewidth]{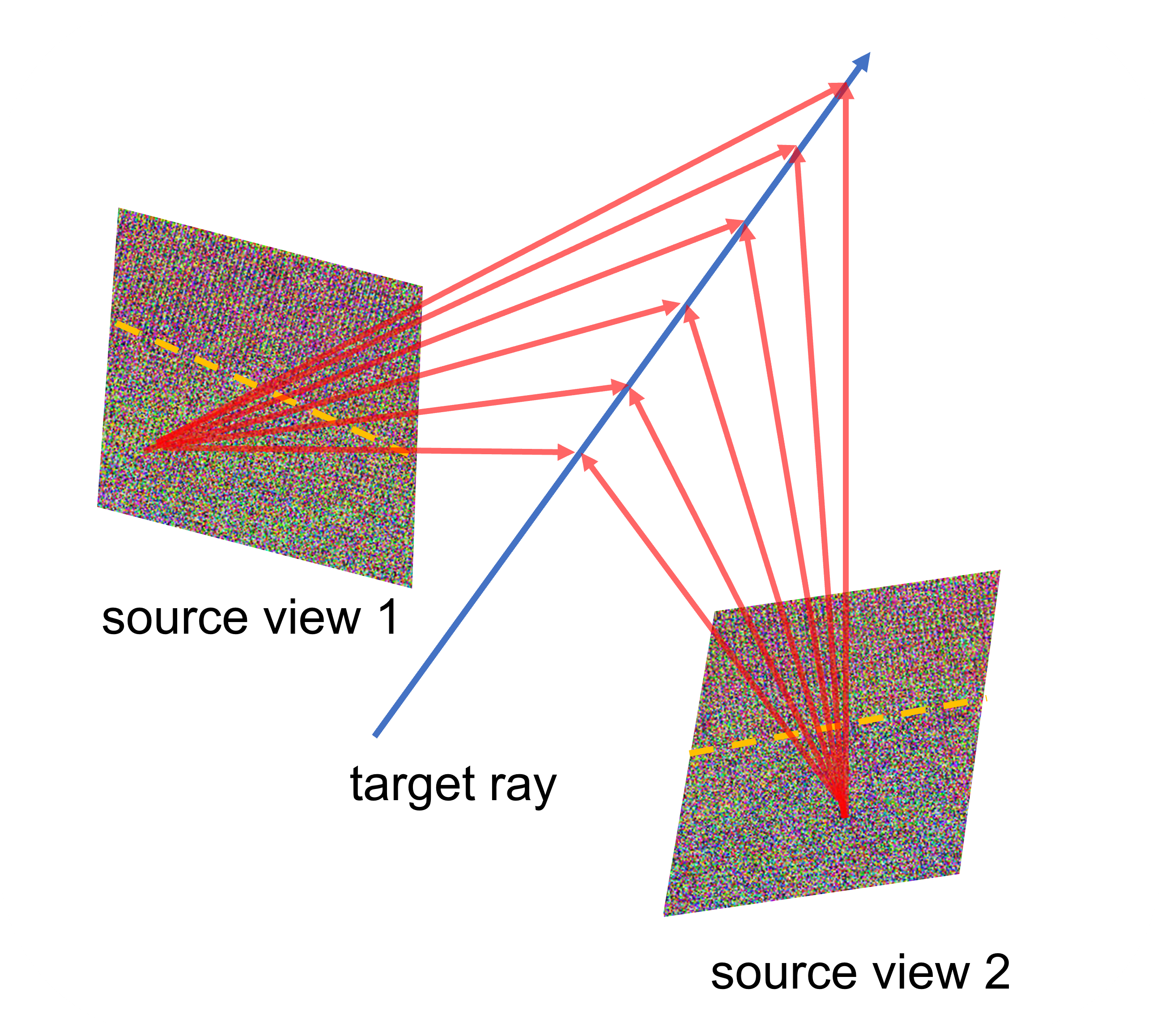}
   \caption{For simplification, we show two source views.
   For a target query ray $x$, the neighboring rays (denoted by red rays) are on the 
   epipolar lines (denoted as yellow dotted dashes) for the target ray in each source view. For any ray $y \in \mathcal{N}(x)$, $d(x,y)=0$.}
   \label{fig:mvconv_rendering}
\end{figure}

The scalar field over rays serves as the input to the convolution.  The output field type corresponds to the regular representation of translation. This is because this field type serves as the input for the cross-attention module later on. If this field type were not utilized, the transformer would reach the entire neighboring set, leading to inferior performance compared to applying the transformer individually for each point and then applying it over the points along the ray. A similar observation is made in \cite{varma2022attention}, which states that the two-stage transformer outperforms the one-stage transformer. Using the field type corresponding to the regular representation of the translation as the input, the transformer from rays to rays is equivalent to performing a transformer for each point, respectively, as explained in the following section.

In Eq. \ref{solution_for_re}, we already provide the solution of the kernel. We give a detailed explanation in this case and show that it is equivalent to performing convolution from rays to rays with  output field types corresponding to irreducible representations, followed by applying Inverse Fourier Transform. Given that the input field is a scalar field, we have $\omega^1_{in}=0$ and $\omega^2_{in}=0$. When considering an output field type of $(\omega^1_{out}, reg)$, where $reg$ represents the regular representation of translation, the convolution can be expressed as follows:
\begin{align}
(f_{out}^{(\omega^1_{out}, reg)})_t&=\int_{y \in \mathcal{N}(x)} \kappa_1(s(x)^{-1}y)(\kappa_2(s(x)^{-1}y))_t f_{in}(y)dy \nonumber \\ 
&=\int_{y \in \mathcal{N}(x)} \kappa_1(s(x)^{-1}y)f(d(\eta, s(x)^{-1}y), \angle([0,0,1]^T,\bm{d}_{s(x)^{-1}y}))\delta(t-g(s(x)^{-1}y))f_{in}(y)dy \nonumber \\ 
&=\int_{g(s(x)^{-1}y)=t}\kappa_1(s(x)^{-1}y)f(d(\eta, s(x)^{-1}y), \angle([0,0,1]^T,\bm{d}_{s(x)^{-1}y}))f_{in}(y)dy. \nonumber
\end{align}

From the above equation, we can intuitively find that when the output field corresponds to the regular representation of the translation, the convolution happens at every point along the ray, respectively.  We can treat $f_{out}^{(\omega^1_{out}, reg)}$ as a function over $\mathbb{R}$, and for any $\omega \in \mathbb{R}$ we apply the Fourier Transform to $f_{out}^{(\omega^1_{out}, reg)}$:

\begin{align}
\mathcal{F}(\omega)&= \int_t f_{out}^{(\omega^1_{out}, reg)}(t) e^{-i\omega t} dt \nonumber \\ 
&=\int_t \int_{g(s(x)^{-1}y)=t}\kappa_1(s(x)^{-1}y)f(d(\eta, s(x)^{-1}y), \angle([0,0,1]^T,\bm{d}_{s(x)^{-1}y}))f_{in}(y)dy e^{-i\omega t} dt \nonumber \\ 
& =\int_y\kappa_1(s(x)^{-1}y)f(d(\eta, s(x)^{-1}y), \angle([0,0,1]^T,\bm{d}_{s(x)^{-1}y}))e^{-i\omega g(s(x)^{-1}y)}f_{in}(y)dy \nonumber \\ 
& = \int_y \kappa_1(s(x)^{-1}y) \kappa'_2(s(x)^{-1}y)f_{in}(y)dy,\nonumber
\end{align}

where $\kappa'_2 =f(d(\eta, s(x)^{-1}y), \angle([0,0,1]^T,\bm{d}_{s(x)^{-1}y}))e^{-i\omega g(s(x)^{-1}y)}$, which is exactly the kernel corresponding to $\omega_{out}^2 =\omega$ and $\omega_{in}^2 =0$ as stated in Eq. \ref{solution_for_nonre}. Therefore, we know that the field corresponding to the irreducible representation of the translation can be treated as the Fourier coefficients of the field corresponding to the regular representation. We can first obtain the features of different irreducible representations attached the ray and subsequently apply the Inverse Fourier Transform to get the features for points along the ray,%
as shown in figure \ref{fig:inverse_fourier}. 
\begin{figure}[t]
  \centering
\includegraphics[width=0.9\linewidth]{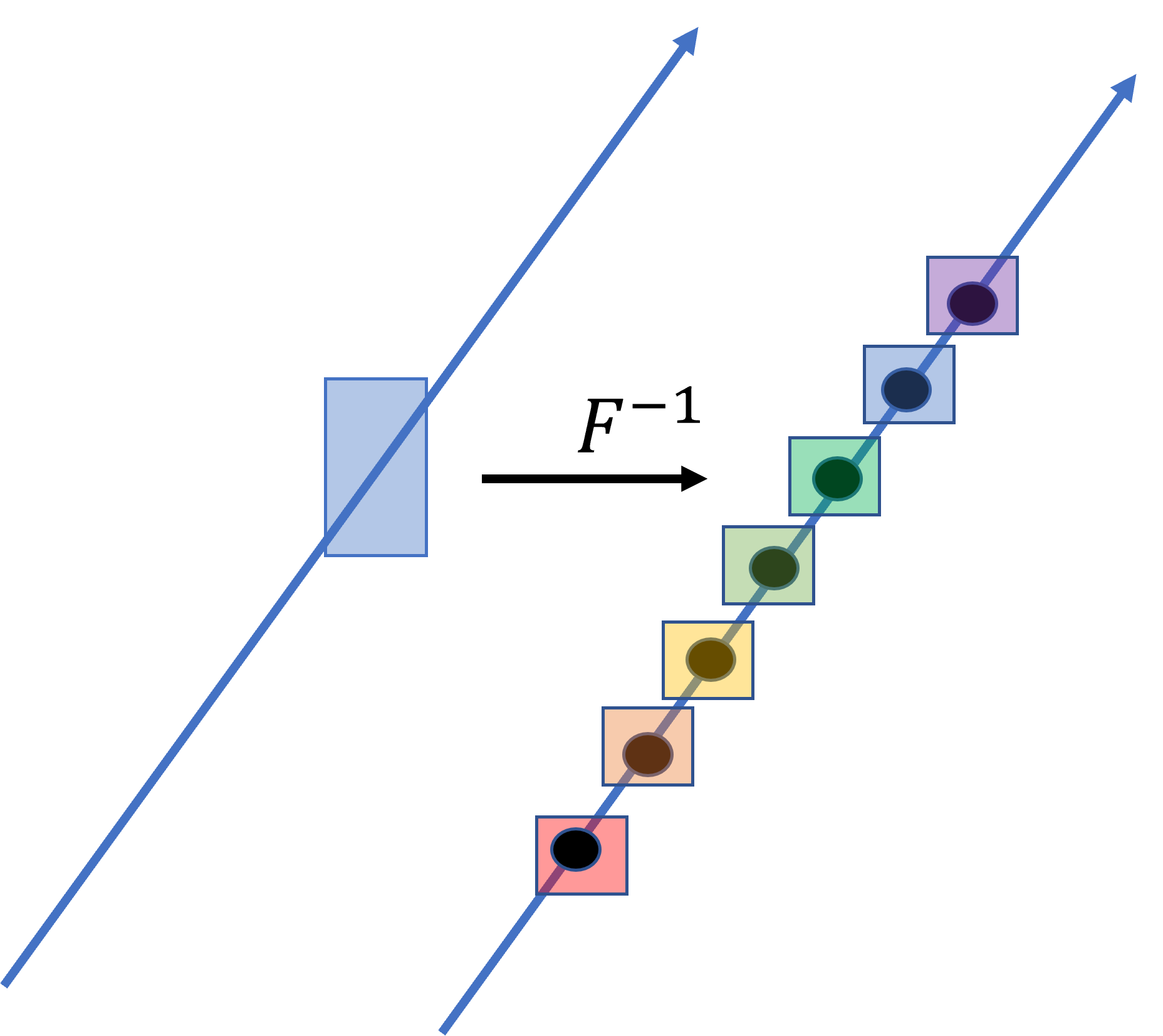}
   \caption{The features for points along the ray (the field type corresponds to the regular representation) can be obtained by the Inverse Fourier Transform of features attached to the ray, where the types of feature fields correspond to the irreducible representation of the translation.}
   \label{fig:inverse_fourier}
\end{figure}

\subsection{Cross-attention over Rays}
\label{attention_regular}
The feature that generates the query in the transformer is the feature attached to the target ray, whose feature type corresponds to the regular representation of the translation. The feature that generates the key and value in the transformer is attached to the neighboring rays in the source view, whose feature type corresponds to the scalar field. The output is the feature attached to the target ray, whose feature type corresponds to the regular representation. Therefore, the transformer becomes: 
\begin{align}
(f^{out}_2(x))_t = &\sum_{y \in \mathcal{N}(x)} \frac{exp(\langle (f_q(x,f^{in}_2))_t, (f_k(x,y,f^{in}_1))_t\rangle)}{\sum_{y \in \mathcal{N}(x)}exp(\langle (f_q(x,f^{in}_2))_t (f_k(x,y,f^{in}_1))_t\rangle}) (f_v(x,y,f^{in}_1))_t, 
\end{align}
where 
\begin{align*}
    &(f_k(x,y,f_1^{in}))_t  = (\kappa_k(s_2(x)^{-1}y))_tf_1^{in}(y)\\
    &(f_v(x,y, f_1^{in}))_t = (\kappa_v(s_2(x)^{-1}y))_tf_1^{in}(y)\\
    &(f_q (x,f_2^{in}))_t= C (f_2^{in}(x))_t.
\end{align*}
In the aforementioned equations,  $\kappa_k$ and $\kappa_v$ are the kernels derived in  Ex. \ref{filter_solution} Eq. \ref{whole_so_re}, $C$ is the equivariant weight matrix satisfying Eq. \ref{Q construction}. 

The expression above indicates that the feature types of both key and value correspond to the regular representation of translation, as well as the feature type of the query. Moreover, the transformer operates on each point along the ray independently. It should be noted that the features $(f_k)_t$, $(f_q)_t$, $(f_v)_t$ and $(f^{in}_2)_t$ may have multiple channels and may consist of different types of features corresponding to various representations of $SO(2)$. The inner product $\langle \cdot,\cdot \rangle$ can only happen in the field type of the same representation of $SO(2)$. This allows for the implementation of a multi-head attention module, where each head can attend to a specific type of feature and multiple channels.

\subsection{Self-attention over Points Along the Ray}
After the cross-attention over rays, we get the features of the points along the ray, i.e., the feature attached to the ray corresponding to the regular representation of translation. $SE(3)$ acts on the feature $f'$ attached to the point along the ray as mentioned in Eq.\ref{regular_action} :
\begin{align*}
(\mathcal{L}_gf')(\bm{x},\bm{d}) = \rho_1(R_Z(R_{g^{-1}},\bm{d}))^{-1}
f'(g^{-1}\bm{x},R_{g^{-1}}\bm{d}),
\end{align*}
where $\rho_1$ is the group representation of $SO(2)$.

We will apply the self-attention model to these points along the same ray. For two points $\bm{x}_1$ and $\bm{x}_2$ on the same ray $(\bm{d},\bm{x}_1\times \bm{d})$, one can observe that for the same type of feature, $\langle (\mathcal{L}_gf')(\bm{x_1},\bm{d}) , (\mathcal{L}_gf')(\bm{x_1},\bm{d}) \rangle = \langle f'(g^{-1}\bm{x_1},R_{g^{-1}}\bm{d}) , f'(g^{-1}\bm{x_1},R_{g^{-1}}\bm{d}) \rangle $, which makes attention weight invariant, the transformer could be formulated as: 

\begin{align}
f^{out}(x) = &\sum_{\text{y on the same ray as x}} \frac{exp(\langle f_q(f^{in},x), f_k(f^{in},x,y)\rangle)}{\sum_{\text{y on the same ray as x}} exp(\langle f_q(f^{in},x), f_k(f^{in},x,y)\rangle)} f_v(x,y,f^{in}), 
\end{align}
where 
\begin{align*}
    &f_k^{l}(x,y,f^{in})  = c_k(d(x,y))I (f^{in})^l(y)\\
    &f_v^{l}(x,y,f^{in}) = c_v(d(x,y))I(f^{in})^l(y)\\
    &f_q^{l} (x,f^{in})= c_qI(f^{in})^l(x),
\end{align*}

where $x$ and $y$ are the points along the same ray with direction $\bm{d}$, we can denote $x$ as $(\bm{x},\bm{d})$ and $y$ as $(\bm{y},\bm{d})$, $d(x,y)$ is the signed distance $\langle \bm{d}, \bm{y}-\bm{x}\rangle$, $c_k$,$c_v$ are arbitrary functions that take signed distance as the input and output complex values and $c_q$ is an arbitrary constant complex. It should be noted that the features $f_k$, $f_q$, $f_v$, and $f^{in}$ may have multiple channels and consist of different types of features corresponding to various representations of $SO(2)$, the inner product $\langle \cdot,\cdot \rangle$ can only happen in the same type of field. This allows for implementing a multi-head attention module, where each head can attend to a specific type of feature and multiple channels. Here, $f_k^l$ denotes the type$-l$ feature in feature $f_k$, $f_v^l$ represents the type$-l$ feature in feature $f_v$,
$f_q^l$ denotes the type$-l$ feature in feature $f_l$, and
$(f^{in})^l$ represents the type$-l$ feature in feature $f^{in}$.

 Note that this transformer architecture also follows the general format of the transformer in Eq. \ref{paper:transformer} 
 . We only simplify the kernel $\kappa_k$, $\kappa_v$ to be trivial equivariant kernels.

 To obtain a scalar feature density for each point, the feature output of each point can be fed through an equivariant MLP, which includes equivariant linear layers and gated/norm nonlinear layers. These layers are similar to the ones used in \cite{weiler2019general} and \cite{weiler20183d}.
\section{$3D$ Reconstruction Experiment}
\label{reconstruction_experiment}
\subsection{Generation of the Dataset}
\label{data_generation}
The I dataset is obtained by fixing the orientation of the object as well as the eight camera orientations. With the object orientation fixed, we can independently rotate each camera around its optical axis by a random angle in a uniform distribution of $(-\pi,\pi]$ to obtain the Z dataset. For the R dataset, we rotate every camera randomly by any rotation in $SO(3)$ while fixing the object. %
The equivariance stands with the content unchanged. Therefore in practice, we require that the object projection  after the rotation does not have new parts of the object. We satisfy this assumption by forcing the camera to fixate on a new random point inside a small neighborhood and subsequently rotate each  camera around its optical axis with the uniformly random angle in $(-\pi,\pi]$.
We generate the $Y$ dataset by rotating the object only with azimuthal rotations while keeping  the camera orientations the same. 
The $SO(3)$ dataset is generated by rotating the object with random rotation in $SO(3)$ with the orientations of cameras unchanged, which will potentially result in 
new image content. Equivariance is not theoretically guaranteed in this setup, but we still want to test the performance of our method.
\subsection{Implementation Details}
\label{implementation_reconstruction}
We use $SE(2)$ equivariant CNNs to approximate the equivariant convolution over the rays. We use the same ResNet backbone as implemented in \cite{han2021redet} that is equivariant to the finite group $C_8$, which we find achieves the best result compared with other $SE(2)$ equivariant CNNs. We use a similar pyramid structure as \cite{xu2019disn} that concatenates the output feature of every block. Since every hidden feature is the regular representation, in the final layer we use $1\times1$ $SE(2)$-equivariant convolutional layers to transfer the hidden representation to scalar type.

For the fusion from the ray space to the point space model, we use one layer of convolution and three combined blocks of updating ray features and $SE(3)$ transformers. For the  equivariant  $SE(3)$ multi-head-attention, we only use the scalar feature and the vector (type-1) feature in the hidden layer.  The kernel matrix includes the spherical harmonics of degrees 0 and 1. We also concatenate every output point feature of every block as in the $2D$ backbone. Since the output feature of every block includes the vector feature, we transfer it to the scalar feature through one vector neuron layer and the inner vector product.  We use the same weighted SDF loss as in \cite{xu2019disn} during training, which applies both uniform and near-surface sampling.  We report the number of parameters and floating-point operations (FLOPs)  of our $2D$ backbone and light fusion networks in Fig. \ref{fig:se2_flops} and Fig. \ref{fig:fusion_flops} respectively.

\begin{figure}[t]
  \centering
   \includegraphics[width=0.9\linewidth]{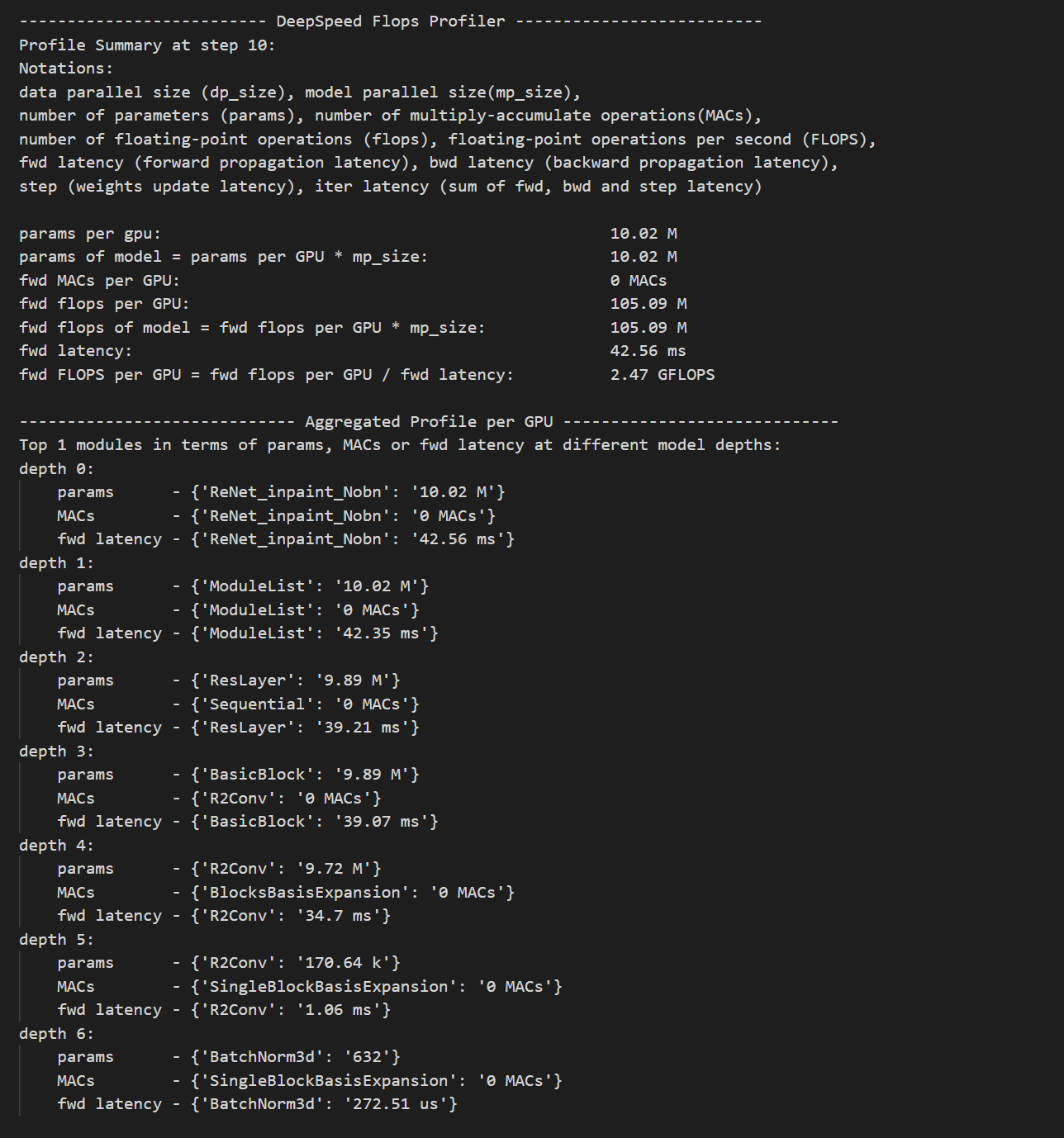}
   \caption{The number of parameters and FLOPs of $SE(2)$ equivariant CNNs. We set batch size as one to calculate number of FLOPs.}
   \label{fig:se2_flops}
\end{figure}

\begin{figure}[t]
  \centering
   \includegraphics[width=0.9\linewidth]{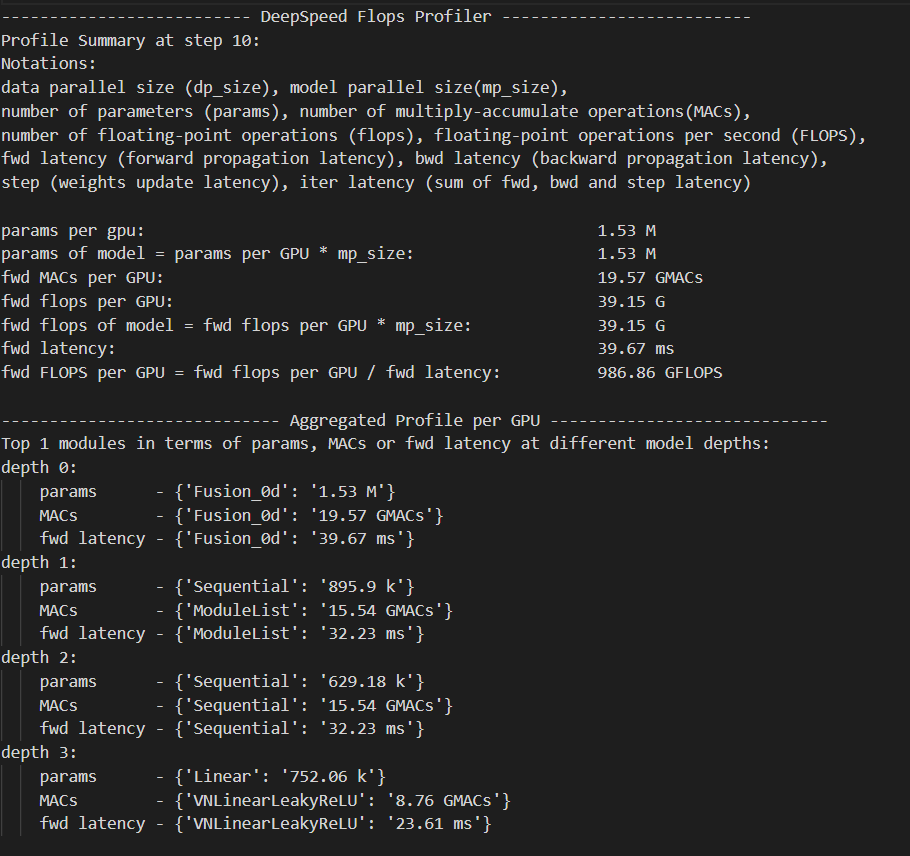}
   \caption{The number of parameters and FLOPs of the ray fusion model, which is composed of convolution from rays to points and transformer from rays to points.We set batch size as one to calculate the number of FLOPs.}
   \label{fig:fusion_flops}
\end{figure}

\subsection{Discussion of Results}
\label{discussion_result}
There is still a performance gap between $I/I$ and $I/Z$. This is because although $SE(2)$ equivariant networks are theoretically strictly equivariant, the error in practice is introduced by the finite sampling of the image and the pooling layers. Additionally, 
we use the ResNet that is equivariant to $C_8$ approximation of $SO(2)$, which  causes this gap but increases the whole pipeline performance in the other tasks. There is not a significant difference between $I/Z$ and $I/R$, which shows that approximating the spherical field convolution by $SE(2)$ equivariant convolution is reasonable in practice. 
\subsection{Qualitative Results}
\label{qual_results}
Figure \ref{fig:qualitive_result} shows a qualitative result for the chair category. There are more qualitative results shown in Fig. \ref{fig:qual_2}, Fig. \ref{fig:qual_3}, and Fig. \ref{fig:qual_4}.
\begin{figure}[t]
  \centering
   \includegraphics[width=0.86\linewidth]{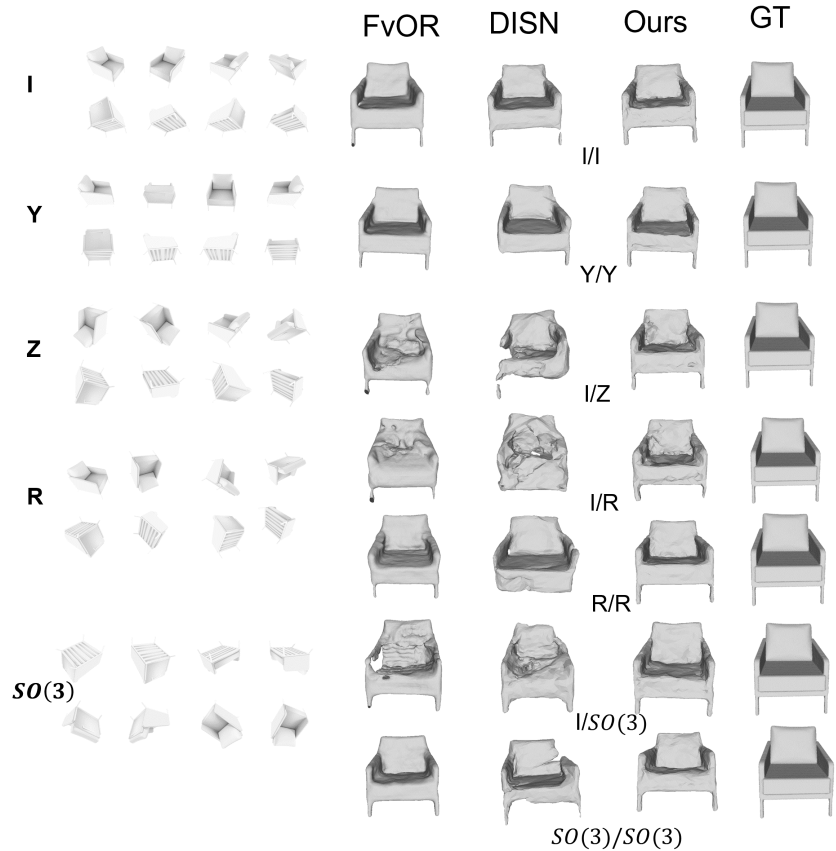}
   \caption{Qualitative results for equivariant reconstruction. Left: input views; Right: reconstruction meshes of different models and ground truth meshes. The captions below the meshes show how the model is trained and tested, explained in the text.}
   \label{fig:qualitive_result}
\end{figure}

\subsection{Ablation Study}
\label{ablation_study}
First, we replace the $SE(2)$ CNNs backbone with the conventional CNNs to test the effectiveness of $SE(2)$ CNNs. 
Secondly, we remove the equivariant convolution/transformer part and use trivial aggregation (max-pooling) combined with MLP. 
Finally, we run an equivariant convolution and transformer without using the type-1 (vector) feature while keeping the number of parameters similar to our model. 
\begin{table}
\scalebox{1.0}{
\begin{tabular}{|c|c|c|c|c|}
\hline
    Method & w/o SE(2) & w/o conv\& trans & w/o type-1&
    Full model\\
    \hline
    I/I&\textbf{0.767}/\textbf{0.079} &0.695/0.105 & 0.722/0.093& 0.731/0.090\\
    I/Z&0.430/0.234 &0.533/0.175 &  0.553/0.158&\textbf{0.631}/\textbf{0.130}\\
    I/R& 0.417/0.249&0.442/0.241&0.466/0.203 &\textbf{0.592}/\textbf{0.137}\\
    R/R& 0.672/0.112&0.658/0.122 &0.682/0.109 &\textbf{0.689}/\textbf{0.105}\\
    Y/Y&\textbf{0.731}/\textbf{0.090} &0.644/0.124 & 0.677/0.111&0.698/0.102\\
    Y/SO(3)&0.467/0.0.217 &0.534/0.170 & 0.569/0.163&\textbf{0.589}/\textbf{0.142}\\
    SO(3)/SO(3)&0.655/0.120 &0.616/0.142 & 0.636/0.130&\textbf{0.674}/\textbf{0.113}\\    
\bottomrule
\end{tabular}
}
\caption{Ablation: w/o $SE(2)$ means replacing $SE(2)$ equivariant network with conventional; w/o ray conv\& trans denotes the model where we replace the light field convolution and the light field equivariant transformer with  max-pooling; w/o type-1 means using only scalar features in convolution and transformers. }
  \label{tab:ablation_study}
\end{table}

Table \ref{tab:ablation_study} summarizes the result on the chair category, which illustrates that in the $I/I$ and $Y/Y$ trials, $SE(2)$ CNN is less expressive than traditional CNN, but it contributes to the equivariance of our model looking at the results of $I/Z$, $I/R$, and $Y/SO(3)$.
Equivariant ray convolution and transformer improve both the reconstruction performance and the equivariance outcome. We also compare the ray convolution and transformer with the models operating only on scalar features without vector features, and again we see a drop in performance in every setting, proving the value of taking ray directions into account.

We also compare to a baseline where the ray difference information is encoded in the feature explicitly. Most models that encode ray directions aim at rendering, like IBRnet. Here we modified IBRnet (Fig.2 of IBRnet paper) to query 3D points only for their SDF value instead of querying all densities along the ray that would be necessary for rendering. We replaced the ray direction differences with the ray directions themselves because we use a query point and not a query ray. We report in table  \ref{tab:ablation_study_2} IoU result for Y/Y and Y/SO(3) (where Y is augmentation only along the vertical axis) for two models -- IBRNet with conventional CNNs as 2D backbone and IBRNet with SE(2)-equivariant CNNs as 2D backbone. For the $SO(3)$ setting, we rotate the whole $8$ cameras with the same rotation, which is equivalent to rotating the object with the inverse rotation, and we use the object canonical frame to encode the ray information. 

\begin{table}
\scalebox{1.0}{
\begin{tabular}{|c|c|c|c|}
\hline
    Method & Y/Y & Y/SO(3)& SO(3)/SO(3) \\
    \hline
    IBRNet \cite{wang2021ibrnet} w/o SE(2)& 0.689& 0.432&0.611\\
    IBRNet \cite{wang2021ibrnet} w/SE(2)&0.652&0.501&0.619\\
    Ours &\textbf{0.698}&\textbf{0.598}&\textbf{0.674}\\    
\bottomrule
\end{tabular}
}
\caption{Comparison of our model and a baseline which encodes the ray information explicitly. IBRNet w/o SE(2) is the modified IBRNet with conventional CNN backbone, IBRNet w/SE(2) is the model where we replace the conventional CNN backbone with the SE(2) equivariant CNN.}
  \label{tab:ablation_study_2}
\end{table}

The baseline is not equivariant: It explicitly uses the ray directions as inputs to MLPs. Ray directions or their differences change when the coordinate system is transformed, breaking, thus, equivariance. Table \ref{tab:ablation_study_2} demonstrates that our model is more resilient to object rotations. We can enhance equivariance by using SE(2) equivariant modeling, and our model outperforms the baseline in the Y/Y setting. We believe that the transformer in our model is responsible for the performance improvement.

\begin{figure}[t]
  \centering
   \includegraphics[width=0.9\linewidth]{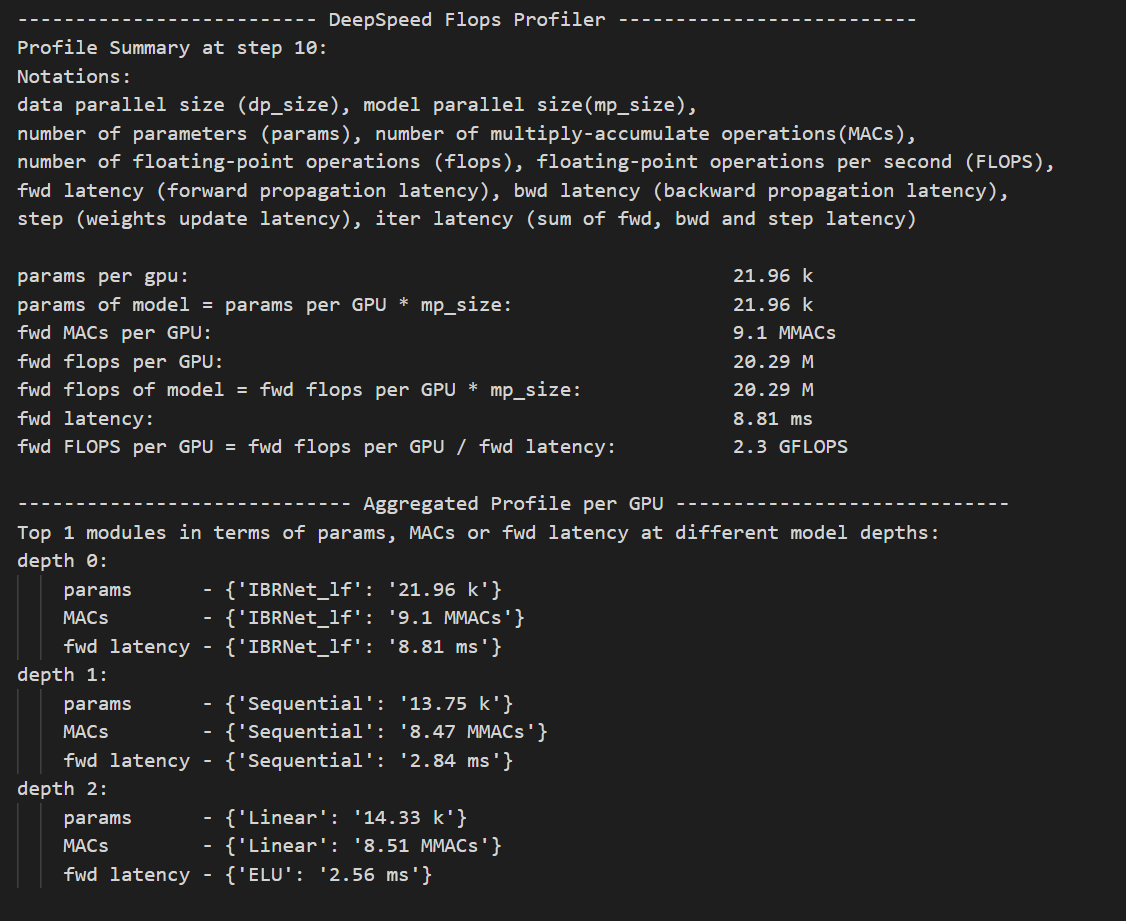}
   \caption{The number of parameters and FLOPs of the model, which takes the scalar feature attached to rays as input and predicts the color and density for points along the target ray. The calculation of FLOPs is performed for single-pixel rendering with $10$ source views.}
   \label{fig:ibr_lf_flops}
\end{figure}

\begin{figure}[t]
  \centering
   \includegraphics[width=0.88\linewidth]{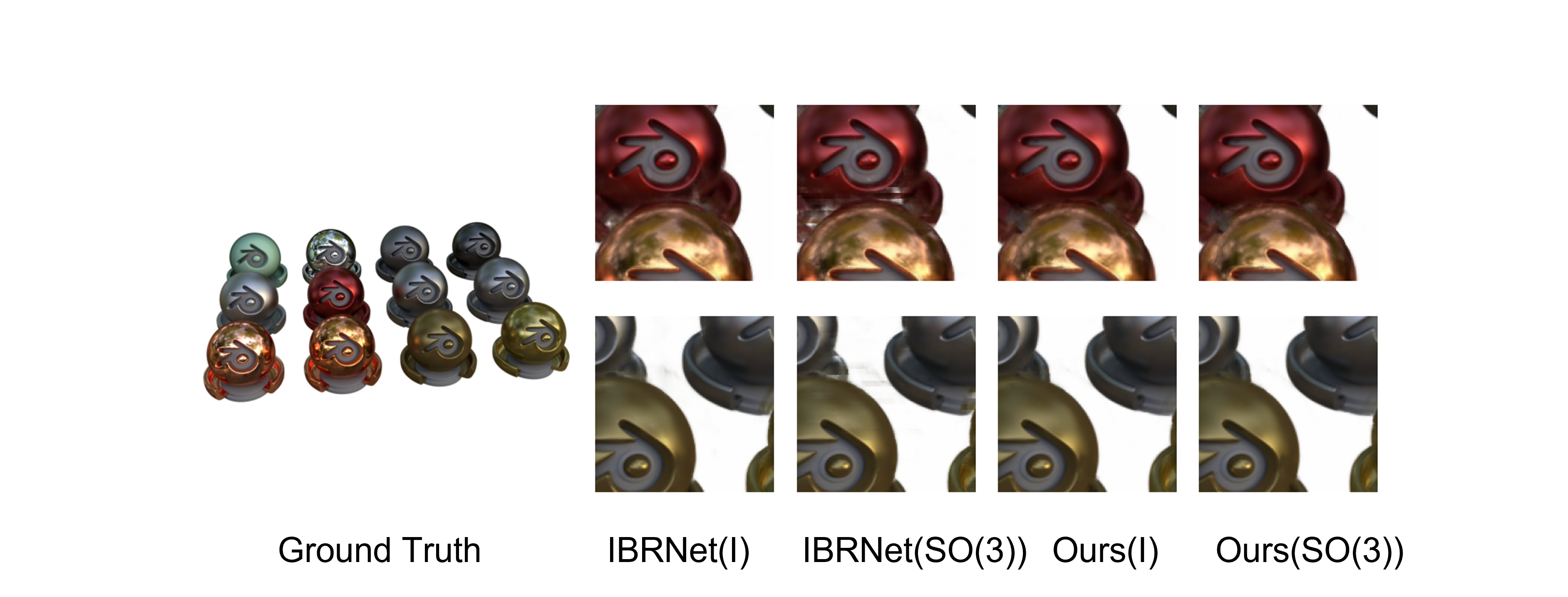}
   \caption{In terms of qualitative results for rendering, we compare the performance of IBRNet and our model in both the given canonical frame (denoted as "IBRNet(I)" and "Ours(I)" respectively) and a rotated frame (denoted as "IBRNet(SO(3))" and "Ours(SO(3))" respectively). Our model performs comparably to IBRNet in the canonical setting. However, IBRNet experiences a performance drop in the rotated frame, while our model remains robust to the rotation.}
   \label{fig:qual_rendering_1}
\end{figure}

\begin{figure}[t]
  \centering
   \includegraphics[width=0.88\linewidth]{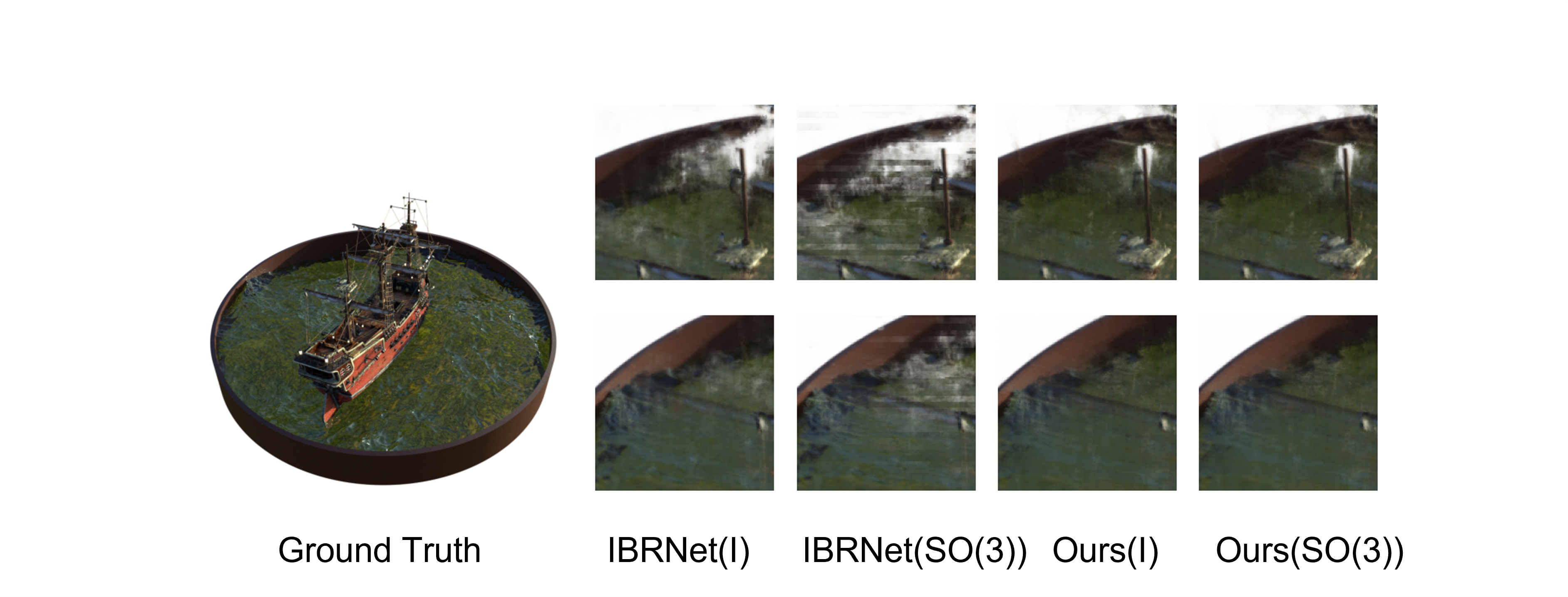}
   \caption{In terms of qualitative results for rendering, we compare the performance of IBRNet and our model in both the given canonical frame (denoted as "IBRNet(I)" and "Ours(I)" respectively) and a rotated frame (denoted as "IBRNet(SO(3))" and "Ours(SO(3))" respectively). Our model performs comparably to IBRNet in the canonical setting. However, IBRNet experiences a performance drop in the rotated frame, while our model remains robust to the rotation.}
   \label{fig:qual_rendering_2}
\end{figure}

\begin{figure}[t]
  \centering
   \includegraphics[width=0.88\linewidth]{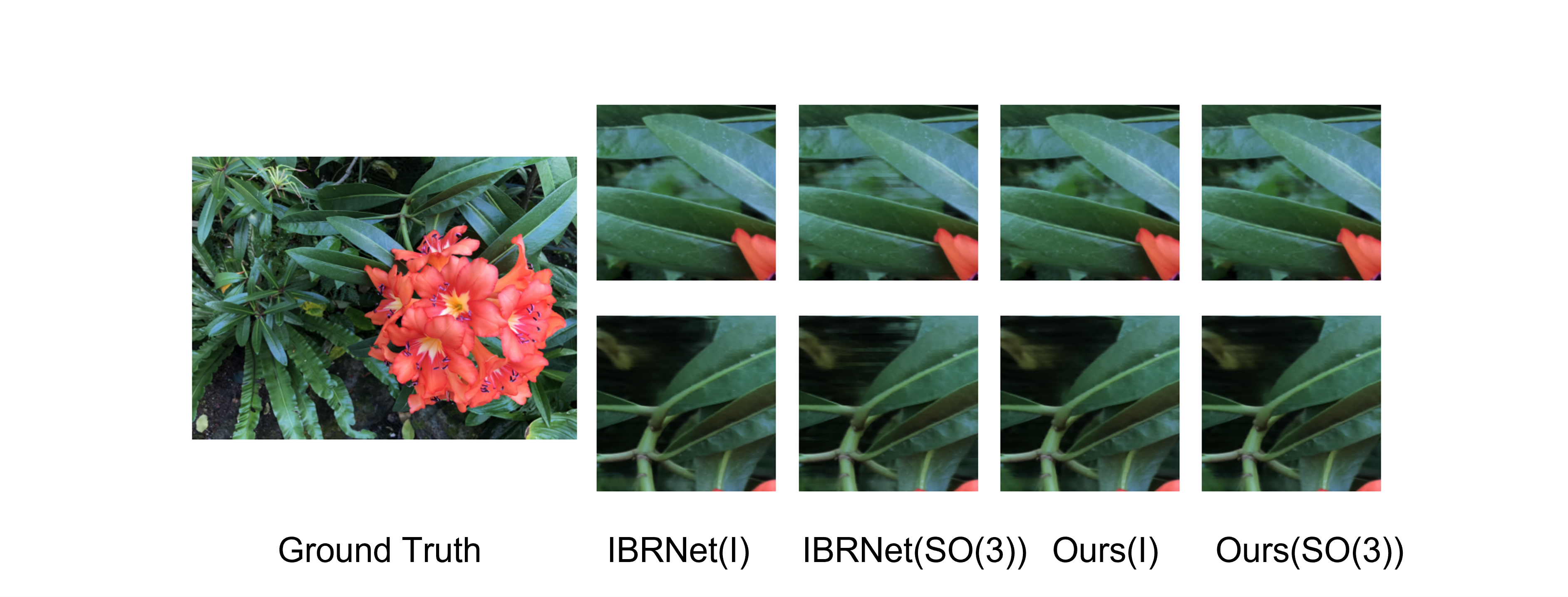}
   \caption{In terms of qualitative results for rendering, we compare the performance of IBRNet and our model in both the given canonical frame (denoted as "IBRNet(I)" and "Ours(I)" respectively) and a rotated frame (denoted as "IBRNet(SO(3))" and "Ours(SO(3))" respectively). Our model performs comparably to IBRNet in the canonical setting. However, IBRNet experiences a performance drop in the rotated frame, while our model remains robust to the rotation.}
   \label{fig:qual_rendering_3}
\end{figure}

\section{Neural Rendering Experiment}
\label{rendering_experiment}
\subsection{Experiment Settings Discussion}
\label{rendering_setting_discussion}
Two experiment settings illustrate our model's equivariance: $I/I$ and $I/SO(3)$. $I/I$ is the canonical setting, where we train and test the model in the same canonical frame defined in the dataset. $I/SO(3)$ is that we test the model trained in the canonical frame under arbitrary rotated coordinate frames, which means that all the camera poses in one scene are transformed by the same rotation without changing their relative camera poses and relative poses between the camera and the scene, which doesn't change the content of the multiple views.  The reason we don't apply translation to the cameras is that there exists a depth range for points sampling in the model and the comparing baseline \cite{wang2021ibrnet}, which effectively mitigates the impact of translation.

We should note that the $SO(3)$ setting in this experiment setting differs from $R$ and $SO(3)$ settings in reconstruction. $R$ changes the relative pose of the cameras, and each image is transformed due to the rotation of each camera without altering the content, i.e., the sampling of the light field is nearly unchanged. The $R$ setting aims to demonstrate that replacing the conventional method with ray-based convolution can get rid of the canonical frame for each view. 

$SO(3)$ in reconstruction is to rotate the object pose randomly without changing the pose of the camera, which is equivalent to transforming the cameras by the inverse rotation but fixing the object, resulting in changes in the relative poses between the camera and the object, the content of the image and, therefore, the sampling of the light field. This setting shows that even for non-theoretically equivariant cases, our model in reconstruction still demonstrates robustness.

In the rendering experiment using the $SO(3)$ setting, each image itself is not transformed, unlike the $R$ setting in the reconstruction. The content of the images remains unchanged, including the light field sampling, unlike the $SO(3)$ setting in the reconstruction.  Since each image is not transformed, even if the conventional $2D$ convolution is applied to the image, the scalar feature attached to the ray is not altered, and the light feature field sampling remains the same up to the transform of the coordinate frame. This setting was used to demonstrate that our model is $SE(3)$-equivariant when the input is the scalar light feature field.

\subsection{Implementation Details}
\label{implementation_rendering}
As described in the paper, we use a similar architecture as \cite{wang2021ibrnet}, where we replace the aggregation of view features by equivariant convolution and equivariant transformer over rays. In equivariant convolution, the input is scalar feature field over rays, which means that $\omega^1_{in}=0$ and $\omega^2_{in}=0$; for the output field, we use  regular representation %
of translation as described in Sec. \ref{neural_rendering} 
, and we use $\omega^1_{out}= 0,2^1,\cdots, 2^7$ for group representation of $SO(2)$, each field type has $4$ channels. In equivariant transformer over rays, we update the key and value before going to the attention module in the experiment; the specific operation is that we concatenate key $f_k$ and query $f_q$, we concatenate $f_v$ and query $f_q$,  and then we feed the concatenated key and value into two equivariant MLPs (equivariant linear layers and gated/norm nonlinear layers, similar to the ones used in \cite{weiler2019general}) to get the newly updated key and updated value, which will be fed into attention module. In line with \cite{wang2021ibrnet}, our approach does not involve generating features for the color of every point. In our implementation, we directly multiply the attention weights obtained from the softmax operator in the transformer with the corresponding colors in each view to perform color regression.

We replace the ray transformer with the equivariant transformer over the points along the ray; the input features comprise the feature types corresponding to the group representations $\omega_{in}= 0,2^1,\cdots, 2^7$ for $SO(2)$. Each feature type has $4$ channels; the output comprises the same  feature type, and each type has $2$ channels. We will first convert the feature into a scalar feature by an equivariant MLP (equivariant linear layers and gated/norm nonlinear layers, similar to the ones used in \cite{weiler2019general}.) and then feed it into a conventional MLP to get the density. We report in Fig. \ref{fig:ibr_lf_flops} the number of parameters and  floating-point operations (FLOPs)  of the model composed of the convolution and transformers.

\subsection{Qualitative Results}
\label{rendering_qualitive_result}
Fig. \ref{fig:qual_rendering_1}, Fig. \ref{fig:qual_rendering_2} and  Fig. \ref{fig:qual_rendering_3} show the qualitative results on Real-Forward-Facing \cite{mildenhall2019local} and Realistic Synthetic $360^{\circ}$ \cite{sitzmann2019deepvoxels} data.  Our model performs comparably to IBRNet in the canonical setting. However, IBRNet experiences a performance drop in the rotated frame, while our model remains robust to the rotation.

\begin{figure}[t]
  \centering
   \includegraphics[width=0.9\linewidth]{./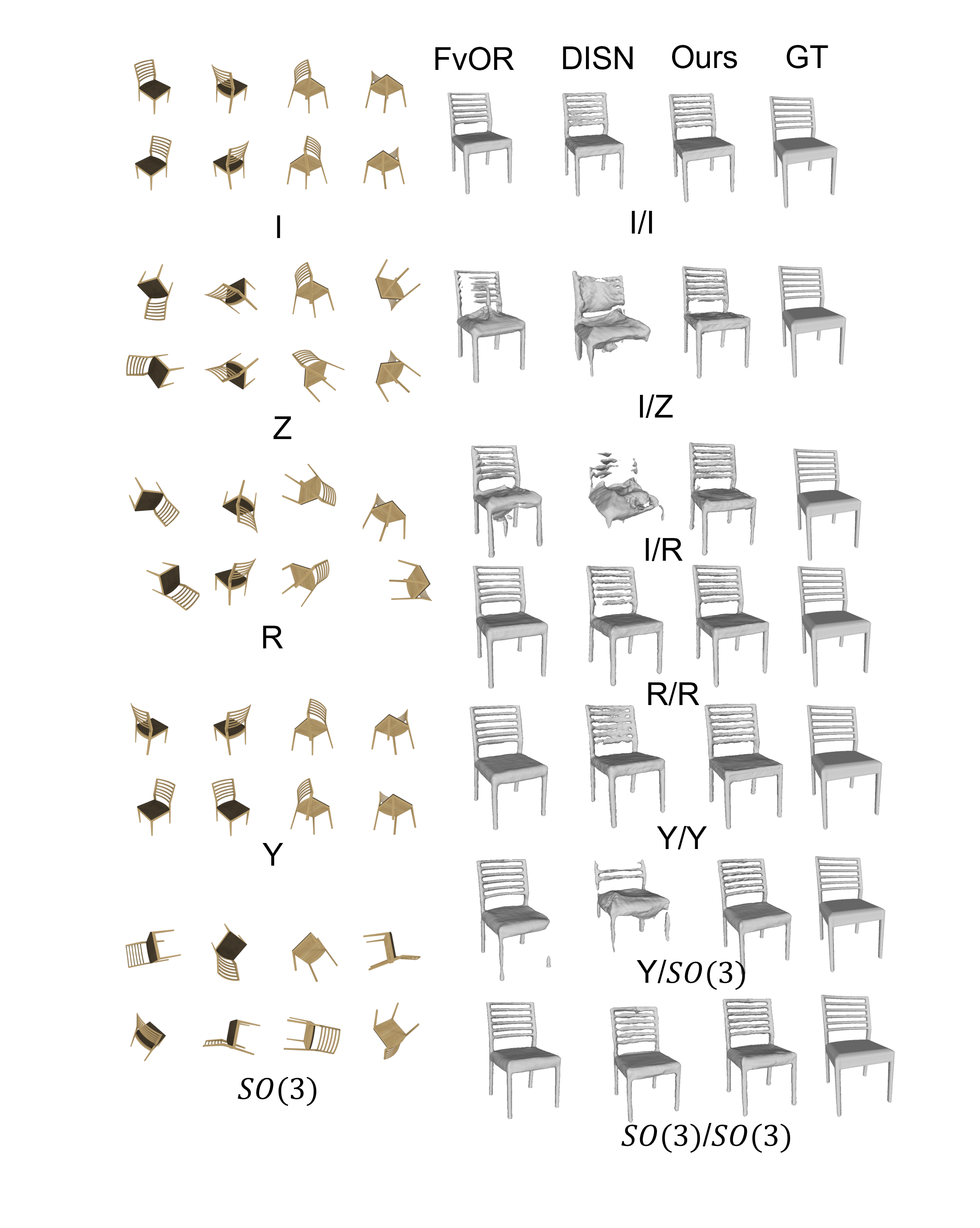}
   \caption{Qualitative Result for the chair. Left: input views; Right: reconstruction meshes of different models. The captions below the meshes show how the model is trained and tested.}
   \label{fig:qual_2}
\end{figure}

\begin{figure}[t]
  \centering
   \includegraphics[width=0.9\linewidth]{./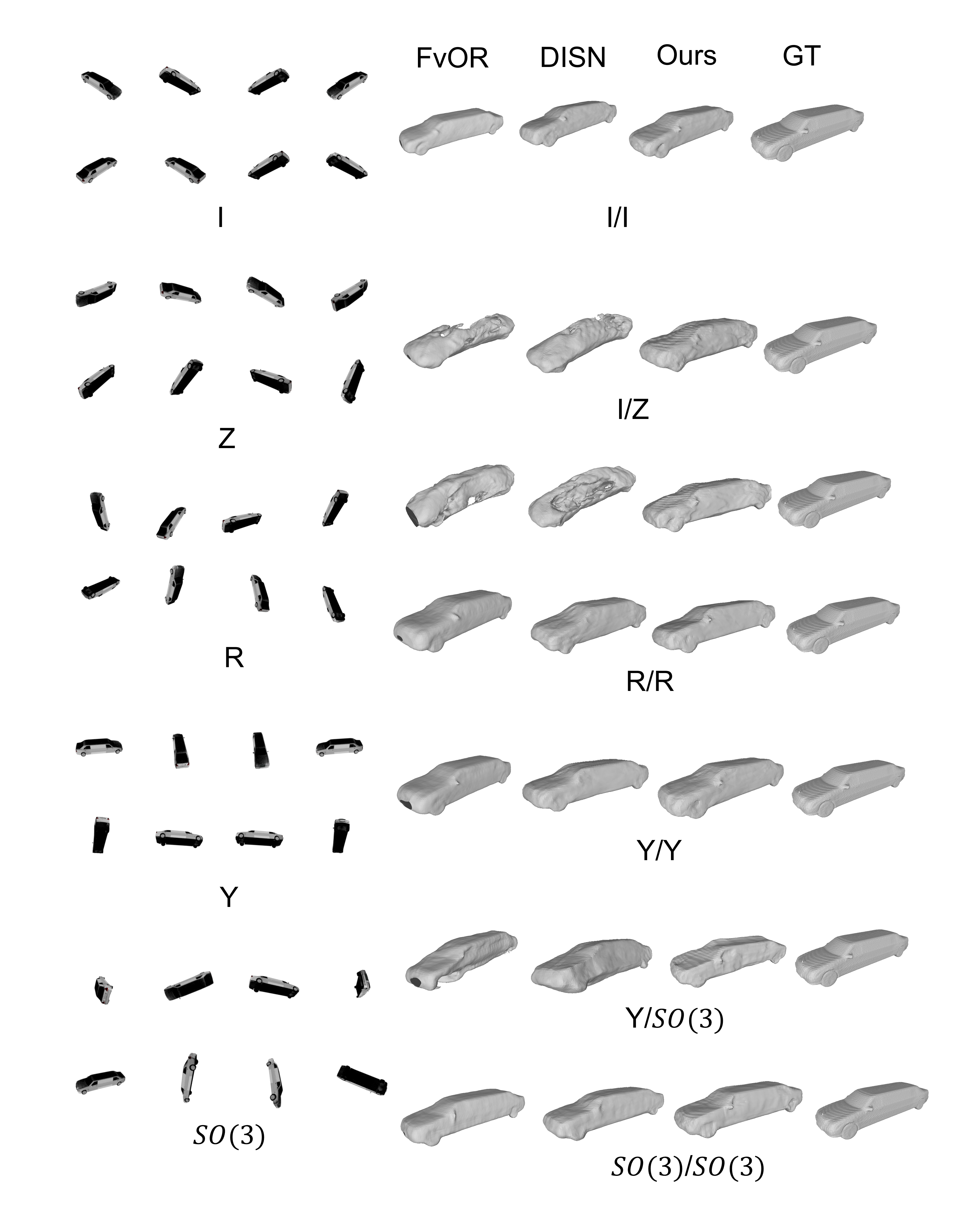}
   \caption{Qualitative Result for the car. Left: input views; Right: reconstruction meshes of different models. The captions below the meshes show how the model is trained and tested.}
   \label{fig:qual_3}
\end{figure}

\begin{figure}[t]
  \centering
   \includegraphics[width=1.0\linewidth]{./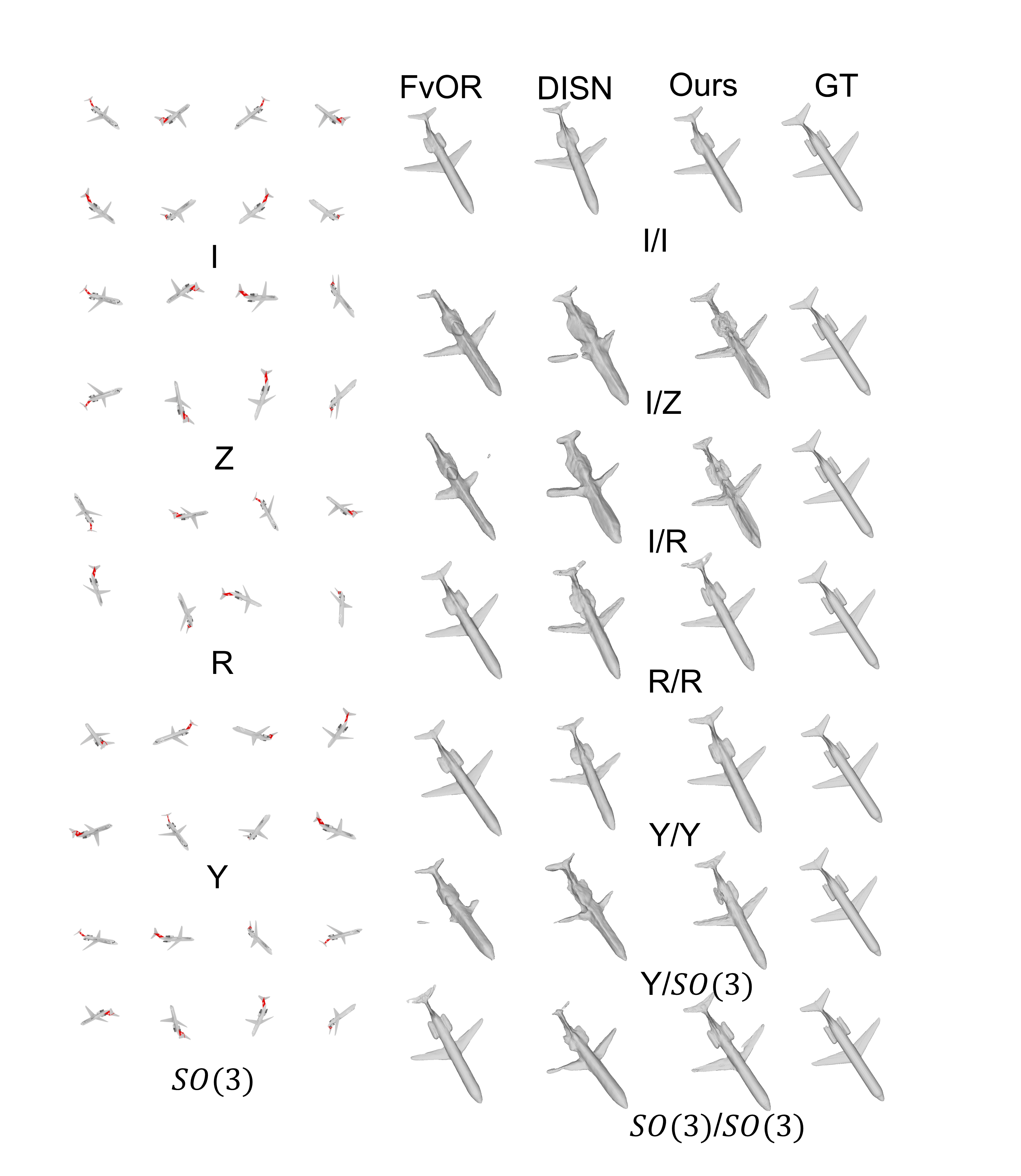}
   \caption{Qualitative Result for the car. Left: input views; Right: reconstruction meshes of different models. The captions below the meshes show how the model is trained and tested.}
   \label{fig:qual_4}
\end{figure}

\end{document}